\DeclareMathAlphabet{\mathsf}{OT1}{cmss}{m}{n}
\SetMathAlphabet{\mathsf}{bold}{OT1}{cmss}{bx}{n}
\newcommand{\loss}{L}
\newcommand{\tunstable}{T_4}
\begin{document}

\title{\huge A Minimalist Example of Edge-of-Stability and Progressive Sharpening\thanks{$^1$ Georgia Tech, $^2$ University of Washington. Emails: $\{$lliu606, zzhang3105, tourzhao$\}$@gatech.edu, ssdu@cs.washington.edu}}

\author{Liming Liu$^1$ ~~ Zixuan Zhang$^1$ ~~ Simon Du$^2$ ~~ Tuo Zhao$^1$}

\maketitle
\begin{abstract}
Recent advances in deep learning optimization have unveiled two intriguing phenomena under large learning rates: Edge of Stability (EoS) and Progressive Sharpening (PS), challenging classical Gradient Descent (GD) analyses. Current research approaches, using either generalist frameworks or minimalist examples, face significant limitations in explaining these phenomena. This paper advances the minimalist approach by introducing a two-layer network with a two-dimensional input, where one dimension is relevant to the response and the other is irrelevant. Through this model, we rigorously prove the existence of progressive sharpening and self-stabilization under large learning rates, and establish non-asymptotic analysis of the training dynamics and sharpness along the entire GD trajectory. Besides, we connect our minimalist example to existing works by reconciling the existence of a well-behaved ``stable set" between minimalist and generalist analyses, and extending the analysis of Gradient Flow Solution sharpness to our two-dimensional input scenario. These findings provide new insights into the EoS phenomenon from both parameter and input data distribution perspectives, potentially informing more effective optimization strategies in deep learning practice.

\end{abstract}

\section{Introduction}
\label{introduction}

Deep learning has revolutionized many fields, from computer vision to natural language processing. However, this progress has also posed significant challenges to classical optimization theory. Most classical gradient descent (GD) analysis assumes small learning rates for easing convergence analysis. Consider minimizing a smooth loss function $L(\theta)$ with respect to the parameter $\theta$, classical analyses show that when choosing a learning rate $\eta$ such that $S(\theta)\leq 2/\eta$, where $S(\theta)$ denotes the largest eigenvalue of the Hessian matrix $\nabla^2L(\theta)$, the optimization is ``stable'' and the loss function decreases monotonically to guarantee convergence \cite{nesterov2013introductory}. 
 
Recent works such as \citet{cohen2021gradient}, however, have observed that such a stability assumption does not hold when training modern neural networks with GD. In particular, they summarize two specific phenomena: The first one is called \textit{``\underline{P}rogressive \underline{S}harpening"}(\textit{PS}), that is, $S(\theta)$, which is also referred to as the ``sharpness'' in \citet{cohen2021gradient}, keeps increasing until it reaches the instability threshold $2/\eta$ during training; The second one is called \textit{``\underline{E}dge \underline{o}f \underline{S}tability"}(\textit{EoS}), that is, the sharpness hovers at the instability threshold $2/\eta$ after progressive sharpening, with the loss decreases continuously and nonmonotonically. These two phenomena undoubtedly challenge the classical analyses and have already attracted the attention of many researchers.

Current research on understanding these phenomena has developed along two lines. The first line purses a so-called ``generalist analysis'' frameworks that, while being generic, rely on hard-to verify assumptions. For example, \citet{li2022analyzing} analyze GD for training two-layer wide neural networks. By characterizing the norms of the second-layer weights, they prove a four-stage behavior, covering the PS and EoS phenomena. However, their analysis requires assumptions that are hard to verify for two-layer neural networks, e.g., the sharpness being upper bounded, and only works for extremely wide settings, which diverges from practical scenarios. \citet{damian2022self} prove a similar four-stage behavior based on a general loss function, and their analysis also relies on hard-to-verify assumptions, such as the existence of progressive sharpening and the existence of a certain well-behaved “stable set”, which is doubted and showed badly-behaved for scalar networks in \citep{kreisler2023gradient}. Overall speaking, such analysis provide results similar to real experiments, but their hard-to-verify assumptions significantly restrict their applicability to practice. Relaxing these assumptions is also extremely challenging.

The second line focuses on minimalist examples, offering more concrete and intuitive insights without requiring specific theoretical assumptions. For instance, \citet{zhu2022understanding} study $2d$ slices of a 4-layer linear scalar network and prove convergence to a minimum with sharpness slightly below to $2/\eta$, and \citet{wang2023good} consider the behavior of sharpness with GD on a special class of 2-layer scalar networks (with nonlinear activation) in the form of $F(xy)$. However, these works suffer from two drawbacks: (1) They can only characterize the asymptotic sharpness of the converged minimum. Due to the lack of analyzing sharpness for the entire trajectory, they cannot provide more desirable nonasymptotic guarantees for the PS and EoS stages; (2) The setting of scalar networks is over-simplified compared with practice, and the obtained results cannot explain the role of the input data dimension in the EoS stage.

Besides, another notable example of the minimalist analysis is \citet{kreisler2023gradient}, which analyze a variant of sharpness. Specifically, they introduce a concept of Gradient Flow Solution (GFS), and prove a monotonic decrease in the sharpness of GFS at the EoS stage for scalar networks. In addition to the aforementioned drawbacks, \citet{kreisler2023gradient} also suffer from another drawback: The sharpness of GFS does not directly transfer to that of GD trajectory, thus providing no immediate explanation for the EoS phenomenon.

In this paper, we aim to address the limitations of the minimalist analysis by providing a more sophisticated example: A two-layer neural network of width one with a two-dimensional input -- in particular, one input dimension is relevant to the response, and the other input dimension is irrelevant. We establish nonasymptotic analysis of the training dynamics along the entire trajectory: 
(1) We prove the existence of progressive sharpening and self-stabilization under large learning rates; (2) We provide sharpness guarantees for the entire trajectory, showing that GD trajectory will never exceed a sharpness upper bound; (3) We prove that the non-monotonically decreasing loss is essentially monotonically decreasing when projected to the only relevant dimension. Through such theory, we provides new insights of why EoS happens from the perspective of both parameters and input data distribution. 

Moreover, we highlight two connections of our theory to existing works: (1) We reduce the gap on the existence of a well-behaved ``stable set'' between the minimalist and generalist analyses. Specifically, \citet{kreisler2023gradient} prove that the stable set hypothesized in \citet{damian2022self} can be disjoint in the scalar networks studied by \citet{zhu2022understanding, kreisler2023gradient}, which essentially violates the assumption of \citet{damian2022self}. Therefore, the projected GD considered in \citet{damian2022self} cannot smoothly decrease the loss toward zero. In contrast, we prove that in our considered two-layer neural network with the two-dimensional input admits a nontrivial well-behaved set, which is the subset of the stable set defined in \citet{damian2022self}. This indicates a potential separation between the bivariante and scalar inputs for linear networks; (2) We extend the analysis of \citet{kreisler2023gradient} and provide the monotonic decrease of GFS sharpness for our considered two-layer neural network with the two-dimensional input.

\section{Related Works}
\label{relatedwork}

The asymptotic property of GD sharpness was first mentioned in \citet{wu2018sgd} as an empirical observation, that is, the minimal sharpness that the GD trajectory with learning rate $\eta$ ultimately converges to is always around $\frac{2}{\eta}$.\citet{cohen2021gradient} made a comprehensive empirical study on the sharpness of the entire GD trajectory. To be more specific, they summarized two phenomena: ``progressive sharpening" and ``edge of stability", which means the sharpness of gradient descent with a learning rate $\eta$ will first increase to $\frac{2}{\eta}$ and then stabilize at such  a scale during the entire training process. They also illustrated that the training loss of GD with a learning rate $\eta$ can non-monotonically decrease, even when the stable condition, sharpness $\lambda \le \frac{2}{\eta}$ (where $\eta$ is the learning rate), is not satisfied. The non-monotonic decay property of the training loss with GD has also been observed in various other settings \citep{wu2018sgd, arora2018theoretical, xing2018walk, jastrzebski2020break,  lewkowycz2020large, wang2021large, li2022robust}.

Recently, several works have attempted to comprehend the mechanism behind EoS with different loss functions under various assumptions \citep{ahn2022understanding, ma2022multiscale, arora2022understanding, lyu2022understanding, li2022analyzing, zhu2022understanding}. From a landscape perspective, \citet{ma2022multiscale} defined a special subquadratic property of the loss function and proved that EoS occurs based on this assumption. \citet{ahn2022understanding} followed this landscape property and studied the unstable convergence behavior of GD. Both \citet{arora2022understanding} and \citet{lyu2022understanding} investigated the implicit bias on the sharpness of GD in some general loss function. 

There are also some works trying to investigate EoS phenomenon on highly simplified settings. \citet{zhu2022understanding} proved the asymptotic sharpness of the converged minimum will be close to $\frac{2}{\eta}$. \citet{wang2023good} consider the behavior of sharpness with GD on a special class of 2-layer scalar networks (with nonlinear activation) in the form of $F(xy)$, and also get a asymptotic result and show some other behavior beyond EoS in their setting. \citet{agarwala2022second} investigate second order regression models, get a asymptotic result similar to \citet{zhu2022understanding}, and a result loosely related to ``progressive sharpening''.  \citet{chen2023beyond} investigate the behavior beyond edge of stability in various simplified examples. \citet{kreisler2023gradient} consider the setting of scalar networks, and show a new concept gradient flow solution (GFS) will decrease during EoS.

Another line of work \citep{lewkowycz2020large, wang2021large} focuses on the implicit bias introduced by large learning rates.\citet{lewkowycz2020large} first proposed the ``catapult phase" , a regime similar to the EoS, where loss does not diverge even if the sharpness is larger than $\frac{2}{\eta}$. \citet{wang2021large} analyze the balance effect of GD with a large learning rate for matrix factorization problems. More recently, \citet{li2022analyzing} provided a theoretical analysis of the sharpness along the gradient descent trajectory in a highly overparameterized two-layer linear network setting under some hard-to-verify assumptions during the training process. \citet{damian2022self} followed \citet{li2022analyzing} to develop a general theory of self-stabilization also under some hard-to-verify assumptions. Moreover, they proposed a concept called ``constrained trajectory" to show that the trajectory of GD with large learning rate deviates from the gradient flow, which was firstly observed by \citet{jastrzebski2020break} and confirmed by \citet{cohen2021gradient}.

\section{Setup}

In this paper, we study a regression problem with two-dimensional input $x = (x_1,x_2)^\top \in \RR^2$ and scalar response $y = f^*(x) \in \RR$. For analytical simplicity, we suppose
\begin{align*}
   \begin{pmatrix}
        x_1 \\
        x_2 
\end{pmatrix} \sim N\left(0,
    \begin{pmatrix}
        \lambda_1 & 0 \\
        0 & \lambda_2 
    \end{pmatrix} \right), 
    ~~ \text{and  } f^*(x) = x_2,
\end{align*}
where $\lambda_1 \geq 100,\lambda_1\lambda_2 \le 1$.\footnote{The covariance matrix can be generalized to be non-diagonal due to the rotation invariance of GD.} Here $x_1$ is an irrelevant feature with large scale, while small-scale $x_2$ fully determines the response $y$. 

To learn the target function $f^*$, we use a two-layer width-one linear network with weights $\theta = (\alpha, \beta_1 , \beta_2) \in \RR^3$:
\begin{align*}
    f(x;\theta) = \alpha \beta_1 x_1 + \alpha \beta_2 x_2.
\end{align*}
Then the population square loss is given by 
\begin{align}
    \loss(\theta) = \frac{1}{2}\mathbb{E}_{x, y}\left[(y - f(x;\theta)^2\right]
    = \frac{1}{2}\lambda_1(\alpha\beta_{1})^2 +\frac{1}{2}\lambda_2(\alpha\beta_{2}-1)^2.\label{eq:def-loss}
\end{align}
The Hessian matrix $H(\theta)$ of $\loss(\theta)$ can be written as: 
\begin{align*}
    H(\theta) = \begin{pmatrix}
\lambda_1\beta_{1}^2 + \lambda_2\beta_{2}^2 & 2\lambda_1\alpha\beta_{1} &  2\lambda_2\alpha\beta_{2} - \lambda_2  \\
2\lambda_1\alpha\beta_{1}  &  \lambda_1\alpha^2   &  0  \\
2\lambda_2\alpha\beta_{2} - \lambda_2 & 0  &  \lambda_2\alpha^2
\end{pmatrix}.
\end{align*}
\begin{definition}[Sharpness of $L(\theta)$] We define the largest eigenvalue of $H(\theta)$ as the sharpness parameter of $L(\theta)$. We denote it by $S(\theta)$. 
\end{definition}

Our setting is motivated by \citet{rosenfeld2023outliers}, where they demonstrate that the oscillations during the EoS stage in image classification tasks are driven by “large magnitude” features in the input data. Notably, these features, such as the background color of CIFAR-10 images, show little correlation with the true labels. Inspired by this observation, our setting allows us to investigate the impact of feature relevance on training dynamics, and provides insights into the mechanisms behind the EoS phenomena.


To minimize the loss in \eqref{eq:def-loss}, we adopt gradient descent with learning rate $\eta>0$ and gradient clipping on $\beta_1$:
\begin{align*}
    \alpha(t+1) &= \alpha(t) - \eta\nabla_{\alpha}L(\theta(t)), \\
    \beta_{1}(t+1) &= \mathrm{Clip}\bigg(\beta_{1}(t) - \eta\nabla_{\beta_1}L(\theta(t)), \, \frac{\sqrt{10}}{6\sqrt{\lambda_1}} \bigg),\\
    \beta_{2}(t+1) &= \beta_{2}(t) - \eta\nabla_{\beta_2}L(\theta(t)),
\end{align*}
where $\mathrm{Clip}(x, c) = {\rm sign}(x) \cdot \max\{|x|, c\}$. 
The clipping of $\beta_1$ prevents anomalous behavior in the dynamics. We provide further discussion in Appendix \ref{abnormal}.

We initialize the weights within the initialization set $\mathcal{X}(\eta)$, which is defined as the set of $\theta = (\alpha, \beta_1, \beta_2)$ satisfying 
\begin{gather*}
     \sqrt{\frac{1.1}{\lambda_1\eta}} \le \alpha \le \sqrt{\frac{2}{\lambda_1\eta}} , \label{ini1} \\
     \max\{\frac{\sqrt{6\eta\lambda_1}}{20}, \frac{3}{20\alpha} ,\alpha\} \le  \beta_2 < \frac{1}{\alpha}, \label{ini2} \\
    \frac{\lambda_2\beta_2}{500\lambda_1\alpha}(1-\alpha\beta_{2}) 
    \le \beta^2_{1} \le \frac{\lambda_2\beta_2}{\lambda_1\alpha}(1-\alpha\beta_{2}).  \label{ini3}
\end{gather*}
Our analyses focus on sufficiently large learning rate $\eta \in [2/\lambda_1,0.1]$, where $\mathcal{X}(\eta)$ is nonempty.  Moreover, we show that $\mathcal{X}(\eta)$ is a large set. For instance, when $\lambda_1 = 100$ and $\eta = 0.1$, it suffices to select $\alpha$ and $\beta_2$ such that 
\begin{align*}
     \frac{\sqrt{3}}{5} \le \alpha \le \frac{\sqrt{5}}{5}, ~~\text{and} ~~
     \frac{\sqrt{5}}{5} \le  \beta_2 < \sqrt{5},
\end{align*}
and then choose $\beta_1$ accordingly. Notably, this initialization allows us to explore the training dynamics across a wide range of learning rates, all starting from the same initial point. 
Additional details are provided in Appendix \ref{sety}. 

The GD dynamics of our model exhibits interesting EoS phenomena. As shown in Fig \ref{train_loss}, we observe that while the loss decreases over long timescales, it exhibits non-monotonic behavior with periodic spikes. Meanwhile, the sharpness grows and oscillates around $2/\eta$, with rapid alternation between progressive sharpening and self-stabilization phases. These characteristics align with the EoS phenomena observed in \citet{cohen2021gradient} and \citet{damian2022self}, and extend beyond the scalar network setting studied in \citet{zhu2022understanding}. We provide further discussion in Section \ref{discuss1}.

\begin{figure}[htb!]
\begin{center}
\hspace{-0.1in}
{\includegraphics[width=0.4\columnwidth]{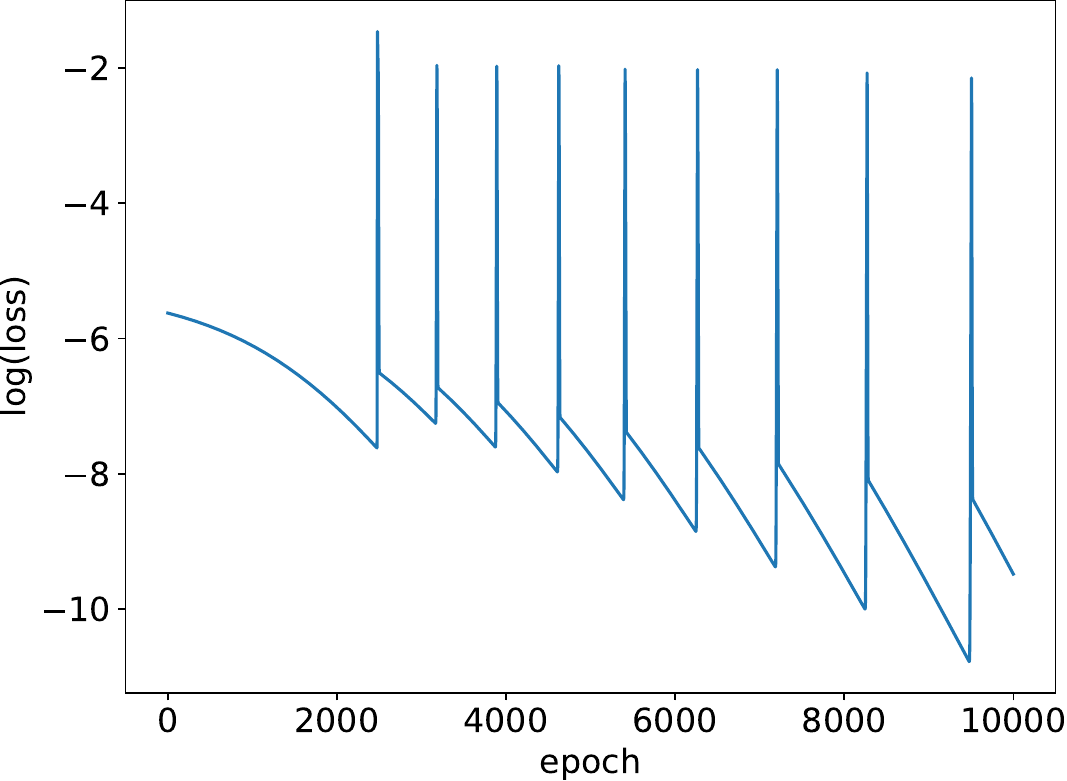}
\includegraphics[width=0.39\columnwidth]{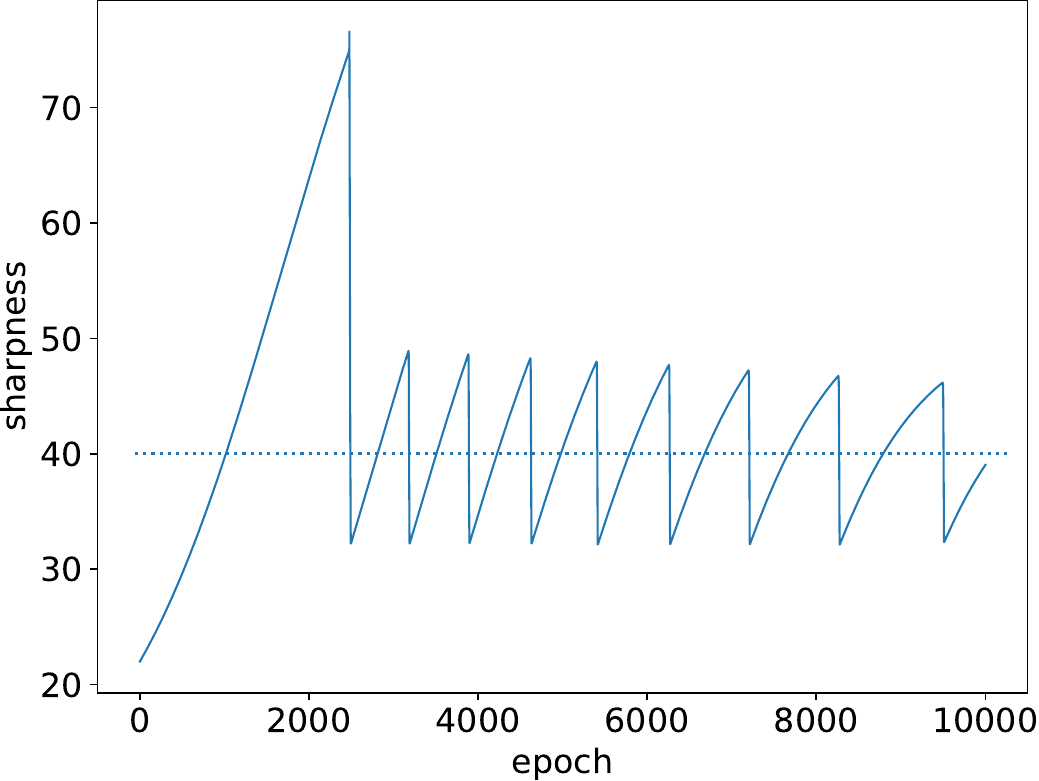}}

\vspace{-0.1in}
\caption{Set $\lambda_1 = 100$ and $\lambda_2 = 0.01$. We train our model with learning rate $\eta=1/20$ for 10000 iterations.
}
\label{train_loss}
\end{center}
\vspace{-0.1in}
\end{figure}

\section{Theoretical Results}
\label{sec:theory}

In this section, we develop a nonasymptotic analysis of the GD dynamics throughout the entire trajectory. 
In section \ref{sec:gd-dnm}, we identify three distinct phases in the GD dynamics, proving the existence of progressive sharpening and self-stabilization. Furthermore, in Section \ref{sec:convergence-rate}, we establish that the loss (non-monotonically) converges to zero, with a rate that depends explicitly on the input data variance. 

Throughout our analysis, we use a learning late $\eta \in [2/{\lambda_1}, 0.1]$,  and initialization within the set $\cX(\eta)$ unless otherwise specified. 
Our choice of $\eta$ is sufficiently large to guarantee the occurrence of EoS phenomena.

\subsection{Gradient Descent Dynamics}\label{sec:gd-dnm}

To begin with, we present the three distinct phases identified in the GD dynamics.

\noindent\textbf{Phase 1: Progressive sharpening before EoS.} The loss $L$ decreases monotonically, while the loss sharpness gradually increases, yet remains below the stability threshold $2/\eta$.

\noindent\textbf{Phase 2.1: Progressive sharpening during EoS.} As the dynamics enters the EoS (Edge of Stability) stage, the sharpness continues to increase monotonically and exceeds the stability threshold $2/\eta$.

\noindent\textbf{Phase 2.2: Self-stabilization during EoS.} In this phase of the EoS stage, the dynamics self-stabilize as the sharpness decreases monotonically until it falls below $2/\eta$.

Following Phase \Romannum{1}, the dynamics enter a cyclical pattern alternating between Phases \Romannum{2}.1 and \Romannum{2}.2. Through this process, we further demonstrate that GD ultimately converges to global minima with a limiting sharpness bounded by $2/\eta$.

\begin{theorem}[Global Convergence]
    \label{converge}
    For any $\delta > 0$ and $\epsilon > 0$, there exists a time $T(\delta, \epsilon)$, such that for any $t \geq T(\delta, \epsilon)$, we have
    \begin{align}
    L(\theta(t)) \le \epsilon  \quad \text{and}  \quad S(\theta(t)) \le \frac{2 + \delta}{\eta}.
    \end{align}
\end{theorem}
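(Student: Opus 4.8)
The plan is to reduce the three-dimensional dynamics to the evolution of the two ``products'' $p(t) := \alpha(t)\beta_1(t)$ and $q(t) := \alpha(t)\beta_2(t)$ together with the scalar $\alpha(t)^2$, since the loss decomposes as $L(\theta) = \tfrac12\lambda_1 p^2 + \tfrac12\lambda_2(q-1)^2$ and, once $p$ is small, the sharpness $S(\theta)$ is governed by $\lambda_1\alpha^2$, the remaining eigenvalues of $H(\theta)$ being of order $O(\lambda_1 p^2) + O(\lambda_2\alpha^2 + \lambda_2/\alpha^2)$ and hence negligible because $\lambda_1\lambda_2\le 1$ and $\lambda_1\alpha^2 = \Theta(1/\eta)$. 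The first task is to write the one-step maps: $\beta_1$ is (before clipping) multiplied by $1-\eta\lambda_1\alpha^2$; $\beta_2$ is pushed toward $1/\alpha$; and $\alpha$ is pulled up by the relevant residual, proportional to $\lambda_2\beta_2(1-q)$, and pulled down by the irrelevant term, proportional to $\lambda_1 p^2/\alpha$. In this language, progressive sharpening is the statement that when $|p|$ is small the up-pull dominates and $\alpha^2$ (hence $S$) grows, while self-stabilization is the statement that once $\eta\lambda_1\alpha^2>2$ the coordinate $p$ grows geometrically in magnitude with alternating sign until the down-pull $\lambda_1 p^2/\alpha$ overtakes it and drives $\alpha^2$ back below $2/(\lambda_1\eta)$.

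I would then carry out the argument in three steps. \textbf{Step 1 (Phase I and entry to EoS).} Using the defining inequalities of $\mathcal X(\eta)$, verify that $S(\theta(0))$ is at most $2/\eta$ and that, as long as $\lambda_1\alpha^2<2/\eta$, one has $|p|$ decaying geometrically, $q$ increasing toward $1$ (staying below it), and $\alpha^2$ increasing monotonically; this is a direct monotonicity argument because $1-\eta\lambda_1\alpha^2\in(-1,1)$ and the relevant residual keeps a fixed sign. Conclude that either $L(\theta(t))$ drops below $\epsilon$ within this phase (and we are essentially done) or there is a finite time $T_1$ at which $\lambda_1\alpha^2$ first exceeds $2/\eta$; in the latter case control $\theta(T_1)$ so that it lies in a ``good region'' $\mathcal G$: $|p|$ small, $q$ close to and below $1$, and $\alpha^2\in[c_1,c_2]/(\lambda_1\eta)$ for explicit constants. \textbf{Step 2 (forward invariance and the cycle).} Show $\mathcal G$ is forward-invariant and that within it the dynamics alternate between Phase 2.1 ($\lambda_1\alpha^2>2/\eta$, $|p|$ growing) and Phase 2.2 ($\lambda_1\alpha^2<2/\eta$, $|p|$ contracting), with three quantitative guarantees: (i) a uniform upper bound $\alpha^2(t)\le(2+\delta)/(\lambda_1\eta)$ for all large $t$, hence $S(\theta(t))\le(2+\delta)/\eta$; (ii) a uniform bound on the per-cycle peak of $|p|$, shrinking as $1-q\to 0$; and (iii) a net decrease of the relevant residual $|q(t)-1|$ across each cycle (the ``essentially monotone'' claim). \textbf{Step 3 (loss convergence).} From (iii) and the convergence-rate analysis of Section~\ref{sec:convergence-rate}, $\lambda_2(q(t)-1)^2\to 0$ at a rate set by $\eta\lambda_2$; by (ii) the spikes $\tfrac12\lambda_1 p^2$ also tend to $0$; hence $L(\theta(t))\to 0$. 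For given $\delta,\epsilon$ one then takes $T(\delta,\epsilon)$ to be the maximum of $T_1$, the first time $L\le\epsilon$ (including spikes), and the first time beyond which $\alpha^2\le(2+\delta)/(\lambda_1\eta)$ permanently.

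The main obstacle is Step 2, specifically the discrete-time competition inside the EoS cycle: $p$ grows geometrically with ratio $\eta\lambda_1\alpha^2-1$, so one must show that the self-stabilizing drop in $\alpha^2$ arrives, and has the right size, \emph{before} $|p|$ or $\alpha^2$ can leave $\mathcal G$ — i.e., that an exponentially growing quantity is caught by the integrated restoring force in time. Closing this induction requires carefully calibrated constants, which is presumably exactly what pins down the numerical thresholds in the definition of $\mathcal X(\eta)$ and in the clip radius $\tfrac{\sqrt{10}}{6\sqrt{\lambda_1}}$; the clipping of $\beta_1$ is not cosmetic, since it caps the worst-case overshoot of $|p|$ in a cycle and is what makes $\mathcal G$ genuinely invariant. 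A secondary difficulty is ensuring $q$ never overshoots $1$ by enough to flip the sign of the progressive-sharpening drive and destroy the monotonicity in (iii); this should follow from the smallness of $\lambda_2$ relative to the cycle length, but it must be tracked explicitly rather than absorbed into $O(\cdot)$ terms.
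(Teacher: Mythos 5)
Your plan captures the right phenomenology (cycles of sharpening and self-stabilization, oscillatory $p=\alpha\beta_1$ versus convergent $q=\alpha\beta_2$), but the pivotal step is left open, and the mechanism you propose for it is not the one that actually closes the argument. You want the eventual bound $S(\theta(t))\le(2+\delta)/\eta$ to come out of a quantitative per-cycle competition inside a forward-invariant region: the overshoot of $\lambda_1\eta\alpha^2$ above $2$ should shrink below $\delta$ because the restoring force (growth of $|p|$) catches the creep of $\alpha$ fast enough. That competition is exactly the delicate part, and in the paper it only ever yields a \emph{constant} ceiling ($\lambda_1\eta\alpha^2\le 4.2$, via Stirling-type product estimates on $\prod_i(\lambda_1\eta\alpha^2(i)-1)$); it is never shown, and never needed, that the per-cycle overshoot itself decays to $\delta$. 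The paper's route to the $(2+\delta)$ bound is much softer and rests on two hard invariants established beforehand: $\alpha(t)\beta_2(t)<1$ for all $t$ and $\beta_2(t)$ strictly increasing, together with the band $1.5\le\lambda_1\eta\alpha^2\le 4.2$. One first shows $\lambda_1\eta\alpha^2<2$ infinitely often (else $\beta_1^2$ never decays and eventually forces $\alpha$ down, a contradiction) and that $\alpha\beta_2\to 1$; then, if $\lambda_1\eta\alpha^2>2+\delta$ occurred at arbitrarily late times, monotonicity of $\beta_2$ plus $\alpha\beta_2<1$ would cap $\beta_2<\sqrt{\lambda_1\eta/(2+\delta)}$ forever, and at the infinitely many times with $\lambda_1\eta\alpha^2<2$ this keeps $\alpha\beta_2<\sqrt{2/(2+\delta)}$, contradicting $\alpha\beta_2\to 1$. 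In short, the squeeze $\alpha<1/\beta_2$ with $\beta_2\uparrow$ does the work that your unproved shrinking-overshoot induction is supposed to do; without importing that mechanism, I do not see how your Step 2 closes, and you yourself flag it as the main obstacle.

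Two smaller gaps: (a) $\lambda_1\alpha^2\le(2+\delta)/\eta$ does not by itself give $S\le(2+\delta)/\eta$ (the paper's Lemma 2(i) only gives $S\le 1.12\,\lambda_1\alpha^2$); you also need $\beta_1(t)\to 0$ and then a uniform continuity argument (the paper proves $|\beta_1|\to 0$ by a separate contradiction and invokes Berge's maximum theorem on a compact parameter set) to pass to $S\le(2+2\delta)/\eta$. Your remark that the other Hessian contributions are ``negligible once $p$ is small'' is the right idea but must be made uniform in exactly this way. (b) Your Step 3 cites the decay-rate analysis of Section 4.2, but that analysis is only valid up to the stopping time $T_4$ (while $\beta_2\le 0.5\sqrt{\lambda_1\eta/2}$), so it cannot be used for the tail; the paper instead gets $L(\theta(t))\to 0$ directly from $\alpha\beta_2\to 1$ and $\beta_1\to 0$.
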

To rigorously characterize the GD dynamics, we first present the following properties of the parameters $\theta(t) = (\alpha(t),\beta_1(t), \beta_2(t))$:
\begin{lemma}
\label{lemma:paras}
For all $t \ge 0$,  use $v(t)$ to denote the eigenvector corresponding to the largest eigenvalue of $H(\theta(t))$, we have:

\noindent\textbf{(i)} $~\lambda_1\alpha^2(t) \leq S(\theta(t)) \leq 1.12\lambda_1\alpha^2(t)$;

\noindent\textbf{(ii)} $~|\mathrm{cos}(v(t), (0,1,0))| > 0.9$;

\noindent\textbf{(iii)} $~\beta_2(t+1) > \beta_2(t)$.
\end{lemma}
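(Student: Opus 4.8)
The plan is to reduce all three parts to a small set of scalar invariants that hold along the whole trajectory, and then read off (i)--(iii) from elementary estimates on the $3\times 3$ Hessian. The invariants I would maintain are: $\alpha(t)>0$; $0<\alpha(t)\beta_2(t)<1$; the lower bound $\alpha(t)^2\ge \tfrac{1.1}{\lambda_1\eta}$, so that the single ``large'' Hessian entry $H_{22}=\lambda_1\alpha(t)^2$ satisfies $\lambda_1\alpha(t)^2\ge 1.1/\eta\ge 11$; and the quantitative smallness $\beta_1(t)^2\le \tfrac{\lambda_2\beta_2(t)}{\lambda_1\alpha(t)}\bigl(1-\alpha(t)\beta_2(t)\bigr)$ of the clipped coordinate. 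All four hold at $t=0$ because they are (implied by) the definition of $\cX(\eta)$, and I would prove their preservation by induction on $t$ in tandem with the phase analysis of Section~\ref{sec:gd-dnm}. Granting them, note that $\lambda_2\le 1/\lambda_1$ forces $\lambda_2\alpha^2\le \tfrac{1}{\lambda_1^2}\lambda_1\alpha^2$ and $|2\lambda_2\alpha\beta_2-\lambda_2|\le 3\lambda_2$; the $\beta_1$-invariant together with $\alpha\beta_2\in(0,1)$ gives $\tfrac{2\beta_1}{\alpha}\le \tfrac{\eta}{1.1}<0.1$ (hence $\lambda_1\beta_1^2=O(\eta)$ and $2\lambda_1\alpha\beta_1<0.1\,\lambda_1\alpha^2$); and $\beta_2<1/\alpha$ gives $\lambda_2\beta_2^2=O(\eta)$. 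Thus every Hessian entry other than $H_{22}$ is $O(1)$, in fact a tiny multiple of $\lambda_1\alpha^2$.

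For (i), the lower bound is the Rayleigh quotient at $e_2=(0,1,0)$: $S(\theta)\ge e_2^\top H(\theta)e_2=\lambda_1\alpha^2$. For the upper bound I use $S(\theta)\le \|H(\theta)\|_2\le \|H(\theta)\|_\infty$, the largest absolute row sum (valid since $H$ is symmetric). Rows $1$ and $3$ sum to $O(1)$, a negligible fraction of $\lambda_1\alpha^2$; row $2$ sums to exactly $\lambda_1\alpha^2+|2\lambda_1\alpha\beta_1|<1.1\,\lambda_1\alpha^2<1.12\,\lambda_1\alpha^2$. Hence $S(\theta)\le 1.12\,\lambda_1\alpha^2$.

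For (ii), write the top eigenpair as $H(\theta)v=S(\theta)v$ with $v=(v_1,v_2,v_3)$ a unit vector, and expand rows $1$ and $3$:
\begin{align*}
v_1\bigl(S(\theta)-\lambda_1\beta_1^2-\lambda_2\beta_2^2\bigr) &= 2\lambda_1\alpha\beta_1\,v_2+(2\lambda_2\alpha\beta_2-\lambda_2)\,v_3, \\
v_3\bigl(S(\theta)-\lambda_2\alpha^2\bigr) &= (2\lambda_2\alpha\beta_2-\lambda_2)\,v_1.
\end{align*}
Since $S(\theta)\ge \lambda_1\alpha^2\ge 11$ dwarfs $\lambda_1\beta_1^2+\lambda_2\beta_2^2$ and $\lambda_2\alpha^2$, while the coefficients $2\lambda_1\alpha\beta_1$ and $3\lambda_2$ on the right are each at most a $0.1$-fraction of $S(\theta)$, the first identity gives $|v_1|<0.1$ and then the second gives $|v_3|$ an order of magnitude smaller still. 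Therefore $\cos^2(v,e_2)=v_2^2=1-v_1^2-v_3^2>0.81$, i.e. $|\cos(v(t),(0,1,0))|>0.9$.

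Part (iii) is immediate from the update rule: clipping does not act on $\beta_2$, so $\beta_2(t+1)-\beta_2(t)=-\eta\nabla_{\beta_2}L(\theta(t))=\eta\lambda_2\,\alpha(t)\bigl(1-\alpha(t)\beta_2(t)\bigr)>0$ by the invariants $\alpha(t)>0$ and $\alpha(t)\beta_2(t)<1$. The real work — and the main obstacle — is not any of the displayed estimates, which are routine once the region is fixed, but verifying that the GD map keeps $\theta(t)$ inside that region for all $t$: preserving $\alpha(t)\beta_2(t)<1$, the lower bound on $\alpha(t)^2$, and the two-sided control on $\beta_1(t)^2$ through the oscillatory Phase 2, where $\alpha$ overshoots and $\beta_1$ is actively perturbed. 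This is exactly where the clipping threshold $\sqrt{10}/(6\sqrt{\lambda_1})$ and the precise constants defining $\cX(\eta)$ are used, and I expect it must be carried out jointly with the Phase 1/2 lemmas rather than proved standalone.
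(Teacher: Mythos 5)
There is a genuine gap, and it sits in the invariant you lean on for the quantitative constants. Your proposed global invariant $\beta_1^2(t)\le \tfrac{\lambda_2\beta_2(t)}{\lambda_1\alpha(t)}\bigl(1-\alpha(t)\beta_2(t)\bigr)$ is \emph{not} preserved along the trajectory: the self-stabilization mechanism works precisely because $\beta_1$ grows until $\lambda_1\beta_1^2\alpha$ dominates $\lambda_2\beta_2(1-\alpha\beta_2)$, i.e.\ until this inequality is violated (this is how the paper defines the transition times in the proof of Theorem \ref{pro_and_self}, and the global-convergence argument for Theorem \ref{converge} also relies on such violations to drive $\alpha$ back down). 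The only bound on $\beta_1$ that holds for all $t$ is the clipping cap $|\beta_1|\le \sqrt{10}/(6\sqrt{\lambda_1})$, which together with $\alpha\ge\sqrt{1.1/(\lambda_1\eta)}\ge\sqrt{10/\lambda_1}$ gives only $|\beta_1|/\alpha\le 1/6$, not your claimed $2\beta_1/\alpha<0.1$. With $|\beta_1|/\alpha\le 1/6$ your row-sum bound $\|H\|_\infty$ on row 2 gives $\lambda_1\alpha^2+2\lambda_1\alpha|\beta_1|\le \tfrac{4}{3}\lambda_1\alpha^2$, which is far above the claimed $1.12\lambda_1\alpha^2$; so your proof of the upper bound in (i) does not close, and it cannot be fixed simply by substituting the clipping bound, because $\|\cdot\|_\infty$ is too crude here. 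The paper instead bounds the top eigenvalue of the $2\times 2$ block $A=\begin{pmatrix}\lambda_1\beta_1^2 & 2\lambda_1\alpha\beta_1\\ 2\lambda_1\alpha\beta_1 & \lambda_1\alpha^2\end{pmatrix}$ by maximizing its quadratic form over unit vectors, which with $\beta_1\le\alpha/6$ yields $\|A\|_2\le 1.105\,\lambda_1\alpha^2$, and adds $\|B\|_2\le 0.0111\,\lambda_1\alpha^2$ for the $\lambda_2$-block, giving $1.12$.

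Two further remarks. First, part (ii) as you wrote it inherits the same false premise (the ``$0.1$-fraction'' claims), but your eigenvector-equation argument on rows 1 and 3 is repairable: with $|\beta_1|/\alpha\le 1/6$ one gets $|v_1|\lesssim 0.35|v_2|$ and $|v_3|$ negligible, which still yields $|\cos(v,e_2)|>0.9$; the paper reaches the same conclusion by expanding $e_2$ in the eigenbasis of $H$ and using trace/second-eigenvalue bounds for $A$ and $B$, so this part is a near-miss rather than a structural error. Second, your part (iii) and your overall strategy (establish $\alpha>0$, $\alpha\beta_2<1$, and the lower bound on $\lambda_1\eta\alpha^2$ by induction jointly with the phase analysis, then read off the lemma) do match the paper, which proves exactly these trajectory invariants in Proposition \ref{proposition:C1} before deriving (i)--(iii); but the deliverable constants in (i) require the block-wise spectral estimate with the clipping bound, not the $\beta_1$-initialization inequality promoted to a global invariant.
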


We make the following remarks for Lemma \ref{lemma:paras}.

~

\noindent\textbf{Sharpness dominated by $\alpha$}~ Property
 (i) demonstrates that $\lambda_1\alpha^2$ dominates the sharpness 
$S(\theta)$ along the trajectory, suggesting that analyzing $\alpha_1$ suffices to reveal the dynamics of sharpness.

~

\noindent\textbf{Instability of $\beta_1$}~ The top eigenvector $v(t)$ of $H(\theta(t))$ identifies the direction of maximum curvature in $L(\theta(t))$. When the sharpness is as large as $2/\eta$, updates along $v(t)$ can induce oscillatory instability in the GD dynamics. Property (ii) reveals that the oscillating direction $v(t)$ closely aligns with $\beta_{1}$. Furthermore, we will show that $\beta_1$ induces spikes in the loss $L$ in Section \ref{sec:convergence-rate}.

~

\noindent\textbf{Monotonic Increase of $\beta_2$}~ In contrast to $\beta_1$, $\beta_2$ is a stable direction. This stability difference arises from their respective second-order partial derivatives: $\frac{\partial^2L}{\partial \beta_2}(\theta) = \lambda_2\alpha^2$ is much smaller than $\frac{\partial^2L}{\partial \beta_1}(\theta) = \lambda_1\alpha^2$. Property (iii) confirms this stability, proving that $\beta_2$ increases monotonically throughout the GD trajectory.

~

Now we are ready to characterize the sharpness along the GD trajectory, and demonstrate the existence of progressive sharpening and EoS in the following theorem.

\begin{theorem}[\textbf{Progressive Sharpening}]
    \label{eos_theory}
    Let $T_1$ to be the first time such that $\lambda_1\eta\alpha^2(T_1) \geq 1.5$ ($T_1$ can be $\infty$). Then for any $0 \leq t < T_1$, $\alpha>0$ increases monotonically, and 
    \begin{align*}
        \frac{1.1}{\eta} \le S(\theta(t)) \le \frac{1.7}{\eta}.
    \end{align*}
    For $t \ge T_1$, we have
    \begin{align*}
        \frac{1.5}{\eta} \le\, S(\theta(t)) \le  \frac{4.71}{\eta}.
    \end{align*}
\end{theorem}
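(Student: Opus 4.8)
The plan is to leverage Lemma~\ref{lemma:paras}(i), which pins the sharpness between $\lambda_1\alpha^2(t)$ and $1.12\lambda_1\alpha^2(t)$, so that the entire theorem reduces to controlling the scalar quantity $\lambda_1\eta\alpha^2(t)$ along the trajectory. For the first part ($0 \le t < T_1$), I would first show $\alpha$ increases monotonically on this interval. Writing out the update $\alpha(t+1) = \alpha(t) - \eta\nabla_\alpha L = \alpha(t) - \eta(\lambda_1\alpha\beta_1^2 + \lambda_2\alpha\beta_2(\alpha\beta_2-1))$, and using the initialization/invariant bounds on $\beta_1, \beta_2$ (in particular $\alpha\beta_2 < 1$ from $\cX(\eta)$ together with Lemma~\ref{lemma:paras}(iii) keeping $\beta_2$ growing, and the clip bound on $\beta_1$), the gradient $\nabla_\alpha L$ should be negative with a magnitude small enough that $\alpha$ increases but not so fast that $\lambda_1\eta\alpha^2$ overshoots $1.5$ before $T_1$ — this is essentially the definition of $T_1$. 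Then the two-sided bound $1.1/\eta \le S \le 1.7/\eta$ follows from $\lambda_1\eta\alpha^2(0) \ge 1.1$ (initialization lower bound on $\alpha$), monotonicity giving the lower bound $\lambda_1\eta\alpha^2(t)\ge 1.1$, and the upper bound: since $\lambda_1\eta\alpha^2(t) < 1.5$ for $t<T_1$ and the one-step increase is controlled, $\lambda_1\eta\alpha^2(t) \le 1.5$ roughly, so $S(\theta(t)) \le 1.12 \cdot 1.5/\eta = 1.68/\eta \le 1.7/\eta$. I should be careful at $t = T_1 - 1 \to T_1$: the bound $1.7/\eta$ must still hold at the last step before crossing, which requires bounding the jump in $\lambda_1\eta\alpha^2$ in a single step.

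For the second part ($t \ge T_1$), the strategy is an invariance/induction argument: show that once $\lambda_1\eta\alpha^2 \ge 1.5$, the trajectory stays in the band $[1.5, 4.71/1.12] \approx [1.5, 4.2]$ for $\lambda_1\eta\alpha^2$ (which then gives $S \in [1.5/\eta, 4.71/\eta]$ via Lemma~\ref{lemma:paras}(i)). The lower bound $\lambda_1\eta\alpha^2(t) \ge 1.5$ for $t \ge T_1$ needs a non-obvious argument because $\alpha$ is no longer monotone here — this is the EoS regime where oscillation in $\beta_1$ feeds back into $\alpha$. The key will be to analyze the $\alpha$-update more carefully: when $\lambda_1\alpha^2$ is large, $\beta_1$ oscillates with growing amplitude (the $\beta_1$ direction is unstable per the remark after Lemma~\ref{lemma:paras}), which makes $\lambda_1\alpha\beta_1^2$ a positive drift term pushing $\alpha$ down, while $\lambda_2\alpha\beta_2(\alpha\beta_2 - 1)$ is a small correction (since $\lambda_1\lambda_2 \le 1$ makes $\lambda_2$ tiny and $\alpha\beta_2$ stays near but below $1$). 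The clip on $\beta_1$ at $\sqrt{10}/(6\sqrt{\lambda_1})$ caps $\lambda_1\beta_1^2 \le 10/36$, which bounds how fast $\alpha$ can decrease in one step: $\alpha(t+1) \ge \alpha(t)(1 - \eta\lambda_1\beta_1^2 - \ldots) \ge \alpha(t)(1 - \text{const}\cdot\eta)$, and since $\eta \le 0.1$ this one-step multiplicative decrease is mild enough that starting from $\lambda_1\eta\alpha^2 \ge 1.5$ one cannot drop below $1.5$ in a way that isn't immediately corrected by $\alpha$ increasing again once $\beta_1$ is small. For the upper bound, I would show that whenever $\lambda_1\eta\alpha^2$ gets large (say above some threshold in $(1.5, 4.2)$), the induced $\beta_1$-oscillation is violent enough that the positive term $\eta\lambda_1\alpha\beta_1^2$ dominates and forces $\alpha$ back down — a self-stabilization mechanism — before $\lambda_1\eta\alpha^2$ can exceed $4.2$.

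The main obstacle I anticipate is the upper bound $S(\theta(t)) \le 4.71/\eta$ in the second regime: unlike the monotone first phase, this requires showing the self-stabilization genuinely kicks in fast enough, which means simultaneously tracking $\alpha$, $\beta_1$, and $\beta_2$ through at least a full near-oscillation cycle and bounding the worst-case one-step overshoot of $\lambda_1\eta\alpha^2$. Concretely, I expect one must (a) quantify the amplitude growth rate of $\beta_1$ as a function of the current $\lambda_1\eta\alpha^2$, (b) show that within a bounded number of steps $\beta_1^2$ becomes large enough that $\eta\lambda_1\beta_1^2$ exceeds the amount by which $\alpha^2$ had grown, and (c) control the residual $\lambda_2$-terms throughout; the constants $1.5$, $4.71$, $\sqrt{10}/6$, and the width of $\cX(\eta)$ are presumably tuned precisely so that this loop closes. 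I would organize the second part as: establish a short list of simultaneous invariants (bounds on $\alpha\beta_2$, on $\lambda_1\beta_1^2$ via the clip, on $\lambda_1\eta\alpha^2$), verify they hold at $T_1$ using the first-part bounds plus a one-step estimate, and then show each invariant is preserved under one GD step — deferring the delicate amplitude-of-oscillation bookkeeping to the cyclic Phase~2.1/2.2 analysis that the paper sets up next.
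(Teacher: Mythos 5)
Your reduction is the same as the paper's: Lemma \ref{lemma:paras}(i) sandwiches $S(\theta(t))$ between $\lambda_1\alpha^2(t)$ and $1.12\,\lambda_1\alpha^2(t)$, so the theorem becomes a statement about the scalar $\lambda_1\eta\alpha^2(t)$ ($1.1\le\lambda_1\eta\alpha^2<1.5$ with monotone increase before $T_1$, and $1.5\le\lambda_1\eta\alpha^2\le 4.2$ afterwards, since $1.12\times 4.2\le 4.71$). The paper proves exactly this band as a standalone result (Proposition \ref{proposition:C1}) and then cites it. The gap is that your proposal never actually proves the band: everything you write for $t\ge T_1$ is the qualitative mechanism ("oscillation in $\beta_1$ grows, its drift term pushes $\alpha$ down before $\lambda_1\eta\alpha^2$ exceeds $4.2$; the constants are presumably tuned so the loop closes"), which is precisely the part that carries all the difficulty. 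The paper's argument for the upper bound is genuinely quantitative: it tracks the product $\prod_i(\lambda_1\eta\alpha^2(i)-1)$, which governs the multiplicative amplification of $\beta_1^2$ while the trajectory is unstable, compares the per-step growth rates of $\alpha^2$ below and above the threshold (the ratio bound $n_1/n_2\le 2.04$, obtained from controlling $\alpha\beta_2(1-\alpha\beta_2)$ along the excursion), and uses Stirling-type estimates to show that by the time $\lambda_1\eta\alpha^2$ would reach about $4.19$ the amplification factor exceeds $10^4$, which forces $\lambda_1\eta\alpha(t_1)\beta_1^2(t_1)$ to dominate the $\lambda_2$-term and hence $\alpha$ to decrease at the putative peak --- a contradiction. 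It then shows that when the trajectory re-enters the stable region, the state again satisfies initialization-type conditions, so the argument repeats cyclically. None of this bookkeeping appears in your plan, and deferring it to "the cyclic Phase 2.1/2.2 analysis that the paper sets up next" is circular: Theorem \ref{pro_and_self} is proved in the paper \emph{using} this theorem (via Proposition \ref{proposition:C1}), not the other way around.

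Your lower-bound sketch also does not deliver the stated hard bound. The theorem asserts $S(\theta(t))\ge 1.5/\eta$ for \emph{all} $t\ge T_1$, but the one-step clip estimate alone allows a violation: with $\lambda_1 c^2=10/36$ and $\eta\le 0.1$, a single step from $\lambda_1\eta\alpha^2=1.5$ with $|\beta_1|$ at the clip level multiplies $\lambda_1\eta\alpha^2$ by roughly $(1-\eta\lambda_1 c^2)^2\approx 0.945$, landing near $1.42$; "immediately corrected once $\beta_1$ is small" would prove recurrence, not the uniform bound. The paper avoids this by locating the first time the trajectory drops below $1.7$ (having shown it cannot dip below roughly $1.9$ earlier, and at most one clipped step below $1.7$), and then exploiting that in this flatter region $\beta_1^2$ decays geometrically (by at least a factor $1/2$ per step), so the total subsequent multiplicative decrease of $\lambda_1\eta\alpha^2$ is bounded via Bernoulli's inequality by $(1-2\lambda_1\eta c^2)^2$, keeping it above $1.5$. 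So while your overall architecture matches the paper's, the proposal as written is missing the two quantitative arguments (amplification-versus-growth for the $4.2$ ceiling, cumulative-decay for the $1.5$ floor) that constitute the actual proof.
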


Theorem \ref{eos_theory} guarantees that the phenomenon of  progressive sharpening before entering the EoS stage (Phase \Romannum{1}) and the sharpness bounded near $2/\eta$ during EoS  (Phase \Romannum{2}).
Specifically, when $t<T_1$, the sharpness keeps growing, as evidentiated by the monotonic increase of $\alpha$, yet stays below $2/\eta$. When $t \geq T_1$, the sharpness is bounded from below and above, near $2/\eta$, indicating the GD trajectory stays in the flat region and never escapes to a sharper region during EoS.

Furthermore, we prove the progressive sharpening (Phase \Romannum{2}.1) and self-stabilization (Phase \Romannum{2}.2) during EoS, when $\theta=(\alpha, \beta_1, \beta_2)$ is initialized within a more stable set  $\tilde{\cX}(\eta)$:
\begin{align*}
    \tilde{\mathcal{X}}(\eta) := \{~ (\alpha, \beta_1, \beta_2)\in \mathcal{X}(\eta) |~& \alpha \le \sqrt{1.5/(\lambda_1\eta)},~ 
    \beta_2 \le 0.2/\alpha,~
    \beta^2_{1} \le \frac{\lambda_2\beta_2}{50\lambda_1\alpha}(1-\alpha\beta_{2}) ~\}.
\end{align*}
Here $\tilde{\cX}(\eta)$ forms a significant subset of $\mathcal{X}(\eta)$. It represents a relatively flat region in $\mathcal{X}(\eta)$. 
Then we can derive the following theorem:

\begin{theorem}[Edge of Stability]
    \label{pro_and_self}
    Let $\theta(0) \in \tilde{\mathcal{X}}(\eta)$. There exist $T_2$ and $T_3$ with $0<T_2<T_3$, such that:
    
    \noindent $\bullet$ (Progressive Sharping)  For $t \in[0,T_2]$, we have 
    \begin{align*}
        \alpha(t+1)>\alpha(t), \quad\text{and} \quad  S(\theta(T_2))> \frac{2.37}{\eta};
    \end{align*}
    
    \noindent $\bullet$ (Self-stabilization) For $t\in(T_2,T_3]$, we have  
    \begin{align*}
        \alpha(t+1)<\alpha(t), \quad\text{and} \quad  S(\theta(T_3))< \frac{2}{\eta} + \eta.
    \end{align*}
\end{theorem}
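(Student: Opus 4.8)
The plan is to reduce everything to the scalar trajectory of $g(t):=\lambda_1\eta\,\alpha(t)^2$, tracked alongside $\beta_1(t)^2$ and $z(t):=\alpha(t)\beta_2(t)$. Writing out the gradients, the updates become $\alpha(t+1)-\alpha(t)=\eta\big(\lambda_2\beta_2(t)(1-z(t))-\lambda_1\alpha(t)\beta_1(t)^2\big)$, $\beta_1(t+1)^2=\min\{(1-g(t))^2\beta_1(t)^2,\ c^2\}$ with $c^2=\tfrac{5}{18\lambda_1}$, and $\beta_2(t+1)-\beta_2(t)=\eta\lambda_2\alpha(t)(1-z(t))$. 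So, as long as $z(t)<1$ (which I carry as an invariant, using that $\tilde{\mathcal X}(\eta)$ forces $\alpha^2-\beta_2^2$ bounded below and that this near-balancedness of the two-layer map is only perturbed at order $\eta^2$), $\alpha$ strictly increases exactly when $\lambda_1\alpha\beta_1^2<\lambda_2\beta_2(1-z)$ and strictly decreases when the inequality reverses; and $\beta_1^2$ strictly contracts when $g<2$, strictly expands when $g>2$, and is pinned at $c^2$ once it reaches it. By Lemma~\ref{lemma:paras}(i), $g(t)/\eta\le S(\theta(t))\le 1.12\,g(t)/\eta$, so both sharpness claims reduce to two-sided control of $g$. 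The hypothesis $\theta(0)\in\tilde{\mathcal X}(\eta)$ unpacks to $g(0)\le 1.5$, $z(0)\le 0.2$, and $\lambda_1\alpha(0)\beta_1(0)^2\le\tfrac1{50}\lambda_2\beta_2(0)(1-z(0))$ — ``start below the instability threshold, with a small relevant component and a $50$-fold margin in $\beta_1$'' — and these are the three facts I propagate.

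For the progressive-sharpening half I would run a coupled induction over $t\le T_2$ maintaining $z(t)<1$, the drift of the ``slow'' quantities $\alpha,\ \lambda_2\beta_2,\ 1-z$ within controlled multiplicative bands while $g\in[1.1,2.6]$, and the margin $\lambda_1\alpha\beta_1^2<\lambda_2\beta_2(1-z)$ (so $g$ keeps increasing). The heart is the margin. While $g<2$, $\beta_1^2$ contracts, so its side shrinks and the margin only widens; once $g$ crosses $2$, $\beta_1^2$ expands by per-step factors $(g-1)^2$, but $g$ advances per step by $g(t+1)-g(t)=2\eta^2\lambda_1\lambda_2 z(1-z)\big(1-\tfrac{\lambda_1\alpha\beta_1^2}{\lambda_2\beta_2(1-z)}\big)+O(\eta^4)$, so the accumulated change of $\log\beta_1^2$ as $g$ travels from $g(0)$ up to any level $g^\star\le 2.6$ is, up to a Riemann-sum error the bands keep $O(1)$, $\int_{g(0)}^{g^\star}\tfrac{2\ln(u-1)}{\delta(u)}\,du$ with $\delta(u)=\Theta(\eta^2\lambda_1\lambda_2 z(1-z))>0$; because the negative part over $u\in(g(0),2)$ dominates the positive part over $u\in(2,g^\star)$ by more than the variation factor of $\delta$, this quantity is $\le -c'/\eta^2<0$. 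Hence $\beta_1^2(t)\le\beta_1^2(0)$ throughout — in particular below $c^2$, so no clipping fires and the bound is self-consistent — and therefore $\lambda_1\alpha\beta_1^2\le(\text{bounded slow factor})\cdot\tfrac1{50}\lambda_2\beta_2(1-z)<\lambda_2\beta_2(1-z)$. So $\alpha$ and $g$ keep rising; a limiting argument (if $g$ stayed bounded, $\alpha$'s increments would vanish, forcing $z\to 1$, but then the $g$-growth estimate already pushes $g$ far past $2.6$ — a contradiction) shows $g$ exceeds $2.6$ in finite time with the margin still intact, and taking $T_2$ to be the last step before $\lambda_1\alpha\beta_1^2$ first reaches $\lambda_2\beta_2(1-z)$ gives $\alpha$ increasing on $[0,T_2]$ and $g(T_2)>2.37$, whence $S(\theta(T_2))\ge\lambda_1\alpha^2(T_2)=g(T_2)/\eta>2.37/\eta$.

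For self-stabilization, at $T_2+1$ we have $g>2.37$ with the inequality reversed, so $\alpha$ decreases. First I show $\beta_1^2$ reaches the clip $c^2$ within $O(\log\tfrac1\eta)$ steps and before $g$ falls back to $2$: each $g$-decrement is of order $\eta^2$ times the margin deficit, the deficit grows geometrically (ratio $\ge(g-1)^2-1>0$) until capped, so the total $g$-drop during this burst telescopes to $O(\eta)$. Once $\beta_1^2=c^2$, $\alpha(t)-\alpha(t+1)=\eta\big(\lambda_1\alpha c^2-\lambda_2\beta_2(1-z)\big)=\tfrac5{18}\eta\alpha\cdot(1-o(1))$, so $g$ shrinks by a $\Theta(\eta)$ fraction per step and drops below the target level $g_3:=(2+\eta^2)/1.12$ within $O(1/\eta)$ further steps. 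The remaining point is that the reversed inequality persists all the way down to $g_3$: once $g<2$, $\beta_1^2$ begins contracting, but over the $O(1/\eta)$ steps taking $g$ from $2$ to $g_3$ the drop of $\log\beta_1^2$ is $\int_{g_3}^2 \tfrac{2\ln(u-1)}{(5/9)\eta u}\,du$, which is a small multiple of $1/\eta$ — smaller than $\log(c^2/\text{threshold})=\log\Omega(1/\eta)$ — so $\beta_1^2$ stays above $\lambda_2\beta_2(1-z)/(\lambda_1\alpha)$ and the inequality stays reversed. Setting $T_3:=\min\{t>T_2:g(t)<g_3\}$ then gives $\alpha$ decreasing on $(T_2,T_3]$ and $S(\theta(T_3))\le 1.12\,g(T_3)/\eta<2/\eta+\eta$.

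The main obstacle is turning the two heuristic ``area comparisons'' — net contraction of $\beta_1^2$ over $g\in[g(0),\approx 2.6]$ in the first phase, and only a small $O(1/\eta)$ contraction over $g\in[g_3,2]$ in the second — into rigorous bounds on the discrete products $\prod_t(1-g(t))^2$, since $g(t)$ is a non-uniformly spaced trajectory whose spacing itself depends on the bootstrapped margin and on $z$, and since $z,\beta_2,\alpha$ drift over the $\Theta\!\big(1/(\eta^2\lambda_1\lambda_2)\big)$ steps of the first phase. Keeping the discretization error and the slow-factor drift small enough to preserve the fairly tight constants $2.37$ and $2/\eta+\eta$ — in particular verifying that no clipping ever fires in Phase~2.1 and that $z$ never reaches $1$ — is the bookkeeping-heavy core; everything else is elementary algebra on the three scalar updates together with Lemma~\ref{lemma:paras}.
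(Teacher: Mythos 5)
Your Phase~2.1 argument is, in spirit, exactly the paper's: it bounds the cumulative multiplier $\prod_t (g(t)-1)^2$ on $\beta_1^2$ by weighing the contraction accumulated while $g<2$ against the expansion after $g$ crosses $2$, using per-step increments of $g$ of size $\Theta(\eta^2\lambda_1\lambda_2 z(1-z))$ and a bounded ratio between the two regimes (the paper does this discretely with factorial/Stirling estimates, $n_3,n_4$ and ratio $2.1$, rather than your integral). But your stated constants do not check out: with $g(0)$ allowed up to $1.5$ (as $\tilde{\mathcal X}(\eta)$ permits), $\int_{1.5}^{2}\ln(u-1)\,du\approx-0.153$ while $\int_{2}^{2.6}\ln(u-1)\,du\approx+0.152$, so the negative part does not dominate the positive part ``by more than the variation factor'' at $g^\star=2.6$, and the claimed net change $\le -c'/\eta^2$ is unsupported there; the balance only works up to roughly $2.37$--$2.4$, which is precisely why the paper stops at $2.37$. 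Since the theorem only needs $2.37$ and you have a factor-$50$ cushion from $\tilde{\mathcal X}(\eta)$, this part is repairable bookkeeping rather than a conceptual error.

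The genuine gap is in the self-stabilization endpoint. You obtain $S(\theta(T_3))<2/\eta+\eta$ by forcing $g=\lambda_1\eta\alpha^2$ below $g_3=(2+\eta^2)/1.12\approx1.79$ \emph{while $\alpha$ is still strictly decreasing}, and you justify the persistence of the reversed inequality by comparing a drop of $\log\beta_1^2$ of order $1/\eta$ against a headroom $\log(c^2/\text{threshold})=\Theta(\log(1/\eta))$, claiming the former is smaller. That comparison is backwards: a constant multiple of $1/\eta$ exceeds $\log(1/\eta)$ once $\eta$ is small, and the admissible range is $\eta\in[2/\lambda_1,0.1]$ with $\lambda_1$ arbitrarily large, so $\beta_1^2$ can fall back below $\lambda_2\beta_2(1-\alpha\beta_2)/(\lambda_1\alpha)$ long before $g$ reaches $1.79$; the decrease of $\alpha$ may stall with $g$ anywhere in $[1.75,2)$, where $1.12\,g/\eta$ is up to $2.24/\eta$ and your crude bound fails. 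The paper's proof needs an additional idea exactly here: it defines the endpoint as the first time $\alpha$ stops decreasing, notes that the stopping condition $\lambda_1\alpha\beta_1^2\le\lambda_2\beta_2(1-\alpha\beta_2)$ forces $\beta_1^2\le\lambda_2\eta/7$ whenever $\lambda_1\alpha^2\ge1.75/\eta$, and then bounds the Hessian quadratic form directly (splitting the case $\lambda_1\alpha^2\le1.75/\eta$, where $1.12\times1.75<2$, from the case $\lambda_1\alpha^2\in[1.75/\eta,2/\eta)$, where $\lambda_1\alpha^2+5\lambda_1\beta_1^2\le2/\eta+\eta$), obtaining $S\le2/\eta+\eta$ without ever requiring $g$ to drop below $2/1.12$. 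Without this refined estimate at the stopping time, your route does not establish the stated $2/\eta+\eta$ bound.
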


Theorem \ref{pro_and_self} give a precise characterization of the progressive sharpening phenomenon and self-stabilization, which is consistent with the empirical observation in \citet{damian2022self} and \citet{li2022analyzing}. When $t<T_2$, the sharpness keeps increasing, as $\alpha$ grows monotonically, and exceeds the stability threshold $2/\eta$. When $T_2 < t \leq T_3$, the sharpness drops, as $\alpha$ decrease monotonically. Additionally, we empirically demonstrate that the GD dynamics alternates between progressive sharpening (Phase \Romannum{2}.1) and self-stabilization (Phase \Romannum{2}.2) during EoS in Figure \ref{train_loss}. 

\subsection{Loss Decay Rate}\label{sec:convergence-rate}

In this section, we estimate a monotonic decay rate of the non-monotonic loss.
Recall the loss function defined in \eqref{eq:def-loss}:
\begin{align}\label{eq:loss-decomp}
    \loss(\theta)
    = \underbrace{\frac{1}{2}\lambda_1(\alpha\beta_{1})^2}_{L_1: ~\text{oscillatory term}} + ~\underbrace{\frac{1}{2}\lambda_2(\alpha\beta_{2}-1)^2}_{L_2:~\text{convergence term}}.
\end{align}
The loss $L$ has two components, an oscillatory term $L_1$ and a convergence term $L_2$.  The first term $L_1$ corresponds to learning the irrelevant feature $x_1$, and contains an oscillatory parameter $\beta_1$, as shown in (ii) of Lemma \ref{lemma:paras}. Empirically, $L_1$ exhibits drastic fluctuations, but rapidly drops to zero every time the GD trajectory goes back to the stable region where $S(\theta)\leq 2/\eta$, as demonstrated in Figure \ref{estimae-loss}. We thereby deduce that $L_1$ contributes to the spikes appearing in the loss dynamics, while $L_2$ dominates the loss descent.

Notably, $L_1$  barely influences the overall loss descent rate.
This motivates us to focus on $L_2$ to estimate the monotonic decay rate. Our numerical experiments in Figure \ref{estimae-loss} indicate that $L_2$ well approximates the descent trend of $L$. However, $L_2$ is still non-monotonic, due to the fluctuations of $\alpha$. Thereby, we consider $$\hat{L}(\theta) = \frac{1}{2}\bigg( 1 - \frac{\sqrt{2}\beta_2(t)}{\sqrt{\lambda_1\eta}}\bigg)^2,$$ by restricting $\alpha=\sqrt{2/(\lambda_1\eta)}$. We first show that $\hat{L}$ is a good estimate for $L_2$:

\begin{lemma}\label{lemma:lhat-l2}
    Let $\theta(0) \in \cX(\eta)$. For any $0 \leq t \le \tunstable$, where $$\tunstable := \min\{t \geq 0 : \beta_2(t) \geq 0.5\sqrt{\lambda_1\eta/2}\},$$ we have
\begin{align*}
  0.75L_2(\theta) \le  \hat{L}(\theta)  \le 3.3 L_2(\theta).
\end{align*}
Moreover, for any $0\leq t \le \tunstable$ such that $\alpha(t) \le \sqrt{2/(\lambda_1\eta)}$, we have
\begin{align*}
  0.75 L_2(\theta) \le  \hat{L}(\theta) \le L_2(\theta).
\end{align*}
\end{lemma}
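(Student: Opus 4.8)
Set $\alpha^\star := \sqrt{2/(\lambda_1\eta)}$ and write $a(t) := \alpha(t)\beta_2(t)$, $b(t) := \alpha^\star\beta_2(t)$, so that $L_2(\theta(t))$ and $\hat L(\theta(t))$ are, up to a common positive prefactor, the squares $(1-a(t))^2$ and $(1-b(t))^2$ respectively. Since $b(t) = r(t)\,a(t)$ with $r(t) := \alpha^\star/\alpha(t)$, the lemma is exactly the statement that $(1-r(t)a(t))/(1-a(t))$ stays in $[\sqrt{0.75},\sqrt{3.3}\,]$ for $0\le t\le \tunstable$, and in $(0,1]$ whenever in addition $\alpha(t)\le\alpha^\star$, i.e.\ $r(t)\ge 1$. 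So everything reduces to two ingredients: (A) pinning $r(t)$ to an explicit interval around $1$, and (B) keeping $a(t)$ and $b(t)$ bounded away from $1$.

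\textbf{(A) Locating $\alpha(t)$ via sharpness.}
Lemma~\ref{lemma:paras}(i) gives $\lambda_1\alpha^2(t)\le S(\theta(t))\le 1.12\,\lambda_1\alpha^2(t)$, and Theorem~\ref{eos_theory} confines $S(\theta(t))$ to $[1.1/\eta,1.7/\eta]$ for $t<T_1$ and to $[1.5/\eta,4.71/\eta]$ for $t\ge T_1$. Dividing, $\alpha^2(t)/(\alpha^\star)^2 = \tfrac12\lambda_1\eta\,\alpha^2(t)$ lands in an explicit sub-interval of $(0,\infty)$, hence $r(t)\in[r_{\min},r_{\max}]$ with explicit $r_{\min}<1<r_{\max}$ and, crucially, $r_{\min}>\tfrac12$. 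For $t<T_1$ one even has $S<1.7/\eta<2/\eta$, which via Lemma~\ref{lemma:paras}(i) forces $\alpha(t)<\alpha^\star$, i.e.\ $r(t)>1$; and the extra hypothesis of the second display is literally $\alpha(t)\le\alpha^\star$, i.e.\ $r(t)\ge1$.

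\textbf{(B) Bounding $\beta_2(t)$ on $[0,\tunstable]$.}
For $t<\tunstable$ the definition of $\tunstable$ gives $\beta_2(t)<0.5\sqrt{\lambda_1\eta/2}=0.5/\alpha^\star$, hence $b(t)<\tfrac12$ and $a(t)=b(t)/r(t)<0.5/r_{\min}<1$ by (A); also $\beta_2(t)\ge\beta_2(0)>0$ by Lemma~\ref{lemma:paras}(iii) and the initialization constraints, so $1-a(t),1-b(t)\in(0,1)$. At the single endpoint $t=\tunstable$, one GD step may push $\beta_2$ past $0.5/\alpha^\star$; but $|\eta\nabla_{\beta_2}L|=\eta\lambda_2\alpha(t)|1-a(t)|\le\eta\lambda_2\,r_{\min}^{-1}\alpha^\star$, and with $\lambda_1\lambda_2\le1$, $\lambda_1\ge100$, $\eta\le0.1$ this is negligible next to $0.5/\alpha^\star$, so $b(\tunstable)$ remains below a constant still close to $\tfrac12$.

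\textbf{Assembly and the main obstacle.}
With $1-b(t)$ and $1-a(t)$ both bounded below by positive constants and $r(t)$ confined to $[r_{\min},r_{\max}]$, the map $(r,a)\mapsto(1-ra)/(1-a)$ is continuous and positive on the relevant compact region, so checking its extreme values gives the interval $[\sqrt{0.75},\sqrt{3.3}\,]$; and when $r\ge1$ one trivially has $0<1-ra\le1-a$, yielding the refined bound $\le1$. Squaring gives the two displays. The delicate point is the \emph{sharp} constants $0.75$ and $3.3$: bounding $r(t)$ and $a(t)$ independently over their full ranges is too lossy, since the worst corner of the box $[r_{\min},r_{\max}]\times(0,1)$ — namely $\alpha(t)$ near the top of its allowed range while $\beta_2(t)$ is near $0.5/\alpha^\star$ — is never actually attained: by the time $\beta_2$ approaches $0.5/\alpha^\star$, progressive sharpening has already driven $\lambda_1\eta\alpha^2(t)$ toward $1.5$, so $r(t)$ is close to $1$ precisely when $a(t)$ is largest. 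Making this coupling between $\alpha(t)$ and $\beta_2(t)$ quantitative — rather than treating the two as free within their individual intervals — is where the real work of the proof lies, and it is exactly the trajectory-wise bookkeeping that the monotonicity statements of Lemma~\ref{lemma:paras} and Theorem~\ref{eos_theory} (and, for the $\tilde{\mathcal X}(\eta)$-initialized refinements, Theorem~\ref{pro_and_self}) are designed to supply.
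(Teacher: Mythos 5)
Your reduction is exactly the paper's: write $\hat{L}/L_2$ as $\bigl((1-\alpha^\star\beta_2)/(1-\alpha\beta_2)\bigr)^2$ with $\alpha^\star=\sqrt{2/(\lambda_1\eta)}$ (treating the prefactors as common), confine $\alpha(t)$ to a fixed window around $\alpha^\star$ along the trajectory, use $\alpha^\star\beta_2(t)\le 1/2$ on $[0,\tunstable]$, exploit monotonicity of $x\mapsto(1-cx)/(1-x)$ to evaluate at $x=1/2$, and note that the second display is immediate from $\alpha(t)\le\alpha^\star$. The genuine gap is that you stop precisely where the constants $0.75$ and $3.3$ must be produced: you declare that bounding $\alpha(t)$ and $\beta_2(t)$ independently is ``too lossy,'' assert that the stated constants require a quantitative coupling between $\alpha(t)$ and $\beta_2(t)$ along the trajectory, and never supply that coupling. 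As written, your argument only shows that \emph{some} constants exist by an extreme-value argument over a box; it does not prove the lemma as stated.

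Moreover, the diagnosis that forced this deferral is largely an artifact of your own step (A): routing the bound on $\alpha$ through $S(\theta)$ via Lemma~\ref{lemma:paras}(i) and Theorem~\ref{eos_theory} re-introduces the $1.12$ slack and inflates the admissible range to $\lambda_1\eta\alpha^2(t)\le 4.71$ (and $\ge 1.1/1.12$), which is why your box bound looks too weak (it yields roughly $[0.59,\,4.62]$). The paper instead uses the direct parameter-level bound $1.1\le\lambda_1\eta\alpha^2(t)\le 4.2$ (Proposition~\ref{proposition:C1}, the quantity underlying Theorem~\ref{eos_theory}), so with $c:=\alpha/\alpha^\star\in[\sqrt{1.1/2},\sqrt{4.2/2}]$ and $x:=\alpha^\star\beta_2\le 1/2$ one gets $(1-\alpha\beta_2)/(1-\alpha^\star\beta_2)\ge 2-\sqrt{2.1}\approx 0.551$, hence $\hat{L}/L_2\le(2-\sqrt{2.1})^{-2}\approx 3.30$ --- no coupling needed for the upper constant. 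On the lower constant your instinct has some substance: the same independent computation gives only $\hat{L}/L_2\ge(2-\sqrt{0.55})^{-2}\approx 0.63$, and matching $0.75$ requires the tighter range $\lambda_1\eta\alpha^2\ge 1.5$ (as after $T_1$) or a sharper argument; but your proposal does not close this either, since the claimed coupling is never established. The endpoint care at $t=\tunstable$ is a fine observation, but it is secondary to this missing quantitative step.
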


Lemma \ref{lemma:lhat-l2} proves that $L_2(\theta)$ can be controlled by $\hat{L}(\theta)$, especially when GD trajectory is in the stable region. Next, we present the decay rate for $\hat{L}$:

\begin{theorem}
    \label{converge_rate}
     Let $\theta(0) \in \cX(\eta)$. For any $0\leq t\leq \tunstable$, with $\tunstable$ defined in Lemma \ref{lemma:lhat-l2}, we have:
    \begin{align*}
    (1 - \frac{4\lambda_2}{\lambda_1})^2 \le \frac{\hat{L}(\theta(t+1))}{\hat{L}(\theta(t))} \le (1 - \frac{\lambda_2}{\lambda_1})^2.
    \end{align*}
\end{theorem}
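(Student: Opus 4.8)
The plan is to track the one-step dynamics of $\beta_2$ and translate it directly into a bound on the ratio $\hat L(\theta(t+1))/\hat L(\theta(t))$, since $\hat L(\theta) = \tfrac12\big(1 - \sqrt{2}\,\beta_2(t)/\sqrt{\lambda_1\eta}\big)^2$ depends on $\theta$ only through $\beta_2$. Writing $u(t) := 1 - \sqrt{2}\,\beta_2(t)/\sqrt{\lambda_1\eta}$, we have $\hat L(\theta(t)) = \tfrac12 u(t)^2$, so the desired conclusion is exactly $(1 - 4\lambda_2/\lambda_1) \le u(t+1)/u(t) \le (1 - \lambda_2/\lambda_1)$, provided $u(t)>0$ (which holds because $\beta_2(t) < 1/\alpha(t) \le \sqrt{\lambda_1\eta/2}$ on the relevant range, so $u(t)\in(0,1)$; I would first record this sign fact, using the initialization constraint $\beta_2<1/\alpha$ together with the bound $\alpha(t)\le\sqrt{2/(\lambda_1\eta)}$ that is available up to $\tunstable$, or rather the weaker bound needed here).

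Next I would write out the exact update. From \eqref{eq:def-loss}, $\nabla_{\beta_2}L(\theta) = \lambda_2\alpha(\alpha\beta_2 - 1) = -\lambda_2\alpha \cdot (\alpha \text{ times something})$; more precisely $\beta_2(t+1) = \beta_2(t) - \eta\lambda_2\alpha(t)(\alpha(t)\beta_2(t) - 1) = \beta_2(t) + \eta\lambda_2\alpha(t)(1 - \alpha(t)\beta_2(t))$. Since by Lemma \ref{lemma:paras}(iii) and the initialization $\alpha\beta_2 < 1$, the increment is positive, consistent with $\beta_2$ increasing and $u$ decreasing toward (but staying above) $0$. Subtracting $\sqrt2\,\beta_2/\sqrt{\lambda_1\eta}$ from $1$ on both sides gives
\begin{align*}
u(t+1) = u(t) - \frac{\sqrt2\,\eta\lambda_2\alpha(t)}{\sqrt{\lambda_1\eta}}\big(1 - \alpha(t)\beta_2(t)\big) = u(t) - \sqrt{2\eta/\lambda_1}\,\lambda_2\alpha(t)\big(1-\alpha(t)\beta_2(t)\big).
\end{align*}
So $u(t+1)/u(t) = 1 - \lambda_2 \cdot \dfrac{\sqrt{2\eta/\lambda_1}\,\alpha(t)(1-\alpha(t)\beta_2(t))}{u(t)}$, and the entire theorem reduces to showing that the quantity
$$
r(t) := \frac{\sqrt{2\eta/\lambda_1}\,\alpha(t)\,(1-\alpha(t)\beta_2(t))}{1 - \sqrt{2/(\lambda_1\eta)}\,\beta_2(t)}
$$
satisfies $1/\lambda_1 \le r(t)/\lambda_1 \le 4/\lambda_1$, i.e. $1 \le r(t) \le 4$. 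Writing $a := \sqrt{\lambda_1\eta/2}\,\alpha(t)$ (so $a \le 1$ on the range where $\alpha\le\sqrt{2/(\lambda_1\eta)}$, and more generally $a$ is order one by Lemma \ref{lemma:paras}(i) / Theorem \ref{eos_theory}) and $b := \sqrt{2/(\lambda_1\eta)}\,\beta_2(t) = 1 - u(t) \in (0,1)$, a short computation gives $\alpha(t)\beta_2(t) = a b$ and $r(t) = \dfrac{a(1-ab)}{1-b}$. Thus the crux is the elementary two-variable estimate
$$
1 \le \frac{a(1-ab)}{1-b} \le 4 \qquad\text{for the admissible ranges of } a,b.
$$

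The main obstacle is pinning down the admissible ranges of $a$ and $b$ tightly enough to make this inequality true with the stated constants $1$ and $4$. For the upper bound, when $a\le 1$ one has $a(1-ab) \le 1-ab \le 1-b\cdot a \le \dots$ — this needs care because $ab$ vs $b$ goes the wrong way; instead I expect to use that $\tunstable$ is defined precisely so that $b = \sqrt{2/(\lambda_1\eta)}\beta_2(t) \le 0.5$ (since $\beta_2(t) \le 0.5\sqrt{\lambda_1\eta/2}$ before $\tunstable$), which keeps $1-b \ge 1/2$ and controls the denominator, while the numerator $a(1-ab) \le a \le$ (some $O(1)$ bound from Theorem \ref{eos_theory}, roughly $\sqrt{4.71/2}\approx 1.53$), giving $r(t) \le 2a \le $ about $3.1 < 4$. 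For the lower bound $r(t)\ge 1$, i.e. $a(1-ab) \ge 1-b$, I would rearrange to $1 - a + ab(\dots)$; since $a \ge 1$ is guaranteed when $\lambda_1\eta\alpha^2 \ge 2$ (true after $T_1$, and by Lemma \ref{lemma:paras}(i) the sharpness bound $S\ge 1.1/\eta$ gives $\lambda_1\alpha^2 \ge 1.1/\eta$ so $a^2 \ge 0.55$), one needs to handle the early regime $a<1$ carefully, using the initialization lower bound $\alpha \ge \sqrt{1.1/(\lambda_1\eta)}$ (so $a^2\ge 0.55$, $a \ge 0.74$) and the lower bound on $\beta_2$ to show $a(1-ab)\ge 1-b$ still holds — this is where the hypotheses $\lambda_1\ge 100$, $\lambda_1\lambda_2\le 1$ and the precise initialization constraints on $\beta_2$ (e.g. $\beta_2 \ge 3/(20\alpha)$, equivalently $b \ge 3/(20a)$) will be consumed. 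Concretely, substituting $b \ge 3/(20a)$ makes $ab \ge 3/20$ and $1-b = u \le 1$, and one checks the worst case of $a(1-ab)/(1-b)$ over $a \in [0.74, 1.53]$, $b \in [3/(20a), 0.5]$ numerically falls in $[1,4]$; making that case analysis rigorous (rather than numerical) is the real work, and I would do it by monotonicity in $b$ for fixed $a$ (the function $b\mapsto a(1-ab)/(1-b)$ is monotone — differentiate once — so extremes occur at the endpoints of the $b$-interval) and then a one-variable check in $a$.
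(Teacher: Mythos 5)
Your overall strategy is the same as the paper's: rewrite the $\beta_2$ update, pass to $u(t)=1-\sqrt{2}\beta_2(t)/\sqrt{\lambda_1\eta}$, and reduce the theorem to a two-sided bound on $\alpha(t)\big(1-\alpha(t)\beta_2(t)\big)$ in terms of $1-b$ with $b=\sqrt{2/(\lambda_1\eta)}\,\beta_2(t)$, using the trajectory bounds on $\alpha$ (Theorem \ref{eos_theory}/Proposition \ref{proposition:C1}), the lower bound $b\geq \sqrt{3}/10$ from the initialization and monotonicity of $\beta_2$, and $b\leq 0.5$ from the definition of $\tunstable$. However, there is a genuine error in your reduction. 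Since $\alpha(t)=\sqrt{2/(\lambda_1\eta)}\,a$ with $a=\sqrt{\lambda_1\eta/2}\,\alpha(t)$, you have $\sqrt{2\eta/\lambda_1}\,\alpha(t)=\tfrac{2}{\lambda_1}a$, so the exact one-step relation is $u(t+1)/u(t)=1-\tfrac{2\lambda_2}{\lambda_1}\cdot\tfrac{a(1-ab)}{1-b}$, and the theorem is equivalent to $\tfrac12\leq \tfrac{a(1-ab)}{1-b}\leq 2$ — not $1\leq \tfrac{a(1-ab)}{1-b}\leq 4$ as you state (you dropped the factor $2/\lambda_1$, and your phrase ``$1/\lambda_1\le r(t)/\lambda_1\le 4/\lambda_1$'' also misstates the requirement, which is $r(t)\in[1/\lambda_1,4/\lambda_1]$).

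This matters because the lower half of your target is false on the admissible region, so the step you defer to a ``numerical worst-case check'' would fail: at $\lambda_1\eta\alpha^2=1.1$ (so $a\approx 0.742$) and $b\approx 0.2$ one gets $\tfrac{a(1-ab)}{1-b}\approx 0.79<1$, and at $a\approx 1.53$, $b=0.5$ one gets $\approx 0.72<1$; hence your claim that the ratio ``numerically falls in $[1,4]$'' is incorrect. With the corrected target $[1/2,2]$, the inequality does hold on the box $a\in[\sqrt{0.55},\sqrt{2.1}]$, $b\in[\sqrt{3}/10,\,0.5]$ (the minimum of $\tfrac{a(1-ab)}{1-b}$ over this box is about $0.72$ and the maximum about $1.4$), and your endpoint/monotonicity argument in $a$ and $b$ would close it; this corrected statement is exactly the displayed inequality in the paper's proof, namely $\tfrac12\sqrt{2/(\lambda_1\eta)}\,(1-b)\leq \alpha(t)(1-\alpha(t)\beta_2(t))\leq 2\sqrt{2/(\lambda_1\eta)}\,(1-b)$, which the paper then iterates through the $\beta_2$ update to obtain the stated decay factors.
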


Theorem \ref{converge_rate} proves $\hat{L}$ decays in a linear rate upper bounded by $(1 - \lambda_2/\lambda_1)^2$.
The rate becomes faster as $\lambda_2/\lambda_1$ grows larger. This reflects how the relative scale between features influences the loss decay. As $\lambda_2/\lambda_1$ increases, the model becomes more sensitive to the relevant feature $x_2$, leading to faster loss descent.
In addition, Figure \ref{estimae-loss} shows that the rate $(1 - 2\lambda_2/\lambda_1)^2$, which lies between the upper and lower bounds in Theorem \ref{converge_rate}, precisely estimates the decreasing speed of $L(\theta)$. 

Our analysis primarily focuses on the interval $[0,\tunstable]$. While this formulation might appear constrained, it captures a substantial period of the optimization process, yielding valuable insights into the GD dynamics. Notably, Figure \ref{estimae-loss} demonstrates that the estimated decay rate maintains its validity well beyond the theoretically analyzed timeframe, suggesting broader applicability of our findings.

\begin{figure}[htb!]
\begin{center}
{\includegraphics[width=0.45\columnwidth]{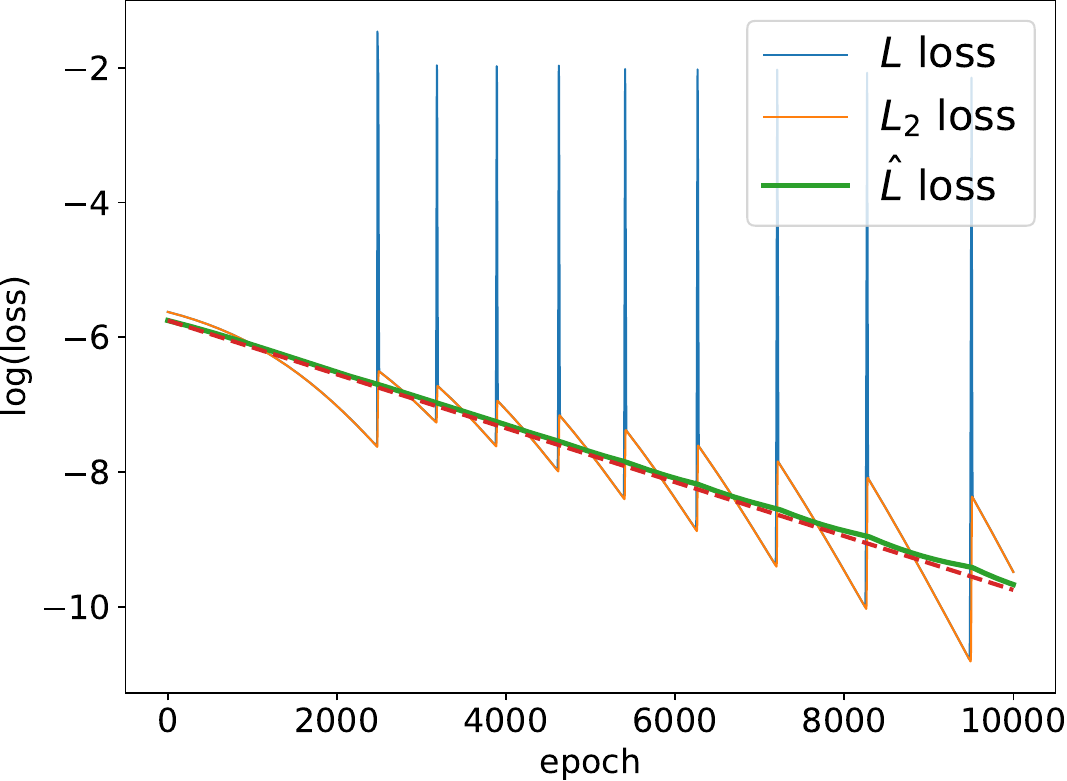}
\hspace{0.05in}
\includegraphics[width=0.46\columnwidth]{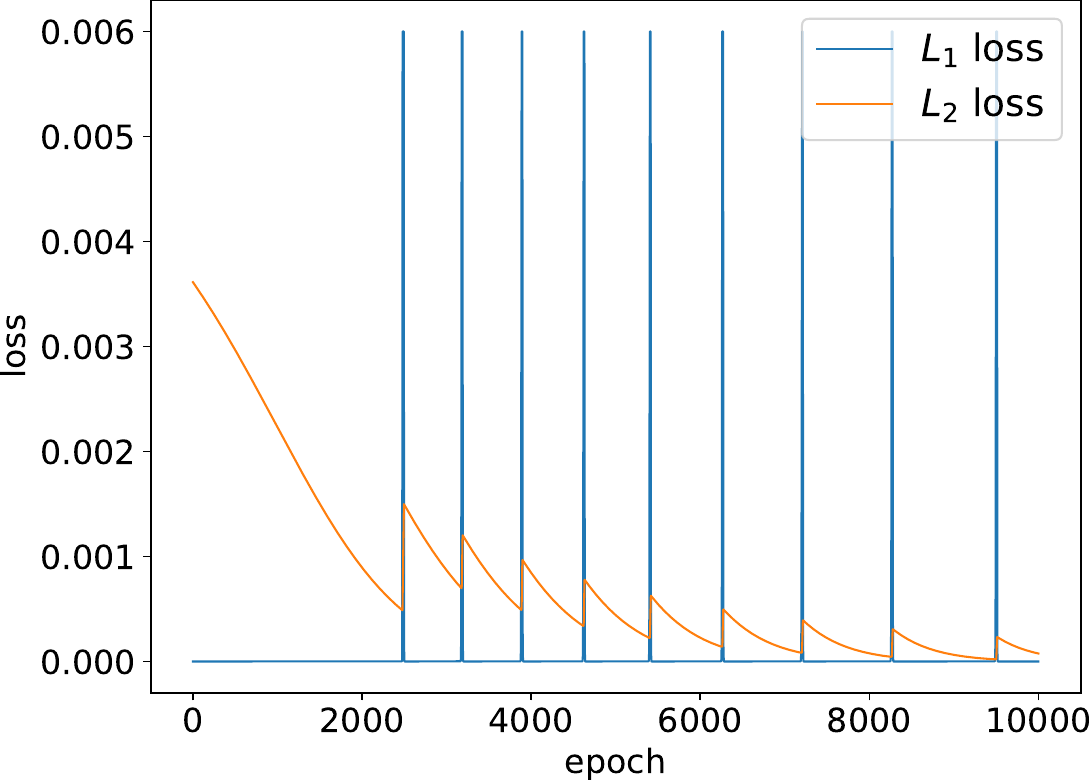}}
\caption{Same setting as Figure \ref{train_loss}.
In the left figure we plot $L_(\theta)$, $L_2(\theta)$ and $\hat{L}(\theta)$ in log scale. We can see that $\hat{L}(\theta)$ nicely reflects the decay rate for $L_2(\theta)$. The slope of the red dashed line is $2\log(1-\frac{2\lambda_2}{\lambda_1})$, which nicely reflect the decrease rate of $\hat{L}(\theta)$. In the right figure we plot $L_1(\theta)$ and  $L_2(\theta)$. In most time $L_1(\theta)$ is near zero unless spikes occur.}
\label{estimae-loss}
\end{center}
\end{figure}
\section{Connection with Existing Results}

We present the connections of our theory to 
the gradient flow solution (GFS) proposed in \citet{kreisler2023gradient}, and the ``constrained trajectory" in \citet{damian2022self}.

\subsection{Connection with Gradient Flow Solution}

Firstly, we introduce gradient flow (GF), which is the continuous form of GD:
\begin{align*}
    \Dot{\theta}(t) = -\nabla L(\theta(t)).
\end{align*}
For any initialization $\theta$, define the gradient flow solution (\textbf{GFS}) $\cS_{\rm GF}(\theta)$ as the limit of the GF trajectory, and denote $\phi(\theta)$ as the GFS sharpness, i.e. the loss sharpness at $\cS_{\rm GF}(\theta)$.

\citet{kreisler2023gradient} study the GFS sharpness along the GD trajectory on scalar networks, proving its monotonic decrease. We extend this analysis to our two-layer neural network with two-dimensional input. Herein, we verify that GD reduces the GFS sharpness in this more general setting, while further elucidating the distinction between GFS sharpness and GD sharpness.


The following Lemma characterizes the GFS sharpness when the GF starts at  $\theta(0) \in \mathcal{X}(\eta)$.

\begin{lemma}
    \label{gf0}
    Let $\eta\in [2/\lambda_1, 0.1]$. For any $\alpha(0),\beta(0) \in \mathcal{X}(\eta)$ satisfying $\alpha(0) = \beta_2(0)$, there exists $\beta_1(0) \in \RR$ such that $\theta(0)=(\alpha(0),\beta_1(0),\beta_2(0)) \in \mathcal{X}(\eta)$. Moreover, the GF initialized at $\theta(0)$ converges to a solution with the following sharpness:
    \begin{align*}
    \phi(\theta(0)) \geq \lambda_1 - 1.
\end{align*}
\end{lemma}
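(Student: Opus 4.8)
The plan is to track the gradient flow through the conserved ``balancedness'' quantity $\alpha^2-\beta_1^2-\beta_2^2$ of linear networks, identify its limit as an explicit global minimizer, and then read off the sharpness there. For the easy part of the statement: if $\alpha(0)=\beta_2(0)$ satisfies the first two defining inequalities of $\cX(\eta)$, then $\beta_2(0)<1/\alpha(0)$ forces $\alpha(0)\beta_2(0)=\alpha(0)^2<1$, so the right endpoint $\tfrac{\lambda_2\beta_2(0)}{\lambda_1\alpha(0)}(1-\alpha(0)^2)$ of the third inequality is positive and the admissible range for $\beta_1(0)^2$ is a nonempty subinterval of $(0,\infty)$; any such $\beta_1(0)$, e.g.\ $\beta_1(0)=\sqrt{(\lambda_2/\lambda_1)(1-\alpha(0)^2)}$, gives a point of $\cX(\eta)$, and the bound below holds for every such choice.

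Next I would establish the conservation law: with $\dot\alpha=-\lambda_1\alpha\beta_1^2-\lambda_2\beta_2(\alpha\beta_2-1)$, $\dot\beta_1=-\lambda_1\alpha^2\beta_1$, $\dot\beta_2=-\lambda_2\alpha(\alpha\beta_2-1)$, the combination $2\alpha\dot\alpha-2\beta_1\dot\beta_1-2\beta_2\dot\beta_2$ cancels term by term, so $\alpha(t)^2-\beta_1(t)^2-\beta_2(t)^2$ is constant and, since $\alpha(0)=\beta_2(0)$, equals $-\beta_1(0)^2$. Moreover $\beta_1(t)=\beta_1(0)\exp\big(-\lambda_1\int_0^t\alpha(s)^2\,ds\big)$, so $|\beta_1(t)|\le|\beta_1(0)|$, whence $\beta_2(t)^2=\alpha(t)^2+\beta_1(0)^2-\beta_1(t)^2\ge\alpha(t)^2$ for all $t$.

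Then I would show the flow converges to a global minimizer $(\alpha^*,0,1/\alpha^*)$ with $\alpha^*\ne0$. Since $L$ is nonincreasing along the flow and, by a short estimate using the $\beta_1$-constraint, $L(\theta(t))\le L(\theta(0))\le\tfrac12\lambda_2(1-\alpha(0)^2)<\tfrac12\lambda_2$, the inequality $\beta_2^2\ge\alpha^2$ gives $\alpha^4\le(\alpha\beta_2)^2<4$, bounding the whole trajectory; as $L$ is a polynomial, the \L{}ojasiewicz gradient inequality for real-analytic functions then yields convergence to a single critical point. A direct solution of $\nabla L=0$ shows the critical set is $\{(0,b,0):b\in\RR\}\cup\{(a,0,1/a):a\ne0\}$, and on the first (saddle) family $L=\tfrac12\lambda_2$, strictly larger than $\lim_t L(\theta(t))$, so the limit belongs to the second family.

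Finally, evaluating the conserved quantity at the limit gives $(\alpha^*)^2-(\alpha^*)^{-2}=-\beta_1(0)^2$, hence $(\alpha^*)^2\le1$ and $(\alpha^*)^4=1-\beta_1(0)^2(\alpha^*)^2\ge1-\beta_1(0)^2\ge1-\lambda_2/\lambda_1\ge1-1/\lambda_1^2\ge(1-1/\lambda_1)^2$, so $(\alpha^*)^2\ge1-1/\lambda_1$, using $\beta_1(0)^2\le(\lambda_2/\lambda_1)(1-\alpha(0)^2)\le\lambda_2/\lambda_1$ together with $\lambda_1\lambda_2\le1$ and $\lambda_1\ge100$. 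At $(\alpha^*,0,1/\alpha^*)$ the off-diagonal entry $2\lambda_1\alpha^*\beta_1$ of $H$ vanishes, so $(0,1,0)$ is an eigenvector of $H$ with eigenvalue $\lambda_1(\alpha^*)^2$; therefore $\phi(\theta(0))=S(\cS_{\rm GF}(\theta(0)))\ge\lambda_1(\alpha^*)^2\ge\lambda_1-1$. The main obstacle is the convergence step: beyond the routine boundedness argument, one must exclude the saddle family $\{(0,b,0)\}$ as a possible limit of the flow, and this is exactly where the strict inequality $L(\theta(0))<\tfrac12\lambda_2$ is needed; the remaining steps are elementary algebra.
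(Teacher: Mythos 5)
Your proposal is correct, and its core is the same as the paper's: both arguments hinge on the conserved balancedness $\alpha^2-\beta_1^2-\beta_2^2$ along gradient flow (you derive it directly; the paper imports it from Theorem 2.1 of \citet{du2018algorithmic}), evaluate it at the limiting interpolating point $\alpha^*\beta_2^*=1$, $\beta_1^*=0$, and conclude $\lambda_1(\alpha^*)^2\geq \lambda_1-1$. The differences are worth noting. First, you actually prove convergence to a global minimizer (bounded trajectory via $\beta_2^2\ge\alpha^2$ and the loss level, \L{}ojasiewicz convergence to a single critical point, and exclusion of the saddle family $\{(0,b,0)\}$ because $L(\theta(0))\le\tfrac12\lambda_2(1-\alpha(0)^2)<\tfrac12\lambda_2$); the paper simply asserts $\beta_1(\infty)=0$ and $\alpha(\infty)\beta_2(\infty)=1$ without argument, so your treatment closes a step the paper leaves implicit. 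Second, since only a lower bound on $\phi$ is required, you bypass the paper's computation that at a minimizer $S(\theta)=\max\{\lambda_1\alpha^2,\lambda_2(\alpha^2+\beta_2^2)\}$ together with the check of which branch is active, using instead only that $(0,1,0)$ is an eigenvector of $H$ with eigenvalue $\lambda_1(\alpha^*)^2$; this is shorter, whereas the paper's fuller computation buys the exact identification $\phi(\theta(0))=\lambda_1\alpha^2(\infty)$ and a two-sided bound, which it reuses in the proof of Theorem \ref{gfb}. Your algebra for the final bound ($(\alpha^*)^4=1-\beta_1(0)^2(\alpha^*)^2\ge 1-\lambda_2/\lambda_1\ge(1-1/\lambda_1)^2$) is valid, as is the nonemptiness argument for the choice of $\beta_1(0)$.
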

Lemma \ref{gf0} reveals that when initialized from a subset of $\cX(\eta)$, GF converges to sharp minima with sharpness lower bounded by $\lambda_1-1$. This result show that our analysis is beyond that in \citet{kreisler2023gradient}, as they requires $\phi(0) \le \frac{2\sqrt{2}}{\eta}$. In particular, given $\lambda_1\geq 100$, when $\eta > 4/(\lambda_1-1)$, the GFS sharpness $\phi(0) > 4/\eta$.  While GD with large $\eta$ converges to flat minima whose sharpness is bounded by $2/\eta$ (Theorem \ref{converge}), GF exhibits different trajectories and converges to solutions with greater sharpness.

We next analyze GF initialized from points $\theta(t)$ along the GD trajectory. The following theorem demonstrates that GD decreases the GFS sharpness, which is consistent with the observations in \citet{kreisler2023gradient}.

\begin{theorem}
    \label{gfb}
    For any $t \geq 0$, we have 
    \begin{align*}
   \phi(\theta(t)) &\geq \frac{1 - \lambda_1\eta\beta_2^2(t)}{2\eta} +\frac{\lambda_1\sqrt{4 + (\frac{1}{\eta\lambda_1} - \beta_2^2(t))^2}}{2},\\
    \phi(\theta(t)) &\le \frac{4.2 - \lambda_1\eta\beta_2^2(t)}{2\eta} +\frac{\lambda_1\sqrt{4 + (\frac{4.2}{\eta\lambda_1} - \beta_2^2(t))^2}}{2}, 
    \end{align*}
    where both bounds monotonically decrease with time $t \geq 0$.
\end{theorem}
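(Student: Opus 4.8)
The plan is to explicitly compute the gradient flow solution $\cS_{\rm GF}(\theta(t))$ and its sharpness $\phi(\theta(t))$, then show the resulting expression is monotone in $t$ by invoking the monotonicity of $\beta_2$ from Lemma \ref{lemma:paras}(iii). The key structural fact is that gradient flow on this loss conserves the ``layer imbalance'' quantities $\alpha^2 - \beta_1^2 - \beta_2^2$ (or the appropriate per-coordinate balance). For a width-one linear network, gradient flow drives $\alpha\beta_1 \to 0$ (since learning the irrelevant feature $x_1$ only costs, and the flow will kill whichever factor is smaller, but in fact along the flow $L_1 \to 0$ forces $\alpha\beta_1\to 0$) and $\alpha\beta_2 \to 1$ (fitting $x_2$). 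Combined with the conserved balance invariant evaluated at $\theta(t)$, this pins down the limiting $(\alpha_\infty, \beta_{1,\infty}, \beta_{2,\infty})$ up to the invariant's value, and hence pins down $\lambda_1\alpha_\infty^2$ and $\lambda_2\alpha_\infty^2$, which are the diagonal Hessian entries at the limit. At a global minimum $\alpha\beta_1 = 0$ and $\alpha\beta_2 = 1$, so the Hessian $H$ becomes block-structured and its top eigenvalue can be written in closed form in terms of $\alpha_\infty^2$; substituting $\alpha_\infty^2$ in terms of the invariant (which depends on $\theta(t)$ only through $\beta_2(t)$, after using the precise form of the conserved quantity and the initialization constraints that make $\alpha(t)^2\beta_2(t)^2$ and $\alpha(t)^2\beta_1(t)^2$ controlled) yields the two-sided bound.

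Concretely, I would proceed as follows. First, identify the conserved quantities of GF: $\frac{d}{dt}(\alpha^2 - \beta_1^2 - \beta_2^2) = 0$, so $\alpha_\infty^2 - \beta_{1,\infty}^2 - \beta_{2,\infty}^2 = \alpha(t)^2 - \beta_1(t)^2 - \beta_2(t)^2 =: c(t)$. Second, show that GF from $\theta(t)$ converges to a global minimum with $\alpha_\infty\beta_{1,\infty} = 0$ and $\alpha_\infty\beta_{2,\infty} = 1$; given the region of the trajectory (where $\alpha, \beta_2 > 0$ and $\beta_1$ small), the relevant limit has $\beta_{1,\infty} = 0$, so $\alpha_\infty^2 - \beta_{2,\infty}^2 = c(t)$ with $\alpha_\infty\beta_{2,\infty} = 1$, giving $\alpha_\infty^2 = \tfrac{c(t)}{2} + \tfrac12\sqrt{c(t)^2 + 4}$. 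Third, at this minimum compute the top eigenvalue of $H$: with $\beta_1 = 0$ the Hessian's $(1,2)$ entry vanishes, the $\beta_1$-block decouples with eigenvalue $\lambda_1\alpha_\infty^2$, and the remaining $2\times 2$ block in $(\alpha,\beta_2)$ has entries $\lambda_2\beta_{2,\infty}^2$, $\lambda_2\alpha_\infty^2$, and off-diagonal $\lambda_2\alpha_\infty\beta_{2,\infty} = \lambda_2$; since $\lambda_1 \gg \lambda_2$ the top eigenvalue is $\lambda_1\alpha_\infty^2$, so $\phi(\theta(t)) = \lambda_1\alpha_\infty^2 = \tfrac{\lambda_1 c(t)}{2} + \tfrac{\lambda_1}{2}\sqrt{c(t)^2+4}$. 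Fourth, bound $c(t) = \alpha(t)^2 - \beta_1(t)^2 - \beta_2(t)^2$: using Theorem \ref{eos_theory} and the initialization/trajectory invariants, $\alpha(t)^2 \in [\tfrac{1}{\lambda_1\eta}, \tfrac{4.71}{\lambda_1\eta}]$-type bounds and $\beta_1(t)^2$ is negligible of order $\lambda_2/\lambda_1$, so $c(t) \in [\tfrac{1}{\lambda_1\eta} - \beta_2^2(t) - o(1),\ \tfrac{4.2}{\lambda_1\eta} - \beta_2^2(t)]$; plugging the endpoints into $\phi = \tfrac{\lambda_1 c}{2} + \tfrac{\lambda_1}{2}\sqrt{c^2+4}$ and simplifying $\lambda_1 c = \tfrac{1}{\eta} - \lambda_1\beta_2^2(t)$ (resp. $\tfrac{4.2}{\eta}$) gives exactly the displayed bounds. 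Fifth, monotonicity in $t$: the map $c \mapsto \tfrac{\lambda_1}{2}(c + \sqrt{c^2+4})$ is increasing in $c$, and both bounding expressions are decreasing functions of $\beta_2^2(t)$ (one checks $\frac{d}{ds}[\tfrac{a - \lambda_1\eta s}{2\eta} + \tfrac{\lambda_1}{2}\sqrt{4 + (\tfrac{a}{\eta\lambda_1} - s)^2}] < 0$ for the relevant range of $s = \beta_2^2(t)$, since the negative linear term dominates when $\tfrac{a}{\eta\lambda_1} - s$ stays bounded); combined with $\beta_2(t)$ increasing (Lemma \ref{lemma:paras}(iii)), both bounds decrease in $t$.

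I expect the main obstacle to be Step 2 — rigorously establishing that gradient flow from an arbitrary point $\theta(t)$ on the GD trajectory actually converges, and converges to the ``right'' global minimum with $\beta_{1,\infty} = 0$ rather than $\alpha_\infty = 0$ (the loss also has bad critical points / the manifold $\alpha = 0$ is invariant). This requires showing $\alpha(t) > 0$ stays bounded away from zero along GF (which should follow from the conserved balance $\alpha^2 = \beta_1^2 + \beta_2^2 + c(t)$ together with $c(t) > 0$, forcing $\alpha_\infty^2 \geq c(t) > 0$), and that $\beta_1$ cannot blow up (again from balance). The secondary nuisance is Step 5's monotonicity computation: one must verify the derivative inequality holds on the full range of $\beta_2^2(t)$ that occurs along the trajectory — this is where the constant $4.2$ versus $1$ in the two bounds matters, and where I would lean on the bounds $\beta_2(t) < 1/\alpha(t)$ and $\alpha(t)^2 \le 4.71/(\lambda_1\eta)$ from the earlier theorems to keep $\tfrac{a}{\eta\lambda_1} - \beta_2^2(t)$ in a controlled range so the square-root term's derivative never overtakes the $-\lambda_1/2$ from the linear term.
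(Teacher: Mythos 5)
Your proposal follows essentially the same route as the paper's proof: the GF-conserved balance $\alpha^2-\beta_1^2-\beta_2^2$, convergence of GF to an interpolating minimum with $\beta_{1,\infty}=0$ and $\alpha_\infty\beta_{2,\infty}=1$, the closed form $\alpha_\infty^2=\tfrac{1}{2}(c+\sqrt{c^2+4})$ with $\phi(\theta(t))=\lambda_1\alpha_\infty^2$, two-sided bounds on $c(t)$ from the trajectory bounds on $\lambda_1\eta\alpha^2(t)$ together with the clipping bound on $\beta_1^2(t)$, and monotonicity from Lemma \ref{lemma:paras}(iii). Two small corrections to side remarks: along the trajectory $\beta_2(t)\ge\alpha(t)$, so $c(t)\le-\beta_1^2(t)\le 0$, hence your suggested route for ruling out the $\alpha_\infty=0$ limit via $c(t)>0$ cannot work (the paper does not prove this step either; it invokes balance preservation from \citet{du2018algorithmic} and takes convergence to the interpolating solution for granted, exactly the obstacle you flagged), and the correct ingredients for the constants are $\lambda_1\eta\alpha^2(t)\le 4.2$ from Proposition \ref{proposition:C1} (not the $4.71/\eta$ sharpness bound) and $\beta_1^2(t)\le 5/(18\lambda_1)$ from clipping (it is not of order $\lambda_2/\lambda_1$ during spikes), absorbed by the slack between $1.1$ and $1$. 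Finally, the monotonicity step needs no range control at all: $\gamma\mapsto\gamma+\sqrt{4+\gamma^2}$ is increasing on all of $\RR$, which is the paper's one-line argument, so both bounds decrease simply because $\beta_2(t)$ increases.
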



Here the monotonic decrease of the upper and lower bound in Theorem \ref{gfb} is because of the monotonic increase of $\beta_2(t)$, as proved in Lemma \ref{lemma:paras} (iii).
In Figure \ref{GFS}, we plot the upper and lower bound in Theorem \ref{gfb}. This demonstrates that both bounds do decrease monotonically with time, and implies a near-monotonic decrease of the GFS sharpness.

\begin{figure}[htb!]
\begin{center}
{\includegraphics[width=0.6\columnwidth]{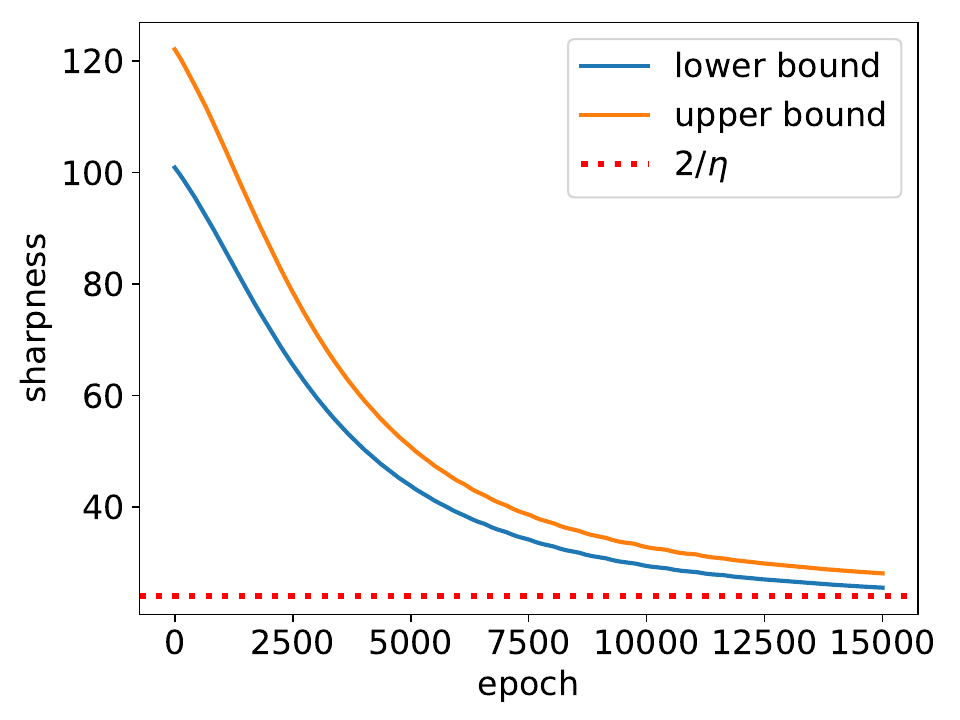}}
\caption{We choose learning rate $\eta$ to be $\frac{1}{12}$, and show the evolvement of upper and lower bound in Theorem \ref{gfb}}
\label{GFS}
\end{center}
\end{figure}

\begin{remark}
Theorem \ref{gfb} further illuminates the inconsistency of GFS sharpness with GD sharpness. While GFS sharpness starts high and decreases along the GD trajectory, the GD trajectory itself exhibits a different behavior as detailed in Section \ref{sec:gd-dnm}: its sharpness first increases monotonically, and then fluctuates around $2/\eta$ during EoS without showing a consistent decreasing trend.
\end{remark}

Figure \ref{GFM} illustrates GF trajectories initiated from various points along the GD path. Recall that a greater $\alpha$ indicates higher sharpness, as shown in (i) of Lemma \ref{lemma:paras}. The visualization demonstrates how the GFSs transition from the highly sharp Minimizer 0 to Minimizer 3 with moderate sharpness $2/\eta$.

\begin{figure}[htb!]
\begin{center}
{\includegraphics[width=0.6\columnwidth]{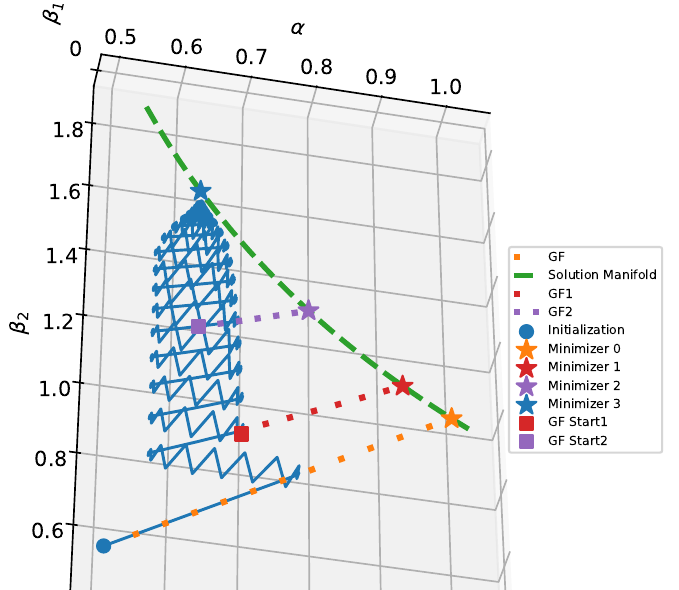}}
\caption{Same setting as Figure \ref{train_loss}. We plot the GFs starting from different points on the GD trajectory and the minimizers these GFs  converge to.}
\label{GFM}
\end{center}
\end{figure}


\subsection{Connection with Constrained Trajectory}
\label{constrained_section}

In this section, we draw connections to the ``constrained trajectory'' framework introduced by \citet{damian2022self}, who show that GD at the edge of stability implicitly follows projected gradient descent (PGD) under the constraint $S(\theta) \leq 2/\eta$.
Specifically, they consider PGD on a so-called stable set $\cM$, which is defined as 
\begin{align*}
    \cM(\eta) := \left\{\theta:\, S(\theta) \le \frac{2}{\eta}\quad\textrm{and}\quad\nabla L(\theta)\cdot u(\theta) = 0 \right\},
\end{align*}
where $u(\theta)$ is the eigenvector associated with the largest eigenvalue of Hessian $H(\theta)$. Then the PGD on $\cM$ is formulated as
\begin{align*}
    \theta_{0}^{\dagger}:=\Pi_{\mathcal{M}}\left(\theta_{0}\right) ~~\text{and}~~ \theta_{t+1}^{\dagger}:=\Pi_{\mathcal{M}}\Big(\theta_{t}^{\dagger}-\eta \nabla L\big(\theta_{t}^{\dagger}\big)\Big),
\end{align*}
where the projection on $\cM$ is defined as $
\Pi_{\mathcal{M}}(\theta) := \argmin_{\theta'\in \mathcal{M}}\left \| \theta - \theta' \right \|.
$ The trajectory of the PGD is referred to as ``constrained trajectory''.

As mentioned in \citet{damian2022self}, the  constrained trajectory is mainly determined by the sharpness condition, $S(\theta) \leq 2/\eta$, while the other condition $\nabla L(\theta)\cdot u(\theta) = 0$ of $\cM$ is included to ensure the constrained trajectory stable, without affecting the stationary points of PGD on $\cM$. Following the same principle, we choose our stable set as:
\begin{align*}
    \cM^\dagger(\eta) =\!  \left\{\theta:\! \frac{1}{\sqrt{\lambda_1\eta}} \!\le\! \alpha \!\le \!\sqrt{\frac{2}{\lambda_1\eta}}; \beta_1 = 0; 0 < \alpha\beta_2 \le 9\! \right\}\!.
\end{align*}
The first condition $\alpha$ corresponds directly to maintaining sharpness near $2/\eta$ (Lemma \ref{lemma:paras}). The second condition sets the oscillatory term $\beta_1$ to zero. Moreover, the third condition ensures parameter in $\cM^\dagger(\eta)$ is not too far from solution manifold. The following lemma proves $\cM^\dagger \subset \cM$, and provides an explicit update rule for the PGD on $\cM^\dagger$.

\begin{lemma}
    \label{stable set}
    Let $\eta\in [2/\lambda_1, 0.1]$ and $\theta(0) \in \mathcal{X}(\eta)$. Then
 $\cM^\dagger(\eta)$ is a subset of $\mathcal{M}(\eta)$, and the PGD on $\cM^\dagger(\eta)$ has the following update: $\beta^\dagger_1(t) \equiv 0$ and
\begin{align*}
    \alpha^{\dagger}(t+1) &= \mathbf{Clip}\left(\alpha^{\dagger}(t) -\eta\nabla_\alpha L(\theta^\dagger(t)), \sqrt{\frac{2}{\eta\lambda_1}} \right),
    \\
    \beta_2^{\dagger}(t+1) &= \beta_2^{\dagger}(t) -\eta\nabla_{\beta_2} L(\theta^\dagger(t)).
\end{align*}
\end{lemma}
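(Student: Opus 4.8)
The plan is to verify the two claims separately: first that $\cM^\dagger(\eta) \subseteq \cM(\eta)$, and then that the projection $\Pi_{\cM^\dagger}$ applied to a gradient step reduces to the stated coordinate-wise update. For the inclusion, I would take an arbitrary $\theta = (\alpha, 0, \beta_2) \in \cM^\dagger(\eta)$ and check both defining conditions of $\cM(\eta)$. For the sharpness bound $S(\theta) \le 2/\eta$: with $\beta_1 = 0$ the Hessian $H(\theta)$ becomes block-diagonal — the $(\alpha,\beta_1)$ block decouples from... wait, more precisely, the off-diagonal entry $2\lambda_1\alpha\beta_1$ vanishes, leaving $H(\theta)$ with a $2\times 2$ block in the $(\alpha,\beta_2)$ coordinates and an isolated diagonal entry $\lambda_1\alpha^2$ in the $\beta_1$ coordinate. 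I would compute the eigenvalues of the $2\times 2$ block explicitly (it has diagonal entries $\lambda_1\beta_1^2 + \lambda_2\beta_2^2 = \lambda_2\beta_2^2$ and $\lambda_2\alpha^2$, and off-diagonal $2\lambda_2\alpha\beta_2 - \lambda_2$) and use $\lambda_1\lambda_2 \le 1$, $\alpha^2 \le 2/(\lambda_1\eta)$, and $\alpha\beta_2 \le 9$ to bound each eigenvalue; the dominant one should be $\lambda_1\alpha^2 \le 2/\eta$ from the $\beta_1$ coordinate, and the $2\times 2$ block's eigenvalues should be controlled by $O(\lambda_2) \cdot \mathrm{poly}(\alpha\beta_2)$ which is much smaller since $\lambda_2 \le 1/\lambda_1 \le 1/100$. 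For the orthogonality condition $\nabla L(\theta) \cdot u(\theta) = 0$: since $\lambda_1\alpha^2$ dominates (by the eigenvalue computation just done, together with Lemma \ref{lemma:paras}(i)-type reasoning), the top eigenvector $u(\theta)$ is $(0,1,0)$; and $\nabla_{\beta_1} L(\theta) = \lambda_1 \alpha^2 \beta_1 = 0$ because $\beta_1 = 0$, so the inner product vanishes identically.

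Next, for the update rule, I would argue that starting from $\theta^\dagger(t) \in \cM^\dagger(\eta)$ with $\beta_1^\dagger(t) = 0$, the gradient step $\theta^\dagger(t) - \eta\nabla L(\theta^\dagger(t))$ leaves the $\beta_1$ coordinate at $0$ (since $\nabla_{\beta_1} L = \lambda_1\alpha^2\beta_1 = 0$), so the unprojected iterate is $(\tilde\alpha, 0, \tilde\beta_2)$ with $\tilde\alpha = \alpha^\dagger(t) - \eta\nabla_\alpha L$, $\tilde\beta_2 = \beta_2^\dagger(t) - \eta\nabla_{\beta_2} L$. Then I need to show that the Euclidean projection of $(\tilde\alpha, 0, \tilde\beta_2)$ onto $\cM^\dagger(\eta)$ keeps $\beta_2$ unchanged and only clips $\alpha$ to the interval $[1/\sqrt{\lambda_1\eta}, \sqrt{2/(\lambda_1\eta)}]$. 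The key sub-claim is that after the step, $\tilde\alpha \ge 1/\sqrt{\lambda_1\eta}$ automatically (so only the upper clip at $\sqrt{2/(\lambda_1\eta)}$ can bind — matching the one-sided $\mathbf{Clip}$ in the statement, which clips only from above), and that the constraints $0 < \alpha\beta_2 \le 9$ remain slack so they don't distort the projection. Here I would invoke the analysis of $\alpha$'s dynamics from the proof of Theorem \ref{eos_theory} / Lemma \ref{lemma:paras} — on $\cX(\eta)$ the parameter $\alpha$ stays in a controlled range, and $\beta_2$ increases but $\alpha\beta_2$ stays bounded (since $\beta_2 < 1/\alpha$ is roughly preserved, and the third condition allows up to $9$). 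Because the constraint set $\cM^\dagger(\eta)$ is, in the relevant regime, just a box product $[\ell,r] \times \{0\} \times (\text{interval in }\beta_2\text{ that doesn't bind})$, the projection separates coordinate-wise, giving exactly the clipped-$\alpha$, untouched-$\beta_2$ update.

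The main obstacle I anticipate is the projection argument, specifically handling the coupled constraint $0 < \alpha\beta_2 \le 9$: this is not a box constraint, so in principle projecting onto $\cM^\dagger(\eta)$ could move both $\alpha$ and $\beta_2$ simultaneously. I would need to show this coupled constraint is strictly inactive throughout the trajectory of the PGD, i.e. $\alpha^\dagger(t)\beta_2^\dagger(t) < 9$ and $> 0$ for all $t$, so that locally $\cM^\dagger(\eta)$ looks like the simple box $[1/\sqrt{\lambda_1\eta}, \sqrt{2/(\lambda_1\eta)}] \times \{0\} \times \RR_{>0}$ and the projection genuinely decouples. Establishing this invariant likely requires a short induction: assuming $\alpha^\dagger(t)\beta_2^\dagger(t) \le 9$, show the gradient-then-clip step preserves it, using the explicit forms $\nabla_\alpha L = \lambda_1\alpha\beta_1^2 + \lambda_2\alpha\beta_2^2 \cdots$ (which at $\beta_1 = 0$ simplifies) and $\nabla_{\beta_2} L = \lambda_2\alpha(\alpha\beta_2 - 1)$, together with $\lambda_2/\lambda_1$ being tiny. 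A secondary, more routine obstacle is the careful eigenvalue bound for the $2\times 2$ block to confirm $S(\theta) \le 2/\eta$ with the stated constants; this is a direct computation using the quadratic formula and the parameter constraints, and I expect the slack from $\lambda_1 \ge 100$, $\lambda_1\lambda_2 \le 1$ to make it comfortable.
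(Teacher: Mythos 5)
Your proposal is correct and follows essentially the same route as the paper's proof: with $\beta_1=0$ the Hessian block-decouples, $(0,1,0)$ is an exact eigenvector with eigenvalue $\lambda_1\alpha^2$ dominating the $2\times 2$ $(\alpha,\beta_2)$-block, and $\nabla_{\beta_1}L=\lambda_1\alpha^2\beta_1=0$ gives the orthogonality condition, while the update rule is obtained exactly as you plan, by an induction showing the product $\alpha^\dagger\beta_2^\dagger$ stays strictly below its bound (the paper in fact maintains $\alpha^\dagger(t)\beta_2^\dagger(t)<1$) so that the projection reduces to clipping $\alpha$ and leaving $\beta_2$ untouched. The obstacles you flag (the coupled constraint and the one-sided clip) are precisely the points the paper's induction handles, so no new ideas are needed.
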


Lemma \ref{stable set} implies that the PGD on $\cM^\dagger$ can be regarded as GD with $\beta_1$ restricted to zero and $\alpha$ clipped.
We next characterize the PGD trajectory, derive its convergence rate during EoS, and demonstrate its value in understanding GD dynamics.

\begin{theorem}\label{constrained_decay}
Let $\tilde{t} := \min\{ t\geq 0: \alpha^{\dagger}(\tilde{t}) = \sqrt{2/(\eta\lambda_1)}$. For any $0 \leq t < \tilde{t}$,  the PGD on $\cM^\dagger$ satisfies
\begin{align*}
    \alpha^\dagger(t + 1) > \alpha^\dagger(t) \quad\text{and} \quad L(\theta^\dagger(t+1)) < L(\theta^\dagger(t)).
\end{align*}
For any $t \ge \tilde{t}$, we have
    \begin{align}
    \label{decay}
    L(\theta^\dagger(t+1)) = \left(1 - \frac{2\lambda_2}{\lambda_1} \right)^2L(\theta^\dagger(t))
\end{align}
\end{theorem}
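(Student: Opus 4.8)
The plan is to directly analyze the two-dimensional recursion for $(\alpha^\dagger, \beta_2^\dagger)$ given by Lemma \ref{stable set}, exploiting the fact that $\beta_1^\dagger \equiv 0$ so that the gradient simplifies dramatically. With $\beta_1 = 0$, we have $\nabla_\alpha L(\theta^\dagger) = \lambda_2 \beta_2^\dagger(\alpha^\dagger\beta_2^\dagger - 1)$ and $\nabla_{\beta_2} L(\theta^\dagger) = \lambda_2 \alpha^\dagger(\alpha^\dagger\beta_2^\dagger - 1)$, so the dynamics are governed entirely by the scalar product $p^\dagger(t) := \alpha^\dagger(t)\beta_2^\dagger(t)$ and the loss reduces to $L(\theta^\dagger) = \tfrac12\lambda_2(p^\dagger - 1)^2$. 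First I would establish, for $t < \tilde t$ (i.e., before clipping activates, so $\alpha^\dagger < \sqrt{2/(\eta\lambda_1)}$), that the unclipped $\alpha$-update strictly increases $\alpha^\dagger$: since $\theta(0)\in\cX(\eta)$ forces $0 < \alpha^\dagger\beta_2^\dagger < 1$ initially, $\nabla_\alpha L < 0$, giving $\alpha^\dagger(t+1) > \alpha^\dagger(t)$; and I would show $0 < p^\dagger < 1$ is preserved (the third condition $\alpha\beta_2 \le 9$ of $\cM^\dagger$ and the range $\alpha \in [1/\sqrt{\lambda_1\eta}, \sqrt{2/(\lambda_1\eta)}]$ keep us in the regime where the product stays below $1$, since $\eta\lambda_1 \le$ something and each step increment is controlled by $\lambda_2/\lambda_1 \le 1/\lambda_1^2$ being tiny). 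Monotone decrease of the loss then follows because $p^\dagger$ moves toward $1$ from below.

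For the second regime $t \ge \tilde t$, once $\alpha^\dagger$ hits the clipping ceiling $\sqrt{2/(\eta\lambda_1)}$ it stays pinned there: I would check that the clip keeps $\alpha^\dagger(t) \equiv \sqrt{2/(\eta\lambda_1)}$ for all $t \ge \tilde t$, which requires verifying the pre-clip value $\alpha^\dagger(t) - \eta\nabla_\alpha L$ still exceeds (or the Clip operator — note the paper's $\mathrm{Clip}$ is a \emph{floor}, $\mathrm{sign}(x)\max\{|x|,c\}$ — forces it back up to) the threshold; given $\nabla_\alpha L = \lambda_2\beta_2^\dagger(p^\dagger - 1) < 0$ the unclipped update would actually overshoot upward, and the floor-type clip returns it to $\sqrt{2/(\eta\lambda_1)}$, so $\alpha^\dagger$ is genuinely constant. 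With $\alpha^\dagger$ fixed at $a := \sqrt{2/(\eta\lambda_1)}$, the $\beta_2$-recursion becomes linear:
\begin{align*}
\beta_2^\dagger(t+1) = \beta_2^\dagger(t) - \eta\lambda_2 a(a\beta_2^\dagger(t) - 1) = \beta_2^\dagger(t) - \eta\lambda_2 a^2\beta_2^\dagger(t) + \eta\lambda_2 a,
\end{align*}
and since $\eta\lambda_2 a^2 = \eta\lambda_2 \cdot \tfrac{2}{\eta\lambda_1} = \tfrac{2\lambda_2}{\lambda_1}$, multiplying through by $a$ gives $p^\dagger(t+1) - 1 = (1 - \tfrac{2\lambda_2}{\lambda_1})(p^\dagger(t) - 1)$. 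Squaring and multiplying by $\tfrac12\lambda_2$ yields exactly $L(\theta^\dagger(t+1)) = (1 - \tfrac{2\lambda_2}{\lambda_1})^2 L(\theta^\dagger(t))$.

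The main obstacle I anticipate is the bookkeeping in the first regime: showing that $p^\dagger(t)$ stays in $(0,1)$ and that $\tilde t$ is reached (or that the claimed monotonicity holds vacuously/eventually) requires carefully tracking how fast $\alpha^\dagger$ grows versus how $\beta_2^\dagger$ evolves, using the initialization constraints defining $\cX(\eta)$ and the bound $\lambda_1\lambda_2 \le 1$, $\lambda_1 \ge 100$. In particular one must rule out $p^\dagger$ overshooting past $1$ during the pre-clip phase (which would flip the sign of the gradient and break monotone loss decrease); this should follow from the smallness of $\eta\lambda_2$ relative to the gap $1 - p^\dagger$, but it needs a clean invariant. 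The second regime, by contrast, is essentially a one-line exact computation once $\alpha^\dagger$ is shown to be frozen at the clip value, so the real work is confined to the transient phase and to confirming the clip behaves as a genuine floor that holds $\alpha^\dagger$ constant thereafter.
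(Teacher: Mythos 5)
Your proposal follows essentially the same route as the paper: both arguments rest on the update rule from Lemma \ref{stable set}, the invariant $0<\alpha^\dagger(t)\beta_2^\dagger(t)<1$, monotone growth of $\alpha^\dagger$ until it reaches $\sqrt{2/(\eta\lambda_1)}$, and then the exact computation $\eta\lambda_2(\alpha^\dagger)^2=2\lambda_2/\lambda_1$ yielding \eqref{decay}.

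Two remarks. First, the ``clean invariant'' you defer as the main obstacle is already available: the proof of Lemma \ref{stable set} shows by induction that $\alpha^\dagger(t)\beta_2^\dagger(t)<1$ for all $t$, via the factorization $1-\alpha^\dagger(t+1)\beta_2^\dagger(t+1)=\bigl(1-\lambda_2\eta\bigl(\alpha^\dagger(t)^2+\beta_2^\dagger(t)^2\bigr)-\lambda_2^2\eta^2\alpha^\dagger(t)\beta_2^\dagger(t)\bigl(1-\alpha^\dagger(t)\beta_2^\dagger(t)\bigr)\bigr)\bigl(1-\alpha^\dagger(t)\beta_2^\dagger(t)\bigr)$, whose leading factor lies in $(0,1)$; you may simply cite this, and it also delivers the strict loss decrease for $t<\tilde t$ exactly as you sketch. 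Second, your justification of the pinning of $\alpha^\dagger$ is internally inconsistent as written: a floor-type clip $\mathrm{sign}(x)\max\{|x|,c\}$ does \emph{not} return an upward overshoot to the threshold, it leaves it above; under that literal reading $\alpha^\dagger$ would exceed $\sqrt{2/(\eta\lambda_1)}$ after $\tilde t$ and the exact rate \eqref{decay} would fail. The correct reason, and the one the paper relies on, is that the update of Lemma \ref{stable set} is the projection $\Pi_{\cM^\dagger}$, and $\cM^\dagger$ imposes $\alpha\le\sqrt{2/(\lambda_1\eta)}$, so the Clip there acts as a ceiling on $\alpha^\dagger$: since the pre-clip value always moves upward (because $\alpha^\dagger\beta_2^\dagger<1$), one gets $\alpha^\dagger(t)\equiv\sqrt{2/(\eta\lambda_1)}$ for all $t\ge\tilde t$. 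With that reading, your linear recursion for $\beta_2^\dagger$ and the resulting rate coincide with the paper's computation.
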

Theorem \ref{constrained_decay} shows the loss monotonically decreases along the contrained trajectory, and provides an explicit convergence rate. We can see that this rate lies between the upper bound and lower bound in theorem \ref{converge_rate}, and in Fig \ref{estimae-loss} we can see that \eqref{decay} predicts the  decay rate of $L$ precisely.

\begin{remark}
The constrained trajectory analysis complements our approach in Section \ref{sec:convergence-rate} in an interesting way. Both methods aim to understand the core convergence behavior by handling the oscillatory dynamics, but through different means. The constrained trajectory directly enforces stability by projecting $\theta$ onto a stable set where $\beta_1=0$, while our previous analysis in Section \ref{sec:convergence-rate} studies the convergence through $L_2$ (defined in \eqref{eq:loss-decomp}), while allowing the oscillatory parameter $\beta_1$ to vary naturally. Despite these distinct approaches, both analyses arrive at similar convergence rates, indicating that $\beta_1$ barely impacts the loss descent and reinforcing our understanding of the fundamental dynamics at the EoS.
\end{remark}

We also visualize the constrained trajectory in Figure \ref{3DV}. We show that GD first ``sharpens" to the unstable region where $\alpha$ is large, and then ``self-stabilize" to the stable region. After that, another sharpening starts and the trajectory enters the next cyclical behavior. As illustrated in Figure \ref{3DV}, during EoS stage, GD trajectory will follow the constrained trajectory rather that GF trajectory. 

\begin{figure}[htb!]
\begin{center}
{\includegraphics[width=0.6\columnwidth]{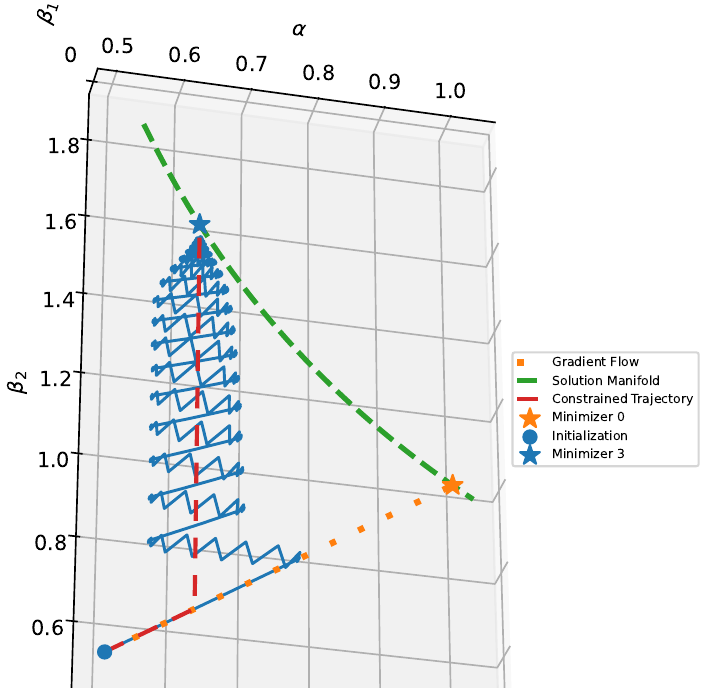}}
\caption{Same stetting as Figure \ref{train_loss}. We visualize the GD trajectory with learning rate $\eta = \frac{1}{20}$, as well as the GF and constrained trajectory starting from the same initialization.}
\label{3DV}
\end{center}
\end{figure}
\section{Discussion}

\subsection{Alignment with the EoS Phenomena in Practice}
\label{discuss1}

To our best knowledge, the phenomenon in our setting, including the periodic progressive sharpening and self-stabilization, and the periodic loss spikes, have not been observed in previous minimalist settings \cite{zhu2022understanding, kreisler2023gradient, wang2023good, chen2023beyond}.
Our setting is based on the two dimensional input data with two features in different scale. This is different from most previous minimalist settings such as \citet{zhu2022understanding, kreisler2023gradient} where they suppose a scalar input. Recall that in our setting, we suppose a large-scale but irrelevant feature and a small scale and irrelevant feature. Such construction is the key for our phenomenon, as the model has the trend to progressive sharpening (to increase its norm) for fitting the small scale feature, but the large-scale irrelevant feature (or noise) will provide a regularization for the model, which provides a self-stabilization effect.


We argue that observations in our settings are more similar to practical results \citet{cohen2021gradient, damian2022self} than observation in \citet{zhu2022understanding, kreisler2023gradient}. In experiments from \citet{cohen2021gradient, damian2022self}, it's easy to see the periodic progressive sharpening, self-stabilization and the loss spiles, see Figure \ref{cohen_poly} and \ref{damain_sst2}. These phenomenon are also observed in our synthetic setting. In contrast, in the scalar network setting in \citet{zhu2022understanding, kreisler2023gradient}, the sharpness jump up and down around $\frac{2}{\eta}$, and their loss function will monotonically decrease after first a few steps. See Figure \ref{rong_pheno} and Figure \ref{kres_pheno} for the experiments on scalar networks.

\subsection{Trade-off in Large Learning Rates}
\label{optimization}

When training neural networks, using larger learning rates, if not leading to divergence, has been seen to accelerate the training.
However, this is not always the case. For example, we train a two-layer FFN (feed-forward networks) on Cifar10 dataset using GD with different learning rates. We report the number of training steps for decreasing the training loss from $0.32$ to $0.22$ in Figure \ref{loss_steps}. We choose such a range for the objective function, as we observe that the training loss clearly shows the EoS phenomenon. As can be seen, increasing the learning rate from 0.1 to 0.3 actually slows down the training by about 50\% ($5000$ vs. $11000$ steps). Therefore, there exists an optimal learning rate in terms of reducing the training loss function.

\begin{figure}[htb!]
\begin{center}
{\includegraphics[width=0.6\columnwidth]{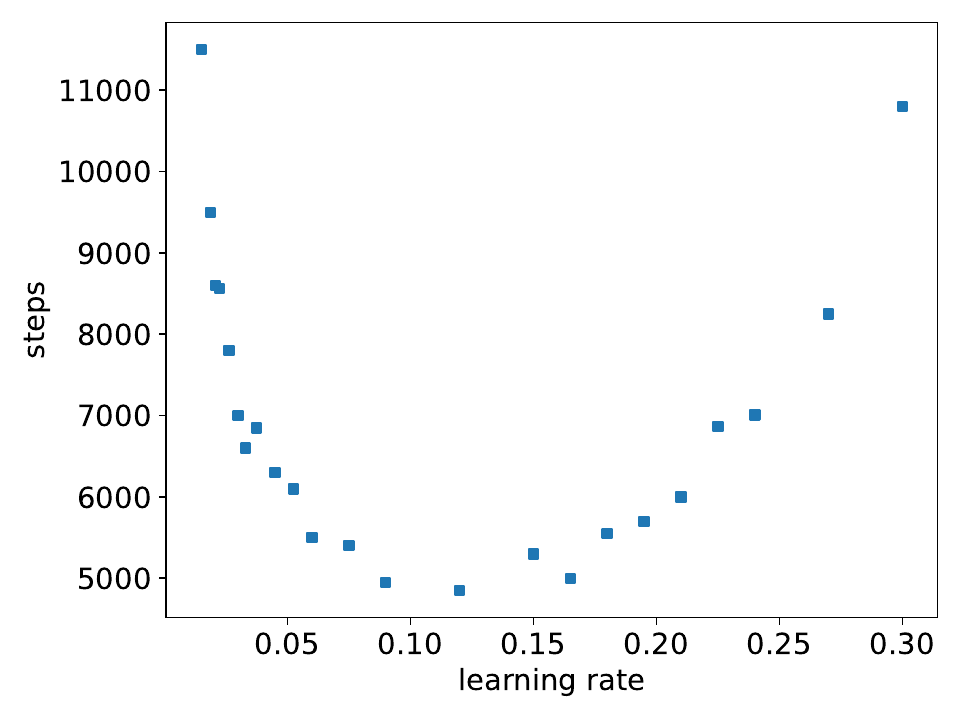}}
\caption{Training performance of a two-layer ReLU network on the CIFAR-10 dataset using Gradient Descent (GD) with various learning rates. The optimal learning rate is approximately 0.12. Further increases in the learning rate result in slower decreases in the training loss.}
\label{loss_steps}
\end{center}
\end{figure}

The phenomenon above can be partially justified by our theoretical analysis, which corresponds an extreme example of this phenomenon—under our setting: increasing the learning rate neither speeds up nor slows down the decrease of the loss. Recall that as suggested by (ii) of Lemma \ref{lemma:paras}, $\beta_1$ -- corresponding to the irrelevant input feature $x_1$ -- mainly contributes to the oscillation of the loss function and drifts around $0$. Meanwhile, as suggested by Theorem \ref{global_convergence} and (i) of Lemma \ref{lemma:paras}, the progressive sharpening and self-stabilization of EoS make $\alpha$ tend to stabilize around $\sqrt{\frac{2}{\eta\lambda_1}}$. Therefore, roughly speaking, the decrease of the loss function is mainly driven by $\beta_2$, which corresponds to the relevant input feature $x_2$.

Now we look into the landscape of the loss function with respect to $\beta_2$, the sharpness along the $\beta_2$ direction is determined by its second order derivative $\frac{\partial^2 L(\theta)}{\partial \beta_2^2}=\lambda_2\alpha^2$. As $\alpha$ gradually stabilizes around $\sqrt{\frac{2}{\eta\lambda_1}}$, $\frac{\partial^2 L(\theta)}{\partial \beta_2^2}$ tends to stabilize around $\frac{2\lambda_2}{\eta\lambda_1}$. Recall again that we have $\lambda_2\ll\eta\lambda_1$, implying that $\frac{2\lambda_2}{\eta\lambda_1}$ is very small. Therefore, the loss function is very flat along the $\beta_2$ direction. Accordingly, the convergence of $\beta_2$ is likely to slow down, as $\beta_2$ approaches the minimum.

Our explanation above on the GD trajectory may look a bit complex. To summarize, the acceleration benefit of using a large learning rate eventually diminishes, as the landscape of the loss function along the major update direction is flat near the corresponding minimum.

Now let's dive into Theorem \ref{converge_rate} again. Recall that $\hat{L}$ can be viewed as a good approximation of the loss function $L$ by only considering the relevant dimensions -- $\alpha$ and $\beta_2$. The decay rate of $\hat{L}$ is linear and independent on $\eta$. Therefore, as illustrated in Figure \ref{estimae-loss}, the decay rate of the original loss function $L$ also shows a similar convergence behavior -- in another word, a large learning rate does not necessarily yield a faster convergence.

\begin{figure}[htb!]
\begin{center}
{\includegraphics[width=0.48\columnwidth]{icml2024/estimate_loss_lr=1_over_20.pdf}
\hspace{0.01in}
\includegraphics[width=0.48\columnwidth]{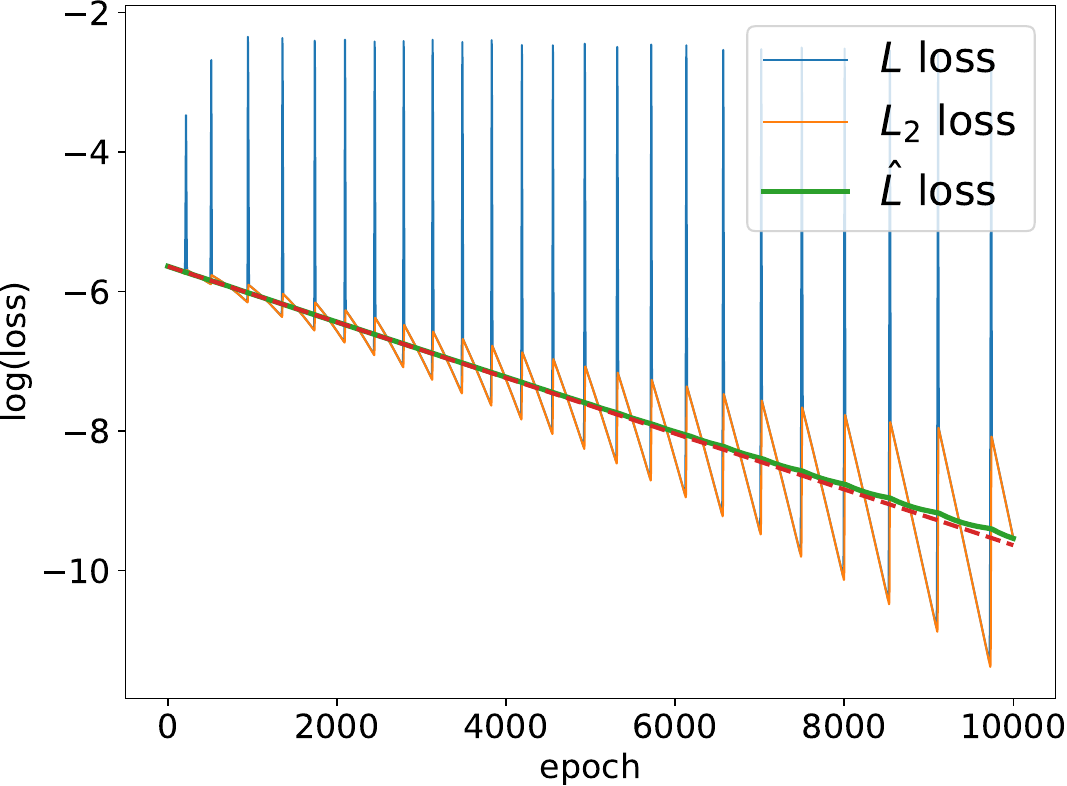}}
\caption{We train our model from the same initialization with learning rate $\eta = \frac{1}{20}$ (Left) and $\eta = \frac{1}{12}$ (Right). The slope of the red dashed lines in both figure is $2\log(1-\frac{2\lambda_2}{\lambda_1})$, which reflects the decrease rate of the loss with different learning rate properly. }
\label{estimae-loss}
\end{center}
\end{figure}

Although this example does not yet fully explain why increasing the learning rate slows down the loss decrease, we believe that the trade-off mechanism in our minimalist example provides a preliminary theoretical explanation for this phenomenon and can be extended by future effort.


\bibliography{ref}
\bibliographystyle{ims}
\newpage
\appendix
\section{Phenomena in other works}

We begin by examining an experiment from \citet{cohen2021gradient}, in which they train a two-layer tanh network to approximate a Chebyshev polynomial. The observed phenomenon closely resembles our findings. The sharpness initially increases, then rapidly decreases to a low level, before growing again and entering a cyclic pattern. Large spikes occur and quickly subside during this process. Figure \ref{cohen_poly} illustrates their results.

\begin{figure}[htb!]
    \begin{center}
   {\includegraphics[width=1\columnwidth]{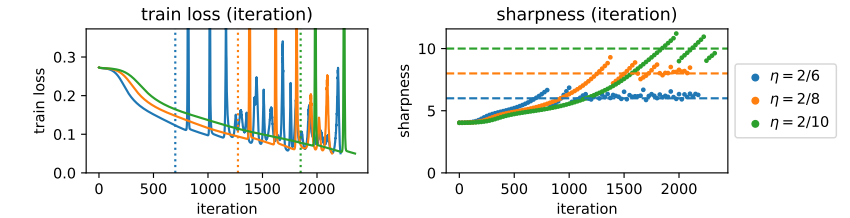}}
    \caption{Experiment from \citet{cohen2021gradient}. A two-layer tanh network is trained to approximate a Chebyshev polynomial using Gradient Descent (GD) with learning rates $\eta = \frac{2}{6}, \frac{2}{8},$ and $\frac{2}{10}$.}
    \label{cohen_poly}
    \end{center}
\end{figure}

Similar observations are reported in \citet{li2022analyzing} and \citet{damian2022self}. For instance, \citet{damian2022self} train a transformer with MSE Loss on the SST2 dataset. We present their results in Figure \ref{damain_sst2}.

\begin{figure}[htb!]
    \begin{center}
    {\includegraphics[width=0.6\columnwidth]{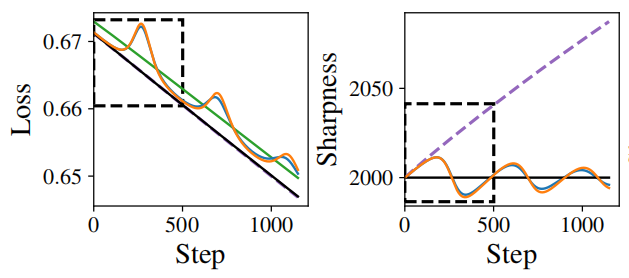}}
    \caption{Experiment from \citet{damian2022self}. A transformer model is trained on the Stanford Sentiment Treebank 2 (SST-2) dataset using Mean Squared Error (MSE) as the loss function.}
    \label{damain_sst2}
    \end{center}
\end{figure}

These experiments from various studies align closely with our numerical experiments. However, we note a discrepancy between these practical experiments and those presented in \citet{zhu2022understanding} and \citet{kreisler2023gradient}. In their experiments on scalar networks, the loss oscillates during the initial steps but subsequently decreases monotonically. The sharpness does not exhibit a progressive sharpening process; instead, it fluctuates around $2/\eta$ from the initialization. Figures \ref{rong_pheno} and \ref{kres_pheno} provide detailed illustrations of these observations. Based on these findings, we posit that the Edge of Stability (EoS) phenomenon differs from that described in \citet{zhu2022understanding} and \citet{kreisler2023gradient}.

\begin{figure}[htb!]
    \centering
    \includegraphics[width=1\linewidth]{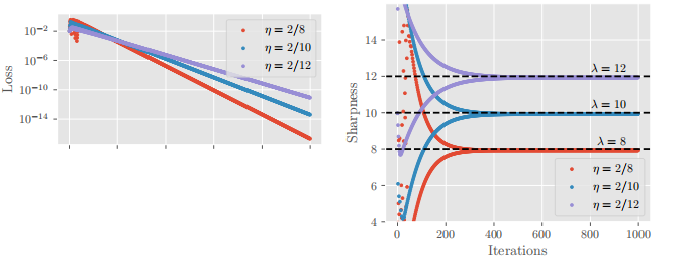}
    \caption{Experiment from \citet{zhu2022understanding}. Gradient Descent (GD) is applied to the loss function $(1 - x^2y^2)^2$, which represents a special case of a scalar network setting. The learning rates $\eta$ are chosen to be $\frac{2}{8}, \frac{2}{10},$ and $\frac{2}{12}$.}
    \label{rong_pheno}
\end{figure}

\begin{figure}[htb!]
    \centering
    \includegraphics[width=0.9\linewidth]{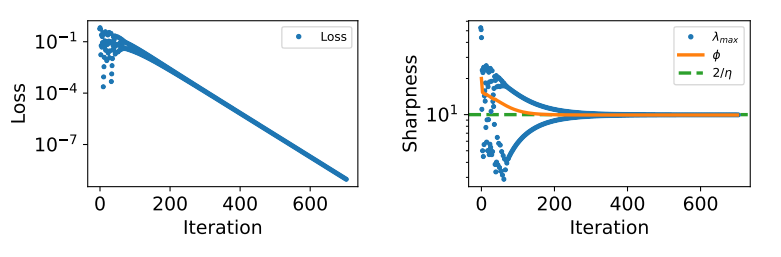}
    \caption{Experiment from \citet{kreisler2023gradient}. A depth-4 scalar network is trained using Gradient Descent (GD) for $10^4$ steps with a learning rate $\eta = 0.2$. The loss function is defined as $(1-xyzw)^2$.}
    \label{kres_pheno}
\end{figure}

\section{More Experiments and Details}
\label{experiments}

\subsection{Experimental Details of Section \ref{optimization}}

In the experiments shown in Figure \ref{loss_steps}, we train a two-layer multilayer perceptron (MLP) on the first 10,000 images of the CIFAR-10 dataset without image preprocessing. We use a 10-dimensional one-hot vector as the target and employ the Mean Squared Error (MSE) as the loss function. We measure the number of steps Gradient Descent (GD) takes to decrease the loss from 0.32 to 0.22. It is important to note that in our experiment, GD with all learning rates has already entered the Edge of Stability (EoS) regime before the loss reaches 0.32.

\subsection{Visualization of Practical Model Parameters in 3D Space}

We visualize part of the GD trajectory with different learning rates, as well as the Gradient Flow (GF) trajectory, while training a four-layer MLP on a binary classification problem using the CIFAR-10 dataset. The GD trajectory closely follows the GF trajectory before entering the EoS regime. Subsequently, the GD trajectory enters an unstable region, beginning to oscillate, and its long-term movement shifts to a new direction that differs from the GF direction. This behavior is similar to GD's behavior observed in our results presented in Figure \ref{3DV}. Additionally, we observe that GD with smaller learning rates follows the GF for a longer distance before beginning to  change direction and oscillate in a new region.

\begin{figure}
\begin{center}
\captionsetup{font=large}
\includegraphics[width=0.8\columnwidth]{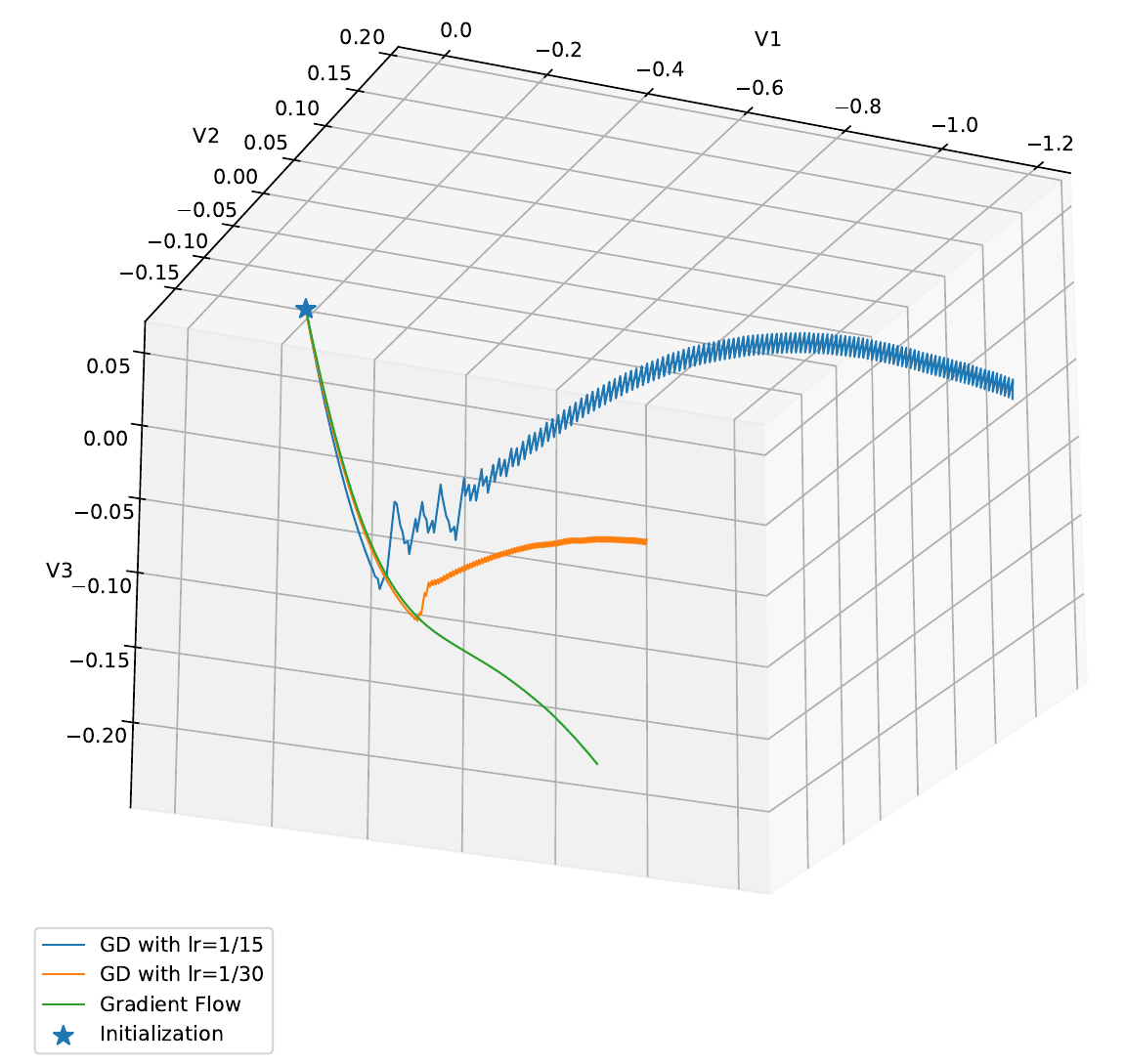}
\caption{Visualization of GD trajectory for a four layers MLP training on Cifar10}
\label{3DMLP}
\end{center}
\end{figure}

\section{Theoretical Properties of Initialization}
\label{sety}

Our initialization allows us to explore the GD dynamics across a wide range of learning rates, all starting from the same initial point. This stands in stark contrast to previous minimalist analyses such as \citet{zhu2022understanding} and \citet{chen2023beyond}, where their initializations permit only a narrow selection of learning rates.

To see this, we define a special set $\mathbb{Y} \subset \mathbb{R}^3$ that is independent of the learning rate:
\begin{align*}
    \frac{2\sqrt{5}}{\sqrt{\lambda_1}} \leq \alpha < 1, \quad \max\left\{\alpha, \frac{\sqrt{3}}{10\alpha}\right\} \leq \beta_2 < \frac{1}{\alpha},\frac{\lambda_2\beta_2}{500\lambda_1\alpha} \leq \frac{\beta^2_{1}}{1-\alpha\beta_{2}} \leq \frac{\lambda_2\beta_2}{\lambda_1\alpha}.
\end{align*}
We then have the following proposition:

\begin{proposition}
    For any $\theta \in \mathbb{Y}$, there exists a learning rate $\eta \leq 0.1$ such that for any decay rate $r \in [0.55,1]$, $\theta \in \mathcal{X}(r\eta)$.
\end{proposition}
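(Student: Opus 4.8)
The plan is to verify the three defining inequalities of $\mathcal{X}(r\eta)$ for an appropriate choice of $\eta$, by exploiting the scaling structure of the constraints. First I would observe that the only place the learning rate enters the definition of $\mathcal{X}(\cdot)$ is through the upper and lower bounds on $\alpha$ (the constraint $\sqrt{1.1/(\lambda_1\eta)} \le \alpha \le \sqrt{2/(\lambda_1\eta)}$) and through the lower bound $\sqrt{6\eta\lambda_1}/20$ in the $\beta_2$ constraint; the $\beta_1$ constraint and the constraints $3/(20\alpha)\le\beta_2$, $\alpha\le\beta_2<1/\alpha$ are scale-free. So given $\theta=(\alpha,\beta_1,\beta_2)\in\mathbb{Y}$, I would set $\eta$ so that $\alpha=\sqrt{2/(\lambda_1\eta)}$, i.e. $\eta := 2/(\lambda_1\alpha^2)$; since $\alpha\ge 2\sqrt5/\sqrt{\lambda_1}$ this gives $\eta\le 2/(\lambda_1\cdot 20/\lambda_1)=0.1$, as required. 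Then for any $r\in[0.55,1]$ one needs to check $\theta\in\mathcal{X}(r\eta)$.

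The key steps, in order: (1) Check the $\alpha$-window for $r\eta$: we need $\sqrt{1.1/(\lambda_1 r\eta)}\le\alpha\le\sqrt{2/(\lambda_1 r\eta)}$. Since $\alpha=\sqrt{2/(\lambda_1\eta)}$ and $r\le1$, the upper bound $\sqrt{2/(\lambda_1 r\eta)}\ge\sqrt{2/(\lambda_1\eta)}=\alpha$ holds; for the lower bound we need $\alpha^2\ge 1.1/(\lambda_1 r\eta) = 1.1\alpha^2/(2r)$, i.e. $r\ge 0.55$, which is exactly the stated hypothesis. (2) Check the $\beta_2$-constraints for $r\eta$: the scale-free parts $\max\{3/(20\alpha),\alpha\}\le\beta_2<1/\alpha$ are inherited directly from the definition of $\mathbb{Y}$ (note $\mathbb{Y}$ uses $\sqrt3/(10\alpha)=\sqrt3\cdot 2/(20\alpha)>3/(20\alpha)$, which is the stronger bound, so it implies $3/(20\alpha)\le\beta_2$). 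The remaining piece is $\sqrt{6(r\eta)\lambda_1}/20\le\beta_2$; since $r\eta\le\eta=2/(\lambda_1\alpha^2)$ we get $\sqrt{6r\eta\lambda_1}/20\le\sqrt{12/\alpha^2}/20=\sqrt{12}/(20\alpha)=\sqrt3/(10\alpha)$, which is bounded above by $\beta_2$ by the $\mathbb{Y}$-constraint. (3) Check the $\beta_1$-constraint for $r\eta$: the constraint $\frac{\lambda_2\beta_2}{500\lambda_1\alpha}(1-\alpha\beta_2)\le\beta_1^2\le\frac{\lambda_2\beta_2}{\lambda_1\alpha}(1-\alpha\beta_2)$ does not involve $\eta$ at all, and it is exactly the rearrangement of the $\mathbb{Y}$-constraint $\frac{\lambda_2\beta_2}{500\lambda_1\alpha}\le\frac{\beta_1^2}{1-\alpha\beta_2}\le\frac{\lambda_2\beta_2}{\lambda_1\alpha}$ (valid since $1-\alpha\beta_2>0$ on $\mathbb{Y}$). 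So it transfers verbatim.

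The main obstacle — such as it is — is purely bookkeeping: making sure the numeric constants line up, in particular that $\sqrt3/(10\alpha)$ really dominates both $3/(20\alpha)$ and $\sqrt{6r\eta\lambda_1}/20$ simultaneously, and that the endpoint $r=0.55$ is handled with equality rather than strict inequality so the $\alpha$ lower bound is met. One should also double-check the constraint $\alpha\le\beta_2$ (inherited from $\mathbb{Y}$) against $\beta_2<1/\alpha$, which forces $\alpha<1$ — consistent with the $\mathbb{Y}$ hypothesis $\alpha<1$ — and confirm that the resulting $\eta$ lies in the regime $[2/\lambda_1,0.1]$ on which all earlier results are stated, which follows from $\eta\le 0.1$ together with $\eta = 2/(\lambda_1\alpha^2) \ge 2/\lambda_1$ since $\alpha<1$. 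No deeper idea is needed; the proposition is essentially a change-of-variables statement asserting that $\mathbb{Y}$ is the ``$\eta$-free shadow'' of the family $\{\mathcal{X}(r\eta)\}$.
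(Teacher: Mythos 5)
Your proposal is correct and follows essentially the same route as the paper's proof: choose $\eta = 2/(\lambda_1\alpha^2)$ (so $\lambda_1\eta\alpha^2 = 2$ and $\eta \le 0.1$ from $\lambda_1\alpha^2 \ge 20$), then verify the $\alpha$-window via $1.1 \le 2r \le 2$ for $r\in[0.55,1]$, the $\beta_2$ lower bounds via $\sqrt{6r\eta\lambda_1}/20 \le \sqrt{3}/(10\alpha) \le \beta_2$ together with $3/(20\alpha)\le\sqrt{3}/(10\alpha)$, and note the $\eta$-free $\beta_1$ condition transfers directly from $\mathbb{Y}$. The only cosmetic difference is that the paper first checks $\theta\in\mathcal{X}(\eta)$ and then invokes monotonicity in $r$, while you verify $\theta\in\mathcal{X}(r\eta)$ directly; the content is the same.
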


\begin{proof}
Since $\lambda_1\alpha^2 \geq 20$, we know that there exists a learning rate $\eta$ such that $\lambda_1\alpha\eta = 2$. First, we show that $\theta \in \mathcal{X}(\eta)$:

(i) $\lambda_1\alpha^2\eta = 2$ and $\alpha > 0$, so the condition $\sqrt{\frac{1.1}{\lambda_1\eta}} \leq \alpha \leq \sqrt{\frac{2}{\lambda_1\eta}}$ is satisfied.

(ii) $\alpha \leq \beta_2 < \frac{1}{\alpha}$ is satisfied by the definition of $\mathbb{Y}$. Since $\beta_2 \geq \frac{\sqrt{3}}{10\alpha}$, we know $\beta_2 \geq \frac{\sqrt{3}}{20\alpha}$. Since $\eta = \frac{2}{\lambda_1\alpha^2}$, we have $\frac{\sqrt{6\eta\lambda_1}}{20} = \frac{\sqrt{3}}{10\alpha} \leq \beta_2$. So the condition $\max\left\{\frac{\sqrt{6\eta\lambda_1}}{20}, \frac{3}{20\alpha}, \alpha\right\} \leq \beta_2 < \frac{1}{\alpha}$ is satisfied.

(iii) The condition for $\beta_1$ is naturally satisfied by the definition of $\mathbb{Y}$.

Next, note the following property that for $r \in [0.55, 1]$, we have:
\begin{equation*}
    1.1 \leq \lambda_1\alpha^2 r\eta \leq 2, \quad \frac{\sqrt{6r\eta\lambda_1}}{20} \leq \frac{\sqrt{6\eta\lambda_1}}{20}.
\end{equation*}
Therefore, it is apparent that if $\theta \in \mathcal{X}(\eta)$, then for $r \in [0.55, 1]$, $\theta \in \mathcal{X}(r\eta)$.
\end{proof}

\section{Proofs for Section \ref{sec:theory}}

\subsection{Proof of Lemma \ref{lemma:paras} and Theorem \ref{eos_theory} }
First we prove the following Proposition:
\begin{proposition}
    \label{proposition:C1}
    For any $t \geq 0$, and let $\hat{t}$ to be the first time such that $\lambda_1\eta\alpha^2(\hat{t}) \geq 1.5$, then we will have:
    \begin{align}
        &1.5 \le \lambda_1\eta\alpha^2(t) \le 4.2 \quad \text{for} \,\, t \geq \hat{t}.
    \end{align}
and $\lambda_1\alpha^2(t)$ keeps to increase before $\hat{t}$ ($\hat{t}$ can be $\infty$).
\end{proposition}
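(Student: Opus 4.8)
\textbf{Proof proposal for Proposition \ref{proposition:C1}.}

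The plan is to track the one-dimensional quantity $a(t) := \lambda_1\eta\alpha^2(t)$ and show it stays trapped in $[1.5, 4.2]$ once it first enters that interval, and is strictly increasing before then. The gradient descent update for $\alpha$ is $\alpha(t+1) = \alpha(t) - \eta\nabla_\alpha L(\theta(t))$, and from \eqref{eq:def-loss} we have $\nabla_\alpha L = \lambda_1\alpha\beta_1^2 + \lambda_2\beta_2(\alpha\beta_2 - 1)$, so $\alpha(t+1) = \alpha(t)\big(1 - \eta\lambda_1\beta_1^2(t) - \eta\lambda_2\beta_2^2(t)\big) + \eta\lambda_2\beta_2(t)$. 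The key structural fact I would exploit is that on the initialization set $\cX(\eta)$ (and, I expect, on the invariant region one must propagate), $\beta_1^2$ is tiny — of order $\lambda_2\beta_2(1-\alpha\beta_2)/(\lambda_1\alpha)$ by the third defining inequality of $\cX(\eta)$ — so the dominant contraction/expansion factor is governed by $\eta\lambda_2\beta_2^2$, which is small because $\lambda_1\lambda_2 \le 1$ forces $\eta\lambda_2 \le \eta/\lambda_1 \le 1/\lambda_1^2$ roughly, while $\beta_2 < 1/\alpha$ keeps $\eta\lambda_2\beta_2^2 \le \eta\lambda_2/\alpha^2 = \lambda_2^2/a(t)$, negligible. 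Thus $\alpha(t+1) \approx \alpha(t) + \eta\lambda_2\beta_2(t)(1-\alpha(t)\beta_2(t))$ up to lower-order terms.

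First I would establish the monotone-increase claim for $t < \hat t$: as long as $a(t) < 1.5$, I show $\alpha(t+1) > \alpha(t)$. The sign of $\alpha(t+1)-\alpha(t) = \eta\big(\lambda_2\beta_2(1-\alpha\beta_2) - \lambda_1\alpha\beta_1^2 - \lambda_2\alpha\beta_2^2 + \lambda_2\alpha\beta_2^2\big)$ — wait, more carefully $\alpha(t+1)-\alpha(t) = \eta\lambda_2\beta_2(1-\alpha\beta_2) - \eta\lambda_1\alpha\beta_1^2$; since $\alpha\beta_2 < 1$ on $\cX(\eta)$ and $\beta_1^2 \le \frac{\lambda_2\beta_2}{\lambda_1\alpha}(1-\alpha\beta_2)$, the second term is at most the first, so the difference is $\ge 0$, with strict inequality generically. (This is essentially a restatement of why one keeps $\alpha\beta_2<1$ and the $\beta_1$-clipping active, and I would need the companion invariants — $\alpha\beta_2 < 1$ persists, $\beta_1^2$ stays bounded by the same ratio — which I expect are proved alongside in the same induction.) Combined with $\beta_2(t+1) > \beta_2(t)$ from Lemma \ref{lemma:paras}(iii), this forces $a(t)$ to strictly increase until it crosses $1.5$.

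Next, for $t \ge \hat t$, I would argue by induction that $a(t) \in [1.5, 4.2]$. The lower bound: once $a(t) \ge 1.5$, a single step can decrease $\alpha$ only through the $-\eta\lambda_1\alpha\beta_1^2$ term and through $-\eta\lambda_2\beta_2^2\alpha$; both are controlled — the first by the $\beta_1$ clip $\beta_1^2 \le 10/(36\lambda_1)$ giving $\eta\lambda_1\beta_1^2 \le 10\eta/36 \le 10(0.1)/36 < 0.03$, the second is $O(\lambda_2^2/a) $ — so $\alpha^2$ (hence $a$) shrinks by at most a factor like $(1-0.03)^2 > 0.9$ per step, which cannot drop $a$ from $\ge 1.5$ below $1.5$ in one step \emph{provided} I also show it cannot have been close to $1.5$ from above while simultaneously decreasing fast; the clean way is to show $a$ is still \emph{increasing} whenever $a(t) < 2$ (same sign computation as above, since $\alpha\beta_2<1$ and the $\beta_1$ term is dominated), so $a$ cannot leave through the bottom at all after $\hat t$. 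The upper bound $4.2$: when $a(t)$ is large, $\alpha$ is large, so $\alpha\beta_2$ is pushed up (and $\beta_2$ increases monotonically), eventually $\alpha\beta_2$ exceeds $1$ and the $\eta\lambda_2\beta_2(1-\alpha\beta_2)$ term flips sign, driving $\alpha$ down; quantitatively I would bound $\alpha(t+1)^2 \le \alpha(t)^2(1 + \text{small})^2 + (\text{small increment})^2$ and check that starting from $a(t)\le 4.2$ one stays $\le 4.2$, using that the additive push $\eta\lambda_2\beta_2$ is at most $\eta\lambda_2/\alpha = \lambda_2^{3/2}\eta^{1/2}/\sqrt{a}\cdot(\dots)$ — genuinely negligible against the scale $\sqrt{4.2/(\lambda_1\eta)}$.

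The main obstacle is the upper-bound step: unlike the lower bound, keeping $a(t) \le 4.2$ requires ruling out an overshoot, and the restoring force ($\alpha\beta_2 > 1$) only kicks in once $\beta_2$ has grown enough, so I must carry a coupled invariant linking $a(t)$ and $\alpha(t)\beta_2(t)$ — roughly, "whenever $a(t)$ is near its ceiling, $\alpha\beta_2$ is already $\ge 1$ so $\alpha$ is being pulled back." Making that coupling quantitatively airtight (with the specific constants $1.5$ and $4.2$, the clip value $\sqrt{10}/(6\sqrt{\lambda_1})$, and the constraint $\lambda_1 \ge 100$, $\lambda_1\lambda_2 \le 1$) is where the real work lies; everything else is the sign bookkeeping sketched above plus the monotonicity of $\beta_2$ already granted by Lemma \ref{lemma:paras}.
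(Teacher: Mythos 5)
Your plan has a genuine gap, and it is exactly where you locate "the real work": the mechanism you propose for the upper bound is not the one operating in this model, and it cannot be made to work. You argue that $\lambda_1\eta\alpha^2$ is kept below $4.2$ because once $\alpha$ grows, $\alpha\beta_2$ eventually exceeds $1$, flipping the sign of the $\eta\lambda_2\beta_2(1-\alpha\beta_2)$ term and pulling $\alpha$ back. But $\alpha(t)\beta_2(t)<1$ is an invariant of the dynamics (it follows from the one-step identity for $1-\alpha\beta_2$, whose multiplicative factor $1-\eta\lambda_2\alpha^2-\eta\lambda_2\beta_2^2-\eta^2\lambda_2^2\alpha\beta_2(1-\alpha\beta_2)$ stays in $(0,1)$), so this restoring force never activates. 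The actual restoring force is the oscillatory coordinate $\beta_1$: once $\eta\lambda_1\alpha^2>2$, the factor $|1-\eta\lambda_1\alpha^2(t)|$ exceeds $1$ and $\beta_1^2$ is amplified geometrically, until the $-\eta\lambda_1\alpha\beta_1^2$ term dominates $\eta\lambda_2\beta_2(1-\alpha\beta_2)$ and forces $\alpha$ down. Proving that this happens \emph{before} $\eta\lambda_1\alpha^2$ reaches $4.2$ requires comparing the per-step growth rate of $\alpha^2$ (of order $\eta\lambda_2\alpha\beta_2(1-\alpha\beta_2)$, hence bounded by $1/n$ with $n\geq 100$) against the accumulated product $\prod_i|\eta\lambda_1\alpha^2(i)-1|$, which the paper controls with Stirling-type factorial estimates and a ratio bound $n_1/n_2\le 2.04$ tied to the near-constancy of $\alpha\beta_2(1-\alpha\beta_2)$; this quantitative race is the heart of the proof and is absent from your sketch. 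Relatedly, your standing invariant "$\beta_1^2$ stays bounded by $\frac{\lambda_2\beta_2}{\lambda_1\alpha}(1-\alpha\beta_2)$" is false during the unstable phase — $\beta_1^2$ must break that bound for self-stabilization to occur at all — so the induction you hope to run "alongside" collapses.

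The lower-bound half of your trapping argument is also off: you claim $a(t)=\eta\lambda_1\alpha^2(t)$ keeps increasing whenever $a(t)<2$, so it "cannot leave through the bottom." After an excursion above $2/\eta$, $\beta_1^2$ has grown (possibly up to the clip value), and the $-\eta\lambda_1\alpha\beta_1^2$ term continues to push $\alpha$ down even after $a(t)$ falls below $2$ — this is precisely the self-stabilization phase of Theorem \ref{pro_and_self}, during which $\alpha$ decreases while the sharpness is under $2/\eta$. The paper's lower bound $1.5$ is instead obtained by bounding the total multiplicative drop: once $a(t)<1.7$ the factor $(1-\eta\lambda_1\alpha^2)^2$ makes $\beta_1^2$ collapse geometrically, and with the clipping constant $\sqrt{10}/(6\sqrt{\lambda_1})$ a Bernoulli-inequality estimate shows $a$ cannot fall below $1.7(1-2\lambda_1\eta c^2)^2\geq 1.5$; the paper then checks that at the moment $\alpha^2$ starts increasing again the parameters re-enter (a version of) the initialization conditions, so the whole cycle argument can be repeated. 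Your monotonicity claim before $\hat t$ is essentially in the spirit of the paper (there, a contradiction argument shows any first decrease of $\alpha^2$ can only occur after $\eta\lambda_1\alpha^2\geq 1.95>1.5$), but the two bounds for $t\geq\hat t$ need the $\beta_1$-amplification analysis, not the sign bookkeeping and the $\alpha\beta_2>1$ coupling you propose.
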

\begin{proof}
Note that we use learning rate $\eta \le 0.1$. The update equation can be written as:
\begin{align}
    \alpha(t+1) \,&=\, \alpha(t) - \eta(\lambda_1\beta^2_{1}(t)\alpha(t)- \lambda_2\beta_{2}(t)(1-\alpha(t)\beta_{2}(t))), \\
    \beta_{1}(t+1) \,&=\, \mathbf{Clip}(\beta_{1}(t) - \eta\lambda_1\alpha^2(t)\beta_1(t), \, \frac{\sqrt{10}}{6\sqrt{\lambda_1}}), \\
    \beta_{2}(t+1) \,&=\, \beta_{2}(t) + \eta\lambda_2\alpha(t)(1-\alpha(t)\beta_2(t)). \label{eq:update-beta2}
\end{align}
We can get that:
\begin{align}
    \label{D5}
    \left(1-\alpha(t+1)\beta_2(t+1)\right) = \left(1-\alpha(t)\beta_2(t)\right)\left[1-\eta\lambda_2\alpha^2(t) - \eta\lambda_2\beta^2_2(t) - \eta^2\lambda_2^2\alpha(t)\beta_2(t)\left(1-\alpha(t)\beta_2(t)\right)\right].
\end{align}
Note that the following are the initialization conditions:
\begin{align}
    1.1 \le \eta\lambda_1\alpha^2(0) < 2, \quad   &\max\{\frac{\sqrt{6\eta\lambda_1}}{20}, \frac{3}{20\alpha(0)} ,\alpha(0)\} \le \beta_2(0) < \frac{1}{\alpha(0)}, \\
    \frac{\lambda_2\beta_2(0)}{500\lambda_1\alpha(0)}\left(1-\alpha(0)\beta_{2}(0)\right) &\le \beta^2_{1}(0) \le \frac{\lambda_2\beta_2(0)}{\lambda_1\alpha(0)}\left(1-\alpha(0)\beta_{2}(0)\right).
\end{align}
We set $t^{\dagger}$ to be the largest time such that for any $t\le t^{\dagger}$ we have $\alpha > 0$ and:
\begin{align*}
    \eta\lambda_2\alpha^2(t) + \eta\lambda_2\beta^2_2(t) \le 0.01 .
\end{align*}
We will first prove Proposition \ref{proposition:C1} for $t \le t^{\dagger}+1$. For $t \le t^{\dagger}+1$ we know that in \eqref{D5} we have:
\begin{align*}
    1 - \eta\lambda_2\alpha^2(t) - \eta\lambda_2\beta^2_2(t) - \eta^2\lambda_2^2\alpha(t)\beta_2(t)\left(1-\alpha(t)\beta_2(t)\right) \ge 1 - 0.01 - \frac{\eta^2\lambda_2^2}{4} > 0.
\end{align*}
Since $\alpha(0)\beta_2(0) < 1$, based on \eqref{D5} we can see that $\alpha(t)\beta_2(t) < 1$ for $t \le t^{\dagger}+1$.  Therefore, if Proposition \ref{proposition:C1} holds for $t \le t^{\dagger}+1$, we will get $\alpha(t)>0$  for $t \le t^{\dagger}+1$, and we have:
\begin{align*}
    \eta\lambda_2\alpha^2(t) + \eta\lambda_2\beta^2_2(t) \le \frac{\lambda_2\lambda_1\eta\alpha^2(t)}{\lambda_1} + \frac{\eta^2\lambda_1\lambda_2}{\lambda_1\eta\alpha^2(t)} \le \frac{5\lambda_2}{\lambda_1}+\frac{\eta^2}{1.1} \le 0.01,
\end{align*}
which means $t^{\dagger} = t^{\dagger}+1$, thus $t^{\dagger} = \infty$. Therefore, it suffices to prove Proposition \ref{proposition:C1} for $t \leq t^{\dagger}+1$, which yields $t^{\dagger} = \infty$ and $\alpha(t)\beta_2(t) < 1$ for all $t$.

Let $t_{0}$ be the last time that $100\eta\alpha^2(t_{0}) < 2$, with $t_0 \leq \infty$.
If $\alpha^2$ does not increase monotonically before $t_0$, let $t_1$ be the first time such that $\alpha^2(t_1 + 1) < \alpha^2(t_1)$, where $t_1 \geq 1$. We have that for any $t$, $\alpha(t)\beta_2(t) < 1$ and:
\begin{align*}
    \left(1-0.01-\eta^2\lambda^2_2\right)
    \left(1-\alpha(t)\beta_2(t)\right) &\leq 1 - \alpha(t+1)\beta_2(t+1) \leq 0.85, \\
    \alpha(t+1) &\leq \left(1 + \frac{\eta\lambda_2}{4\alpha^2(t)}\right)\alpha(t) \leq \left(1 + \frac{\eta}{4\lambda_1\alpha^2(t)}\right)\alpha(t).
\end{align*}
When $t = t_1-1$, we have:
\begin{align*}
    (1 - \eta\lambda_1\alpha^2(t))^2 > 0.98\cdot\left(1 + \frac{\eta}{4\lambda_1\alpha^2(t)}\right).
\end{align*}
Thus, for $t = t_1-1$:
\begin{align*}
    \lambda_1\alpha^2(t) \geq 1.95.
\end{align*}
Moreover, for $t=t_1 - 1$, we have:
\begin{align*}
    \beta^2_1(t) \leq \frac{1}{4\lambda_1\alpha^2(t)} \leq \frac{\eta}{7.8}.
\end{align*}
Before $t_0$, $\beta^2_1$ decreases monotonically, and when $\eta\lambda_1\alpha^2 \leq 1.9$, $\beta^2_1$ decreases with an exponential rate of at least 0.81. Using Bernoulli's inequality, we have:
\begin{align*}
    \inf_{t_1 \leq t \leq t_0}\{\eta\lambda_1\alpha^2\} \geq 1.9 \cdot \left(1 - \frac{5\eta}{7.8}\right)^2 > 1.5.
\end{align*}
Let $\hat{t}$ be the first time $\eta\lambda_1\alpha^2(t) \geq 1.5$. Then $\alpha^2(t)$ monotonically increases before $\hat{t}$ and $\inf_{\hat{t} \leq t \leq t_0}\{\eta\lambda_1\alpha^2(t)\} \geq 1.5$. This also holds if $\alpha^2(t)$ increases monotonically before $t_0$. Next, we suppose $t_0 < \infty$ to analyze what will happen if GD enters the unstable regime.

Let $t_2$ be the first time after $t_0$ such that $\eta\lambda_1\alpha^2(t_2) < 2$, where $t_2 \leq \infty$. If $t_2 = \infty$, it is apparent that $\lim_{t \to \infty} \lambda_1\alpha^2(t) = 2$. Therefore, there will exist $t_1 < \infty$, such that
\begin{align*}
    \lambda_1\alpha^2(t_1+1) = \sup_{t_0 < t \leq t_2} \{\eta\lambda_1\alpha^2(t)\}.
\end{align*}
For $t_0 < t \leq t_2$, we have:
\begin{align*}
    \lambda_1c^2 \alpha^2(t) \geq \frac{5}{18\eta\lambda_1} \geq \frac{\lambda_2}{4} > \lambda_2\alpha(t)\beta_2(t)\left(1-\alpha(t)\beta_2(t)\right).
\end{align*}
We can deduce that for $t_0 < t \leq t_1$, $|\beta_1(t)| <  \frac{\sqrt{10}}{6\sqrt{\lambda_1}}$. Thus, we know that before $t_1$, the update equation is the same as GD without coordinate clipping.

We set undefined coefficients $n_{1}$, $n_{2}$ and an undetermined time $t^\star$ such that:
\begin{align*}
     \lambda_1\eta\left(\alpha^2(t+1) - \alpha^2(t)\right) &\geq  \frac{1}{n_{1}}  \quad \text{for} \quad t^\star\leq t \leq t_{0}, \\
     \lambda_1\eta\left(\alpha^2(t+1) - \alpha^2(t)\right) &\leq  \frac{1}{n_{2}}  \quad \text{for} \quad t_{0} < t \leq t_2,
\end{align*}
where $n_{1} > n_{2}$. We will estimate $n_{1}$, $n_{2}$, and $\frac{n_1}{n_2}$ later. Then we will have: $\lambda_1\eta\alpha^2(t^\star + t)  \geq 1.1 + \frac{t}{n_{1}}$ and $\lambda_1\eta\alpha^2(t^\star + \frac{9n_{1}}{10}) \geq 2$.
Then we have: 
\begin{align*}
    \left| \prod_{i=t^\star}^{t_{0}} (\lambda_1\eta\alpha^2(i) - 1)\right| \geq \prod_{i=0}^{\frac{9n_{1}}{10}} \left(\frac{\frac{n_{1}}{10} + i}{n_{1}}\right) = \frac{n_{1}!}{(\frac{n_{1}}{10})! n_{1}^{\frac{9n_{1}}{10}} } \geq \frac{\sqrt{2\pi n_{1}}(\frac{n_{1}}{e})^{n_{1}}}{{e^{\frac{5}{6n_{1}}}\sqrt{\pi n_{1}}(\frac{\frac{n_{1}}{10}}{e}})^{\frac{n_1}{10}}n_{1}^{\frac{9n_{1}}{10}}} =
\left(\frac{10^{\frac{1}{9}}}{e}\right)^\frac{9n_{1}}{10}\cdot\frac{\sqrt{2}}{e^{\frac{5}{6n_{1}}}}.
\end{align*}

We suppose that $\lambda_1\eta\alpha^2(t_{1}) = \lambda +1$ where $\lambda>1$. Then we will have $\lambda_1\eta\alpha^2(t) -1  \geq \lambda - \frac{t}{n_{2}}$ and $\lambda_1\eta\alpha^2((\lambda-1)n_{2}) -1 \geq 1$.So we have:
\begin{align*}
    \left| \prod_{i=t_{0}+1}^{t_{1}} (\lambda_1\eta\alpha^2(i) - 1)\right| \ge \frac{(\lambda n_{2})!}{n_{2}! n_{2}^{(\lambda -1)n_{2}}} \ge \left(\frac{\lambda^\lambda}{e^{\lambda - 1}}\right)^{n_{2}} \cdot \frac{\sqrt{\lambda}}{e^{\frac{1}{12n_{2}}}}.
\end{align*}
Therefore, we can obtain:
\begin{align*}
    \left| \prod_{i=t^\star}^{t_{1}} (\lambda_1\eta\alpha^2(i) - 1)\right| \ge \left(\frac{10^\frac{1}{9}}{e}\right)^\frac{9n_{1}}{10}\cdot\left(\frac{\lambda^\lambda}{e^{\lambda - 1}}\right)^{n_{2}}\cdot\frac{\sqrt{2\lambda}}{e^{\frac{10n_{2} + n_{1}}{12n_{1}n_{2}}}}.
\end{align*}
Since $\alpha(t+1) \le \alpha(t) + \lambda_2\eta\beta_{2}(t)(1-\alpha(t)\beta_{2}(t))$, we have:
\begin{align*}
\lambda_1\eta\alpha^2(t+1) 
&\le \lambda_1\eta (\alpha(t) + \eta\lambda_2\beta_{2}(t)(1-\alpha(t)\beta_{2}(t)))^2 \\
&= \lambda_1\eta\alpha^2(t) + 2\eta^2\lambda_1\lambda_2\alpha(t)\beta_{2}(t)(1 - \alpha(t)\beta_{2}(t) ) + \eta^3\lambda_1\lambda^2_2 \beta^2_{2}(t)(1-\alpha(t)\beta_{2}(t))^2 \\
&\le \lambda_1\eta\alpha^2(t) + \frac{\eta^2}{2} + \frac{\eta^3}{16\lambda_1\alpha^2(t)} \le \lambda_1\eta\alpha^2(t) + \frac{\eta^2}{2} + \frac{\eta^4}{16}. 
\end{align*}
Given that $\eta \le 0.1$, we can set $n_1$ and $n_2$ such that $\min\{n_{1},n_{2}\} \ge 100$. Since we know:
\begin{align*}
    \min_{t_0 \le t \le t_2}\{\alpha(t)\beta_2(t)\} \ge \max_{t^\star \le t \le t_0}\{\alpha(t)\beta_2(t)\} \ge 0.15,
\end{align*}
we can derive:
\begin{align*}
\frac{\min_{t^\star\le t \le t_{0}}\alpha(t)\beta_{2}(t)(1-\alpha(t)\beta_{2}(t)) }
{\max_{t_{0}\le t \le T}\alpha(t)\beta_{2}(t)(1-\alpha(t)\beta_{2}(t)) } \ge \frac{1}{2}.
\end{align*}
If $\lambda_1\alpha(t_0)\beta^2_{1}(t_0) \le  \frac{1}{50} \lambda_2\beta_{2}(t_0)(1-\alpha(t_0)\beta_{2}(t_0))$, suppose $t^\star$ is the largest time before $t_0$ such that: 
\begin{align*}
    \lambda_1\alpha(t)\beta^2_{1}(t) >  \frac{1}{50} \cdot \lambda_2\beta_{2}(t)(1-\alpha(t)\beta_{2}(t)) \quad \text{for} \quad t = t^\star-1.
\end{align*}
Since before $t_{0}$, we have $\beta^2_1(t+1) \ge (1.1-1)^2\beta^2_{1}(t)$, we will have:
\begin{align*}
    \frac{1}{5000}\lambda_2\beta_2(t^\star)(1-\alpha(t^\star)\beta_{2}(t^\star)) \le \lambda_1\alpha(t^\star)\beta^2_{1}(t^\star) \le  \frac{1}{50} \lambda_2\beta_{2}(t^\star)(1-\alpha(t^\star)\beta_{2}(t^\star)). 
\end{align*}
Based on the definition of $t^\star$, we obtain: 
\begin{align*}
    \frac{\min_{t^{\star}\le t \le t_{0}}(\alpha^2(t+1) - \alpha^2(t)) }{\max_{t_0< t \le T}(\alpha^2(t+1) - \alpha^2(t)) } \ge \frac{1}{2.04}.
\end{align*} In conclusion, we can set $n_1$ and $n_2$ such that $\frac{n_1}{n_2} \le 2.04$.

Combining all the above equations, we have:
\begin{align*}
    \left(\frac{10^{1/9}}{e}\right)^{9n_1/10} \cdot \left(\frac{\lambda^\lambda}{e^{\lambda - 1}}\right)^{n_2} \cdot \frac{\sqrt{2\lambda}}{e^{(10n_2 + n_1)/(12n_1n_2)}} > 100 \quad \text{when} \quad \lambda = 3.19.
\end{align*} Since $\alpha(t_1) \geq \alpha(t^\star)$ and $\alpha(t^\star)\beta_2(t^\star)(1-\alpha(t^\star)\beta_2(t^\star)) \geq \frac{1}{2}\alpha(t_1)\beta_2(t_1)(1-\alpha(t_1)\beta_2(t_1))$, we have:
\begin{align*}
    \lambda_1\eta\beta^2_1(t_1)\alpha^2(t_1) &> 10^4 \cdot \lambda_1\eta\beta^2_1(t^\star)\alpha^2(t^\star) \\
    &> \frac{10^4\eta}{5\cdot10^3} \cdot \lambda_2\alpha(t^\star)\beta_2(t^\star)(1-\alpha(t^\star)\beta_2(t^\star)) \\
    &\geq \lambda_2\eta\alpha(t_1)\beta_2(t_1)(1-\alpha(t_1)\beta_2(t_1)).
\end{align*} If $\lambda_1\alpha(t_0)\beta^2_1(t_0) > \frac{1}{50} \cdot \lambda_2\beta_2(t_0)(1-\alpha(t_0)\beta_2(t_0))$, then we can see:
\begin{align*}
    \lambda_1\eta\beta^2_1(t_1)\alpha^2(t_1) &> 10^4 \cdot \lambda_1\eta\beta^2_1(t_0)\alpha^2(t_0) \\
    &> \frac{10^4\eta}{5\cdot10^3} \cdot \lambda_2\alpha(t_0)\beta_2(t_0)(1-\alpha(t_0)\beta_2(t_0)) \\
    &> \lambda_2\eta\alpha(t_1)\beta_2(t_1)(1-\alpha(t_1)\beta_2(t_1)).
\end{align*} Therefore, we have $\alpha(t_1+1) < \alpha(t_1)$, which contradicts the definition of $t_1$. So we can obtain $\lambda < 4.19$. Consequently, we have proved that:
\begin{align*}
    \sup_{t_0 < t \leq t_2} \{\eta\lambda_1\alpha^2(t)\} = \lambda_1\eta\alpha^2(t_1+1) \leq \lambda_1\eta\alpha^2(t_1) + 0.01 \leq 4.2.
\end{align*} Next, we suppose $t_2 < \infty$ to examine the consequences. Using the property of coordinate clipping on $\beta_1$, we can obtain a lower bound for $\alpha^2(t)$ as: 
\begin{align*}
    \lambda_1\eta\alpha^2(t+1) \geq (1 - \lambda_1\eta c^2)^2 \cdot \lambda_1\eta\alpha^2(t).
\end{align*}
Let $t_3$ be the first time such that $\lambda_1\eta\alpha^2(t) < 1.7$. It is easy to see that: 
\begin{align*}
    \lambda_1\eta\alpha^2(t_3) \geq 1.7 \cdot (1 - \lambda_1\eta c^2)^2.
\end{align*}
Evidently, for $t \geq t_3$, we have:
\begin{align*}
    \beta^2_1(t + 1) \leq \frac{1}{2}\beta^2_1(t + 1).
\end{align*}
Therefore, by using Bernoulli's inequality, we obtain that for $t \geq t_5$:
\begin{align*}
    \lambda_1\eta\alpha^2(t) \geq 1.7 \cdot (1 - 2\lambda_1\eta c^2)^2.
\end{align*}
Using the condition that $2\lambda_1\eta c^2 \leq \frac{1}{18}$, we get $100\eta\alpha^2(t) \geq 1.5$ for $t \geq t_5$. Then we know $\beta^2_1(t+1) \geq \frac{1}{4} \beta^2_1(t)$ for $t \geq t_3$. 

Let $t_4$ be the first time after $t_3$ such that $\alpha^2(t)$ starts to increase again. We will discuss whether $\beta^2_1(t_4)$ will satisfy the initialization condition. The left side is immediately due to the definition of $t_4$. Let $\hat{t}_4 = t_4 - 1$. Since we have: 
\begin{align*}
    \beta_2(\hat{t}_4)(1-\alpha(\hat{t}_4)\beta_2(\hat{t}_4)) &\geq (0.09 - \frac{\eta^2\lambda^2_2}{4})\beta_2(t_4)(1-\alpha(t_4) \beta_2(t_4)) \\
    &\geq 5^{-3} \cdot \beta_2(t_4)(1-\alpha(t_4)\beta_2(t_4)),
\end{align*}
the right side of the initial condition is simply due to: 
\begin{align*}
    \beta^2_1(t_4) &\geq \frac{1}{4} \beta^2_1(\hat{t}_4) \\
    &\geq \frac{\lambda_2\beta_2(\hat{t}_4)(1-\alpha(\hat{t}_4)\beta_2(\hat{t}_4))}{4\lambda_1} \\
    &\geq \frac{\lambda_2\beta_2(t_4)(1-\alpha(t_4)\beta_2(t_4))}{500\lambda_1}.
\end{align*}
Since $\beta_2(0) \geq \alpha(0)$, we can induce that $\beta_2(t) \geq \alpha(t)$ for all $t$. Since $\beta_2$ will increase for all time, we know $\beta_2(t_4) \geq \frac{\sqrt{6\eta\lambda_1}}{20}$. 

So we get $\alpha(t_4)\beta_2(t_4) \geq \frac{3}{20}$. Then $t_4$ will satisfy initialization condition. Therefore, we can repeat the above proof from $t_4$. So we know that Proposition \ref{proposition:C1} is true for $t \leq t^{\dagger}+1$. Then we obtain that $t^{\dagger} = \infty$ and $\alpha(t)\beta_2(t) < 1$ for all $t$.
\end{proof}

\subsubsection{Proof of Lemma \ref{lemma:paras} }
\label{apendix:D}

Since we have proved that $\alpha(t)\beta_2(t) < 1$ for all $t$, according to the update rule of $\beta_2$ in \eqref{eq:update-beta2}, we can conclude that $\beta_2(t+1) > \beta_2(t)$ for all $t$. This completes the proof of Lemma \ref{lemma:paras} (iii).

Next based on Lemma \ref{lemma:paras} (iii), we will prove Lemma \ref{lemma:paras} (i). Recall that the Hessian matrix for $\theta = (\alpha, \beta_1, \beta_2)^T$ is given by:
\begin{align*}
    H(\theta) = \begin{pmatrix}
    \lambda_1\beta_{1}^2 + \lambda_2\beta_{2}^2 & 2\lambda_1\alpha\beta_{1} &  2\lambda_2\alpha\beta_{2} - \lambda_2  \\
    2\lambda_1\alpha\beta_{1}  &  \lambda_1\alpha^2   &  0  \\
    2\lambda_2\alpha\beta_{2} - \lambda_2 & 0  &  \lambda_2\alpha^2
    \end{pmatrix}.
\end{align*}
Since $H(\theta)_{2,2} = \lambda_1\alpha^2$, it is apparent that $S(\theta) \geq \lambda_1\alpha^2$. Let us define matrices $A$ and $B$ as follows:
\begin{align*}
    A = \begin{pmatrix}
    \lambda_1\beta_{1}^2 & 2\lambda_1\alpha\beta_{1} \\
    2\lambda_1\alpha\beta_{1}  &  \lambda_1\alpha^2 
    \end{pmatrix},
    \quad 
    B = \begin{pmatrix}
    \lambda_2\beta_{2}^2 & 2\lambda_2\alpha\beta_{2} - \lambda_2 \\
    2\lambda_2\alpha\beta_{2}  - \lambda_2  &  \lambda_2\alpha^2 
    \end{pmatrix}.
\end{align*}
We use $\|M\|_2$ to denote the maximum singular value of any matrix $M$. It is apparent that:
\begin{align*}
     S(\theta) \leq  \|A\|_2 + \|B\|_2.
\end{align*}
Since we have proved that $0 < \alpha\beta_2 < 1$, for $B$ we have:
\begin{align*}
     \|B\|_2 &= \max_{-1 \leq t \leq 1}\left\{\left|\lambda_2 \beta_2^2 t^2 + \lambda_2 \alpha_2^2 (1-t^2) + 2\lambda_2(2\alpha\beta_2 - 1)t\sqrt{1-t^2}\right|\right\} \\
     &\leq \lambda_2 \beta_2^2 + \lambda_2 \alpha^2 + \lambda_2.
\end{align*}
Thus we have:
\begin{align*}
    \frac{\|B\|_2}{\lambda_1\alpha^2} &\leq \frac{\lambda_2}{\lambda_1}(1 + \frac{1}{\alpha^4}) + \frac{\lambda_2}{\lambda_1\alpha^2} \leq \frac{\lambda_2}{\lambda_1}(1 + \lambda^2_1\eta^2) + \lambda_2\eta \\
    &\leq \frac{1}{10000} + \frac{1}{1000} + \lambda_1\lambda_2\eta^2 \leq 0.0111.
\end{align*}
It is also easy to see from the above equation that:
\begin{align*}
    \mathrm{Tr}(B) = \lambda_2 \beta_2^2 + \lambda_2 \alpha^2 \leq 0.012\cdot\lambda_1\alpha^2.
\end{align*}

For $\|A\|_2$, without loss of generality, suppose $\beta_1 \geq 0$. Since we already know that:
\begin{align*}
    \alpha \geq \frac{1}{\sqrt{\lambda_1\eta}} \geq \frac{\sqrt{10}}{\sqrt{\lambda_1}} \geq 6 \beta_1,
\end{align*}
let $u = (t , \sqrt{1 - t^2})$ with $t \in [-1, 1]$. We then have:
\begin{align*}
    |u^T A u| \leq \lambda_1\alpha^2 \left[t^2 + \frac{\beta^2_1}{\alpha^2} (1 - t^2) + 4\frac{\beta_1}{\alpha}t\sqrt{1-t^2}\right].
\end{align*}
Thus we have:
\begin{align*}
    \frac{ |u^T A u|}{\lambda_1\alpha^2} \leq t^2 + \frac{1}{36} (1 - t^2) + \frac{2}{3}t\sqrt{1-t^2} \leq 1.105.
\end{align*}
Therefore, we get $\|A\|_2 \leq 1.105 \cdot \lambda_1\alpha^2$. Consequently, we obtain:
\begin{align*}
    S(\theta) \leq  \|A\|_2 + \|B\|_2 \leq (0.0111 + 1.105)\cdot\lambda_1\alpha^2 \leq 1.12\cdot\lambda_1\alpha^2.
\end{align*}
Thus we have proved Lemma \ref{lemma:paras} (i).

Next we will prove Lemma \ref{lemma:paras} (ii) based on Lemma \ref{lemma:paras} (i)(iii). Following the notation in the previous section, we aim to estimate the second eigenvalue of $A$, denoted as $k_1$. We have:
\begin{align*}
    k_1 = \tr(A) - \lambda_{\max}(A).
\end{align*}
It is straightforward to prove that $\lambda_{\max}(A) \geq \lambda_1\alpha^2 + \lambda_1\beta_1^2 = \tr(A)$; thus, $k_1 \leq 0$. Let $k_2$ denote the second eigenvalue of $B$. We have:
\begin{align*}
    k_2 \geq -\left\| \begin{pmatrix}
0 & \lambda_2\alpha\beta_{2} - \lambda_2 \\
\lambda_2\alpha\beta_{2}  - \lambda_2  &  0 
\end{pmatrix}\right\|_2 \geq -\lambda_2.
\end{align*}
Therefore, if $H(\theta)$ has any negative eigenvalue $k$, then $k \geq k_1-\lambda_2$. Let $v_1$, $v_2$, and $v_3$ denote the three eigenvectors of $H(\theta)$ and $u = (0,1,0)^T$. We express $u$ as:
\begin{align*}
    u = \phi_1 v_1 + \phi_2 v_2 + \phi_3 v_3,
\end{align*}
where $\phi^2_1 + \phi^2_2 + \phi^2_3 = 1$. We then have:
\begin{align*}
    \lambda_1\alpha^2 = u^T H(\theta) u &\leq \phi^2_1 \lambda_{\max}(H(\theta)) + \left(\tr(H(\theta)) - \lambda_{\max}(H(\theta)) - k\right)( \phi^2_2 + \phi^2_3 ) \\
    &\leq  \phi^2_1 (\lambda_{\max}(A) + \lambda_{\max}(B)) + (\tr(H(\theta)) - \lambda_{\max}(A) - k_1 + \lambda_2)( 1 - \phi^2_1 ) \\
    &\leq   \phi^2_1 (1.105\lambda_1\alpha^2 + 0.01201\lambda_1\alpha^2) + (\tr(B) + \lambda_2)( 1 - \phi^2_1 )\\
    &\leq  \phi^2_1 \cdot 1.12\lambda_1\alpha^2 + (0.012\lambda_1\alpha^2 + 0.001\lambda_1\alpha^2)( 1 - \phi^2_1 )\\
    &= \lambda_1\alpha^2 \left(1.12\phi^2_1 + 0.013(1-\phi^2_1)\right).
\end{align*}
Thus $1.12\phi^2_1 + 0.013(1-\phi^2_1) \geq 1$. This implies $\phi_1 \geq 0.9$, which means $|\cos(v_1, u)| \geq 0.9$. Therefore, we have completed the proof of Lemma \ref{lemma:paras} (ii).

\subsubsection{Proof of Theorem \ref{eos_theory}}

Theorem \ref{eos_theory} is a direct corollary of Proposition \ref{proposition:C1} and Lemma \ref{lemma:paras} (ii).

\subsection{Proof of Theorem \ref{converge}}
Based on Lemma \ref{lemma:paras}, we have $\beta_2 > 0$. Using the condition that $\alpha(t)\beta_2(t) < 1$ and the lower bound of $\alpha$ in Theorem \ref{eos_theory}, we can conclude that for any $\epsilon > 0$, there exists $t_\epsilon$ such that:
\begin{align}
    \label{eps-bound}
    \alpha(t)\beta_2(t) \geq 1 - \epsilon \quad \text{for} \quad t \geq t_\epsilon.
\end{align}
Otherwise, $\alpha(t)\beta_2(t)$ would approach infinity.

If there exists $T^\star$ such that:
\begin{align}
    \label{ineqalpha}
    \lambda_1\eta\alpha^2 \geq 2 \quad \text{for} \quad t \geq T^\star,
\end{align}
then we have:
\begin{align*}
    \beta^2_1(t) \geq \beta^2_1(T^\star) > 0 \quad \text{for} \quad t \geq T^\star.
\end{align*}
Based on $\alpha \geq \sqrt{\frac{2}{\lambda_1\eta}}$ and $\alpha(t)\beta_2(t) < 1$, we can derive $\beta_2(t) < \sqrt{\frac{\lambda_1\eta}{2}}$. Using equation \eqref{eps-bound}, it is straightforward to prove that there exists $T_1 > T^\star$ such that:
\begin{align*}
    \lambda_2\beta_{2}(t)(1-\alpha(t)\beta_{2}(t)) \leq \frac{\lambda_1\beta^2_{1}(T^\star)\alpha(t)}{2} \quad \text{for} \quad t \geq T_1.
\end{align*}
Thus, we have:
\begin{align*}
    \alpha(t+1) \leq \alpha(t) - \frac{\lambda_1\beta^2_{1}(T^\star)\alpha(t)}{2},
\end{align*}
which contradicts equation \eqref{ineqalpha}. Therefore, we have proven that there exist infinitely many $t$ such that $\lambda_1\eta\alpha^2 < 2$.

If there exists a $\delta > 0$ which does not satisfy the above proposition, then for any $t > t_0$, we have $\beta_2(t) < \sqrt{\frac{\lambda_1\eta}{2+\delta}}$. Combining this with our previous conclusion, we find that there exist infinitely many $t$ such that:
\begin{align*}
    \alpha(t)\beta_2(t) < \sqrt{\frac{2}{\lambda_1\eta}} \cdot \sqrt{\frac{\lambda_1\eta}{2+\delta}} = \sqrt{\frac{2}{2+\delta}}.
\end{align*}
This immediately contradicts Equation \eqref{eps-bound} when we set $\epsilon = 1 - \sqrt{\frac{2}{2+\delta}}$. So we have now proved that for any $\delta > 0$ and $\epsilon > 0$, there exists a time $\hat{T}(\delta, \epsilon)$ such that for any $t \geq \hat{T}(\delta, \epsilon)$:
\begin{align}
    \label{losseq1}
    \lambda_1\alpha^2(t) \leq 2 + \delta, \quad 1-\alpha(t)\beta_2(t) \leq \epsilon.
\end{align}
Finally, we will prove that for any $d > 0$, there exists a time $T_d$ such that $\beta^2_1(t) < d$ for $t > T_d$. If this were not the case, there would be infinitely many $t$ such that $\lambda_1\eta\alpha^2(t) \geq 1.9$. This implies that $\beta_2(t) < \sqrt{\frac{\lambda_1\eta}{1.9}}$ for any $t$. We set $\epsilon_1, \delta > 0$ such that:
\begin{align*}
    \left(1+\frac{\lambda_2\epsilon_1(2+\delta)}{\lambda_1(1-\epsilon_1)}\right)\left(1 - \lambda_1\eta d + \frac{\lambda_1\eta^2\epsilon_1}{\sqrt{1.9}}\right) = y < 1.
\end{align*}
When $t > \hat{T}(\delta, \epsilon)$ and $\beta^2_2(t) \geq d$, we have:
\begin{align*}
    \alpha(t+1)\beta_2(t+1) \leq \left(1+\frac{\lambda_2\epsilon_1(2+\delta)}{\lambda_1(1-\epsilon_1)}\right)\left(1 - \lambda_1\eta d + \frac{\lambda_1\eta^2\epsilon_1}{\sqrt{1.9}}\right) = y.
\end{align*}

This immediately contradicts our previous conclusion when we set $\epsilon = 1-y > 0$. Therefore, for any $d > 0$, there exists a time $T_d$ such that $\beta^2_2(t) < d$ for $t > T_d$. Combining this with Equation \eqref{losseq1}, we can conclude that for any $\epsilon \geq 0$, there exists a time $T_\epsilon$ such that for $t \geq T_\epsilon$, $L(\theta(t)) \leq \epsilon$.

We now know that for any $\epsilon$ and $\delta$, there exists $\tilde{T}(\delta, \epsilon)$ such that for $t \geq \tilde{T}(\delta, \epsilon)$, we have:
\begin{align*}
    \lambda_1\eta\alpha^2 (t) \leq 2 + \delta, \quad |\beta_1(t)| \leq \epsilon.
\end{align*}
Recall that for $\theta = (\alpha, \beta_1, \beta_2)^T$, we have:
\begin{align*}
    S(\theta) = \max\, \lambda_1\,(t_{2}\alpha+ t_{1}\beta_{1})^2 + \lambda_2\,(t_{3}\alpha + t_{1}\beta_{2})^2 \quad \text{subject to} \quad   t^2_1 + t^2_2 + t^2_3 \leq 1.
\end{align*}
When $\beta_1 = 0$, $\alpha > 0$, $1 \leq \lambda_1\eta\alpha^2 \leq 2 + \delta$, and $0 \leq \beta_2 \leq \frac{1}{\alpha}$, we have:
\begin{align*}
    \begin{aligned}
        S(\theta) &= \max\, \lambda_1\alpha^2 t^2_2 + \lambda_2\,(t_{3}\alpha + t_{1}\beta_{2})^2 \quad \text{subject to} \quad   t^2_1 + t^2_2 + t^2_3 \leq 1 \\
        &\leq \max\, \lambda_1\alpha^2 t^2_2 + \lambda_2(\alpha^2 + \beta_{2}^2) (t^2_1 + t^2_3) \quad \text{subject to} \quad   t^2_1 + t^2_2 + t^2_3 \leq 1.
    \end{aligned}
\end{align*} Since $\lambda_2\beta^2_2 \leq \frac{\lambda_2}{\alpha^2} \leq \lambda_1\lambda_2\eta \leq \eta \leq \frac{1}{100\eta} \leq \frac{\lambda_1\alpha^2}{100}$, we get that $\lambda_2 \alpha^2 + \lambda_2\beta^2_2 < \lambda_1\alpha^2$. Thus, when $\beta_1 = 0$, $\alpha > 0$, $1 \leq \lambda_1\eta\alpha^2 \leq 2 + \delta$, and $0 \leq \beta_2 \leq \frac{1}{\alpha}$, we have $\|H(\theta)\|_{2} = \lambda_1\alpha^2 \leq \frac{2 +\delta}{\eta}$. 

We can see that set $A = \{(\alpha, \beta_2, t_1, t_2, t_3) \, |\, \alpha > 0, 1 \leq \lambda_1\eta\alpha^2 \leq 2 + \delta, 0 \leq \beta_2 \leq \frac{1}{\alpha}, t^2_1 + t^2_2 + t^2_3 \leq 1 \}$ is compact in $\mathbb{R}^5$. Now we define:
\begin{align*}
    F(\alpha, \beta_1, \beta_2, t_1, t_2, t_3) &= \lambda_1\,(t_{2}\alpha+ t_{1}\beta_{1})^2 + \lambda_2\,(t_{3}\alpha + t_{1}\beta_{2})^2, \\
    g(\beta_1) &= \max_{(\alpha, \beta_2, t_1, t_2, t_3) \in A} F(\alpha, \beta_1, \beta_2, t_1, t_2, t_3).
\end{align*}
It is apparent that $F$ is a continuous function. Thus, based on Berge's Maximum Theorem, $g$ is also a continuous function. Since $g(0) \leq 2 + \delta$, we know there exists $\epsilon_\delta$ such that when $|\beta_1(t)| \leq \epsilon_\delta$, we have $g(\beta_1) \leq (2 + 2\delta)/\eta$. Therefore, we know that for $t \geq \widetilde{T}(\delta, \epsilon_\delta)$, $\|H(\theta(t))\|_{2} \leq (2 + 2\delta)/\eta$, thus completing the proof.

\subsection{Proof of Theorem \ref{pro_and_self}}

We consider the following initialization:

\begin{align*}
    1.1 \leq \eta\lambda_1\alpha^2(0) \leq 1.5, \quad  \beta_2(0) \geq \max\left\{\frac{\sqrt{6\eta\lambda_1}}{20}, \frac{3}{20\alpha(0)} ,\alpha(0)\right\},
\end{align*}
\begin{align*}
    \frac{\lambda_2\beta_2(0)}{500\lambda_1\alpha(0)}(1-\alpha(0)\beta_{2}(0)) \leq \beta^2_{1}(0) \leq \frac{\lambda_2\beta_2(0)}{50\lambda_1\alpha(0)}\left(1-\alpha(0)\beta_{2}(0)\right).
\end{align*}
Based on the result and proof in Theorem \ref{eos_theory}, we know that for any $t \geq 0$, $\alpha(t)\beta_2(t) \geq 0.15$.

Let $T$ be the largest time such that $\alpha$ consistently increases before $T$, $T_1$ be the largest time such that $\alpha(t)\beta_2(t) \leq 0.85$ for $t \leq T_1$, $T_2$ be the first time such that $\lambda_1\eta\alpha^2 \geq 2$, and $T_3$ be the first time such that 
\begin{align*}
    \beta^2_{1}(t) > \frac{\lambda_2\beta_2(t)}{50\lambda_1\alpha(t)}(1-\alpha(t)\beta_{2}(t)).
\end{align*}
Since $\beta^2_1(1) \leq \frac{1}{4}\beta^2_1(0)$, we know $T_3 > \min\{T_1, T_2 \}$. For $t \leq T_3$, we have the following equation:
\begin{align*}
    \frac{\beta_2(t)}{\alpha(t)} \leq \max\left\{\frac{\beta_2(0)}{\alpha(0)}, \frac{50}{49}\right\}.
\end{align*}
Thus we have:
\begin{align*}
    \alpha(t)\beta_2(t) \leq \frac{50}{49} \cdot \frac{\alpha^2(t)}{\alpha^2(0)} \cdot \alpha(0)\beta_2(0).
\end{align*}

Then, based on the range of $\alpha(0)$ and the range of $\alpha^2(t)$ in the previous theorem, we have $\alpha(T_3)\beta_2(T_3) \leq 0.78$, which yields $T_1 > T_3$. Thereby, we get $T_2 < T_3 < T_1$. We also know $T > T_3$. Based on the definition of $T_1$, we know:
\begin{align*}
    \frac{\max_{0\leq t \leq T_{2}}\alpha(t)\beta_{2}(t)\left(1-\alpha(t)\beta_{2}(t)\right) }
{\min_{T_{2}\leq t \leq T_3}\alpha(t)\beta_{2}(t)(1-\alpha(t)\beta_{2}(t)) } \leq 2.
\end{align*}
Then we can easily obtain:
\begin{align*}
    \frac{\max_{0\leq t \leq T_{2}}(\alpha^2(t+1) - \alpha^2(t)) }{\min_{T_2< t \leq T_{3}}(\alpha^2(t+1) - \alpha^2(t)) } \leq 2.1.
\end{align*}
Similar to the previous proof, we set $n_3$ and $n_4$ such that:
\begin{align*}
    100\eta(\alpha^2(t+1) - \alpha^2(t)) &\leq \frac{1}{n_3} \quad \text{for} \quad 0 \leq t \leq T_2, \\
    100\eta(\alpha^2(t+1) - \alpha^2(t)) &\geq \frac{1}{n_4} \quad \text{for} \quad T_2 < t \leq T_3.
\end{align*}
Then we have $\min\{n_3, n_4\} \geq 100$ and $\frac{n_3}{n_4} \geq \frac{1}{2.1}$.

Similar to the previous proof, we obtain: 
\begin{align*}
    \left| \prod_{i=0}^{T_3} (100\eta\alpha^2(i) - 1)\right| \leq \left(\frac{2^{\frac{1}{2}}}{e^{\frac{1}{2}}}\right)^{n_3}\cdot\left(\frac{\lambda^\lambda}{e^{\lambda - 1}}\right)^{n_4}\cdot\sqrt{2\lambda}\cdot e^{\frac{10n_4 + n_3}{12n_3n_4}} \leq 0.1 \quad \text{when} \quad \lambda = 1.37.
\end{align*}
If $\lambda_1\eta\alpha^2(T_3) \leq 2.37$, based on the fact that $T_3 \leq T_1$, we have:
\begin{align*}
    \beta^2_1(T_3) &\leq \frac{1}{100} \cdot \frac{\lambda_2\beta_2(0)}{50\lambda_1\alpha(0)}(1-\alpha(0)\beta_2(0)) = \frac{1}{100} \cdot \frac{\lambda_2\alpha(0)\beta_2(0)(1-\alpha(0)\beta_2(0))}{50\lambda_1\alpha^2(0)} \\
    &\leq \frac{1}{40} \cdot \frac{\lambda_2\alpha(0)\beta_2(0)(1-\alpha(0)\beta_2(0))}{50\lambda_1\alpha^2(T_3)} \leq \frac{1}{20} \cdot \frac{\lambda_2\alpha(T_3)\beta_2(T_3)(1-\alpha(T_3)\beta_2(T_3))}{50\lambda_1\alpha^2(T_3)} \\
    &= \frac{1}{20} \cdot \frac{\lambda_2\beta_2(T_3)(1-\alpha(T_3)\beta_2(T_3))}{50\lambda_1\alpha(T_3)}. 
\end{align*}
This contradicts the definition of $T_3$. Thus, we conclude that $\lambda_1 \eta \alpha^2(T_3) > 2.37$, which implies $\lambda_1 \eta \alpha^2(T) > 2.37$ and $S(\theta(T)) > \frac{2.37}{\eta}$.

Since $\lambda_1 \eta \alpha^2(t) > 2.37$ for $T_3 \leq t \leq T$, if $T \geq T_3 + 9$, we have either $\beta^2_1(T_3 + 8) \geq (\frac{\sqrt{10}}{6\sqrt{\lambda_1}})^2 = \frac{5}{18\lambda_1}$ or:
\begin{align*}
    \beta^2_1(T_3 + 8) &\geq 1.37^{16} \cdot \frac{\lambda_2\beta_2(T_3)(1-\alpha(T_3)\beta_2(T_3))}{50\lambda_1\alpha(T_3)} \geq \frac{3\lambda_2\beta_2(T_3)(1-\alpha(T_3)\beta_2(T_3))}{\lambda_1\alpha(T_3)} \\
    &= \frac{3\lambda_2\alpha(T_3)\beta_2(T_3)(1-\alpha(T_3)\beta_2(T_3))}{\lambda_1\alpha^2(T_3)} \geq \frac{3\lambda_2\alpha(T_3)\beta_2(T_3)(1-\alpha(T_3)\beta_2(T_3))}{\lambda_1\alpha^2(T_3+8)} \\
    &\geq \frac{3}{2} \cdot \frac{\lambda_2\alpha(T_3 + 8)\beta_2(T_3+8)(1-\alpha(T_3+8)\beta_2(T_3+8))}{\lambda_1\alpha(T_3+8)}. 
\end{align*}
In the first case, it is also easy to prove that:
\begin{align*}
    \lambda_1\alpha^2(T_3+8)\beta^2_1(T_3+8) &\geq \frac{5\alpha^2}{18} \geq \frac{5}{18\lambda_1\eta} \geq \frac{5\lambda_2}{18\eta} \geq \frac{\lambda_2}{4} \\
    &\geq \lambda_2\alpha(T_3 + 8)\beta_2(T_3+8)(1-\alpha(T_3+8)\beta_2(T_3+8)).
\end{align*}
This shows that $T \leq T_3 + 8$, which contradicts our assumption. Therefore, $T \leq T_3 + 8$.
Based on the limited movement speed of $\alpha\beta_2$, we can easily prove that $\alpha(T)\beta_2(T) \leq \alpha(T_3)\beta_2(T_3) + 4\eta^2 \leq 0.82$. So we know that $T_1 \geq T$. 

Let $T_4 \geq T$ be the first time such that
\begin{align*}
    \beta^2_{1}(T_4) > \eta\lambda_2\alpha(T)\beta_2(T)(1-\alpha(T)\beta_{2}(T)).
\end{align*}
Note that for the clipping constant, we have:
\begin{align*}
    \frac{5}{18\lambda_1} \geq \frac{5\lambda_2}{18} > \eta\lambda_2\alpha(T)\beta_2(T)(1-\alpha(T)\beta_{2}(T)).
\end{align*}
Let $T_5$ be the first time after $T$ such that $\lambda_1\eta\alpha^2 < 2$. Note that $T_5 < \infty$; otherwise, $L(\theta)$ cannot converge, which contradicts Theorem \ref{converge}.
For $T \leq t < T_4$, we have:
\begin{align*}
    \begin{aligned}
        \alpha(t+1) &\leq \alpha(t)  + \eta\lambda_2\beta_{2}(t)(1-\alpha(t)\beta_{2}(t)), \\
        \alpha(t+1) &\geq \alpha(t) - \frac{1}{4}\eta^2\lambda_1\lambda_2 \alpha(t)  \geq \alpha(t) - \frac{\eta^2}{4} \alpha(t).
    \end{aligned}
\end{align*}
Thus, for $T \leq t < T_4$, we have:
\begin{align}
    \label{3.2.1.1}
    \lambda_1\eta\alpha^2(t+1) \geq  (1 - \frac{\eta^2}{4})^2 \lambda_1\eta\alpha^2(t).
\end{align}
We already know that
\begin{align*}
    \beta^2_1(T) > \frac{\lambda_2\beta_2(T)}{\lambda_1\alpha(T)}(1-\alpha(T)\beta_{2}(T)).
\end{align*}
If $T_4 \geq T + 3$, let $z = \lambda_1\eta\alpha^2(T) \geq 2.37$. Then we have the following property:
\begin{align*}
    (z-1)^2((1 - \frac{\eta^2}{4})^2z  - 1)^2 \geq z.
\end{align*}
Using this property, we will have either $\beta^2_1(T + 2) \geq \frac{5}{18\lambda_1}$ or:
\begin{align*}
    \begin{aligned}
        \beta^2_1(T + 2) &\geq (z-1)^2(z - 1 - 1.6\eta^2)^2 \beta^2_1(T) \geq \lambda_1\eta\alpha^2(T)\beta^2_1(T) \\
        & > \lambda_2\eta\alpha(T)\beta_2(T)(1 - \alpha(T)\beta_2(T)).
    \end{aligned}
\end{align*}
Thus, we get $T_4 \geq T + 2$, which leads to a contradiction. Therefore, $T_4 \leq T + 2$. Then, using (\ref{3.2.1.1}), we know $T_5 > T_4$.

For $t \leq T_4 < T_5$, we have
\begin{align*}
    \begin{aligned}
        1 - \frac{\eta^2}{4} \leq \frac{\alpha(t + 1)}{\alpha(t)} =  1 + \frac{\eta\lambda_2\beta_2(t)(1-\alpha(t)\beta_2(t))}{\alpha(t)} &\leq 1 + \frac{\eta^2\alpha(t)\beta_2(t)(1-\alpha(t)\beta_2(t))}{\lambda_1\eta\alpha^2(t)} \leq 1 + \frac{\eta^2}{8}.\\
        1 \leq \frac{\beta_2(t+1)}{\beta_2(t)} &\leq 1 + \eta\lambda_2.
    \end{aligned}
\end{align*}
Based on the fact that $0.15 \leq \alpha(T)\beta_2(T) \leq 0.82$ and $\beta^2_1(T+1) \geq 1.37^2\beta^2_1(T)$, it is easy to prove that for $T \leq t < T_4$, we have $\alpha(t+1) < \alpha(t)$.

For $T_4 \leq t < T_5$, we have:
\begin{align*}
        \beta^2_1(t) \geq \beta^2_1(T_4) > \lambda_2\eta\alpha(T)\beta_2(T)(1 - \alpha(T)\beta_2(T)).
\end{align*}
Since $0.15 \leq \alpha(T)\beta_2(T) \leq 0.85$, we know:
\begin{align*}
    \begin{aligned}
        \frac{\lambda_2\beta_2(t)}{\lambda_1\alpha(t)}(1-\alpha(t)\beta_{2}(t)) &=  \frac{\eta\lambda_2\alpha(t)\beta_2(t)(1-\alpha(t)\beta_{2}(t))}{\lambda_1\eta\alpha^2(t)} \\ 
        &\leq \frac{\alpha(t)\beta_2(t)(1-\alpha(t)\beta_{2}(t))}{\alpha(T)\beta_2(T)(1-\alpha(T)\beta_{2}(T))} \cdot \frac{\lambda_1\eta\alpha^2(T)}{\lambda_1\eta\alpha^2(t)} \cdot \frac{\eta\lambda_2\alpha(T)\beta_2(T)(1-\alpha(T)\beta_{2}(T))}{\lambda_1\eta\alpha^2(T)} \\
        &\leq 2 \cdot \frac{\lambda_1\eta\alpha^2(T)}{2} \cdot \frac{\eta\lambda_2\alpha(T)\beta_2(T)(1-\alpha(T)\beta_{2}(T))}{\lambda_1\eta\alpha^2(T)}\\
        &= \lambda_2\eta\alpha(T)\beta_2(T)(1 - \alpha(T)\beta_2(T)) < \beta^2_1(t).
    \end{aligned}
\end{align*}
Thus, for $T_4 \leq t < T_5$, we have $\alpha(t+1) < \alpha(t)$. Therefore, we conclude that for $T \leq t < T_5$, we have $\alpha(t+1) < \alpha(t)$.

Let $T_5 < T_6 \leq \infty$ be the first time after $T_5$ such that $\alpha(t+1) \geq \alpha(t)$. We know that $\lambda_1\alpha^2(T_6) < \frac{2}{\eta}$.
Recall that: 
\begin{align*}
    H(\theta) = \begin{pmatrix}
\lambda_1\beta_{1}^2 + \lambda_2\beta_{2}^2 & 2\lambda_1\alpha\beta_{1} &  2\lambda_2\alpha\beta_{2} - \lambda_2  \\
2\lambda_1\alpha\beta_{1}  &  \lambda_1\alpha^2   &  0  \\
2\lambda_2\alpha\beta_{2} - \lambda_2 & 0  &  \lambda_2\alpha^2
\end{pmatrix}.
\end{align*}
If $T_6 = \infty$, then we know that $\left\|H(\theta(T_6))\right\|_2 = \lambda_1\alpha^2(T_6) < \frac{2}{\eta}$.

For the following part, we suppose $T_6 < \infty$. At $T_6$, we have:
\begin{align*}
    \lambda_1\alpha(T_6)\beta^2_1(T_6) \leq \lambda_2\beta(T_6)(1-\alpha(T_6)\beta_{2}(T_6)).
\end{align*}
If $\lambda_1\alpha^2(T_6) \leq \frac{1.75}{\eta}$, then we directly have $\left\|H(\theta(T_6))\right\| \leq 1.12 \cdot \frac{1.75}{\eta} < \frac{2}{\eta}$. Next we suppose $\lambda_1\alpha^2(T_6)  \geq \frac{1.75}{\eta}$. We then have:
\begin{align*}
     \beta^2_1(T_6) \leq \frac{\lambda_2\alpha(T_6)\beta_2(T_6)(1-\alpha(T_6)\beta_{2}(T_6))}{\lambda_1\alpha^2(T_6)} \leq \frac{\lambda_2\eta}{7}.
\end{align*}
In the following proof, for conciseness, we use $\theta$ to denote $\theta(T_6)$. Let $u = (t_1, t_2, t_3)^T$. We have:
\begin{align*}
    u^T H(\theta) u &\leq (\lambda_1\beta_{1}^2 + \lambda_2\beta_{2}^2) t^2_1 + \lambda_1\alpha^2 t^2_2 + \lambda_1\alpha^2 t^2_3 + 4\lambda_1\alpha\beta_{1}t_1 t_2 + 2\lambda_2(2\alpha\beta_2 - 1)t_1 t_3 \\
    & \leq (\lambda_1\beta_{1}^2 + \lambda_2\beta_{2}^2 + \lambda_2) t^2_1 + \lambda_1\alpha^2 t^2_2 + (\lambda_2\alpha^2 + \lambda_2) t^2_3 + 4\lambda_1\alpha\beta_{1}t_1 t_2 \\
    & \leq (\lambda_1\beta_{1}^2 + \lambda_2\beta_{2}^2 + \lambda_2) t^2_1 + \lambda_1\alpha^2 t^2_2 + (\lambda_2\alpha^2 + \lambda_2) t^2_3 + 5\lambda_1\beta^2_{1}t^2_2 + \frac{4\lambda_1\alpha^2t^2_1}{5} \\
    & = (\lambda_1\beta_{1}^2 + \lambda_2\beta_{2}^2 + \lambda_2 + \frac{4}{5} \cdot \lambda_1\alpha^2) t^2_1 + (\lambda_1\alpha^2 + 5\lambda_1\beta^2_1) t^2_2 + (\lambda_2\alpha^2 + \lambda_2) t^2_3.
\end{align*}
We note that:
\begin{align*}
    \lambda_2\alpha^2 + \lambda_2 &< \frac{2}{\eta}, \\
    \lambda_1\beta_{1}^2 + \lambda_2\beta_{2}^2 + \lambda_2 + \frac{4}{5} \cdot \lambda_1\alpha^2 &\leq \frac{\eta}{7} + \frac{\lambda_2}{\alpha^2} + \lambda_2 + \frac{4}{5} \cdot \lambda_1\alpha^2 < \frac{2}{\eta}, \\
    \lambda_1\alpha^2 + 5\lambda_1\beta^2_1 &\leq  \frac{2}{\eta} + \frac{5\eta\lambda_1\lambda_2}{7} < \frac{2}{\eta} + \eta.
\end{align*}
Thus, we know that for any $u \in \RR^3$, $u^T H(\theta) u \leq \frac{2}{\eta} + \eta$. Therefore, $S(\theta(T_6)) \leq \frac{2}{\eta} + \eta$.

\subsection{Proof for Lemma \ref{lemma:lhat-l2}}

We have the following inequalities:
\begin{align*}
    \frac{1 - \sqrt{\frac{4.2}{\lambda_1\eta}}\beta_2(t)}{1 - \sqrt{\frac{2}{\lambda_1\eta}}\beta_2(t)} \leq \frac{1 - \alpha(t)\beta_2(t)}{1 - \sqrt{\frac{2}{\lambda_1\eta}}\beta_2(t)} \leq \frac{1 - \sqrt{\frac{1.1}{\lambda_1\eta}}\beta_2(t)}{1 - \sqrt{\frac{2}{\lambda_1\eta}}\beta_2(t)}.
\end{align*}
Using the condition $\sqrt{\frac{2}{\lambda_1\eta}}\beta_2(t) \leq \frac{1}{2}$, we obtain:
\begin{align*}
     \frac{1 - \sqrt{\frac{4.2}{\lambda_1\eta}}\beta_2(t)}{1 - \sqrt{\frac{2}{\lambda_1\eta}}\beta_2(t)} &\geq 2\left(1 - \frac{1}{2}\sqrt{\frac{4.2}{2}}\right), \\
     \frac{1 - \sqrt{\frac{1.1}{\lambda_1\eta}}\beta_2(t)}{1 - \sqrt{\frac{2}{\lambda_1\eta}}\beta_2(t)} &\leq 2\left(1 - \frac{1}{2}\sqrt{\frac{1.1}{2}}\right).
\end{align*}
Therefore, we can conclude:
\begin{align*}
    0.75 \leq \frac{1}{4\left(1 - \frac{1}{2}\sqrt{\frac{1.1}{2}}\right)^2} \leq \frac{\hat{L}(\theta)}{L_2(\theta)} \leq \frac{1}{4\left(1 - \frac{1}{2}\sqrt{\frac{4.2}{2}}\right)^2} \leq 3.3.
\end{align*}
Moreover, when $\alpha(t) \leq \sqrt{\frac{2}{\lambda_1\eta}}$, we have $\hat{L}(\theta) \leq L_2(\theta)$.

\subsection{Proof of Theorem \ref{converge_rate}}

Based on the initialization and Theorem \ref{eos_theory}, we know that $\frac{\sqrt{2}\beta_2(t)}{\sqrt{\lambda_1\eta}} \ge 0.15$ for $t \ge 0$. According to the condition given by Theorem \ref{eos_theory}, we only need to consider $t$ such that $\frac{\sqrt{2}\beta_2(t)}{\sqrt{\lambda_1\eta}} \le 0.5$. Using this condition, we can derive:
\begin{align*}
    \frac{1}{2}\cdot\frac{\sqrt{2}}{\sqrt{\lambda_1\eta}}\left(1 - \frac{\sqrt{2}\beta_2(t)}{\sqrt{\lambda_1\eta}}\right)  \le \alpha(t)(1 - \alpha(t)\beta_2(t)) \le  2\cdot\frac{\sqrt{2}}{\sqrt{\lambda_1\eta}}\left(1 - \frac{\sqrt{2}\beta_2(t)}{\sqrt{\lambda_1\eta}}\right). 
\end{align*}
Thus we have:
\begin{align*}
     \beta_2(t) + \lambda_2\frac{\sqrt{\eta}}{\sqrt{2\lambda_1}}\left( 1 - \frac{\sqrt{2}\beta_2(t)}{\sqrt{\lambda_1\eta}}\right) \le \beta_2(t+1) \le \beta_2(t) + \lambda_2\frac{2\sqrt{2\eta}}{\sqrt{\lambda_1}}\left( 1 - \frac{\sqrt{2}\beta_2(t)}{\sqrt{\lambda_1\eta}}\right).
\end{align*}
Then we obtain:
\begin{align*}
     \left(1 - \frac{4\lambda_2}{\lambda_1}\right)\left( 1 - \frac{\sqrt{2}\beta_2(t)}{\sqrt{\lambda_1\eta}}\right) \le 1 - \frac{\sqrt{2}\beta_2(t+1)}{\sqrt{\lambda_1\eta}} \le \left(1 - \frac{\lambda_2}{\lambda_1}\right)\left( 1 - \frac{\sqrt{2}\beta_2(t)}{\sqrt{\lambda_1\eta}}\right).
\end{align*}
Finally, we have:
\begin{align*}
    \left(1 - \frac{4\lambda_2}{\lambda_1}\right)^2\left( 1 - \frac{\sqrt{2}\beta_2(t)}{\sqrt{\lambda_1\eta}}\right)^2 \le \left( 1 - \frac{\sqrt{2}\beta_2(t+1)}{\sqrt{\lambda_1\eta}}\right)^2 \le \left(1 - \frac{\lambda_2}{\lambda_1}\right)^2\left( 1 - \frac{\sqrt{2}\beta_2(t)}{\sqrt{\lambda_1\eta}}\right)^2.
\end{align*}

Therefore, we can observe that the decrease speed is $\exp\{-\frac{k\lambda_2}{\lambda_1}\}$ for $1 \le k \le 4$, and it is independent of the learning rate $\eta$. This indicates that the condition number of the input covariance matrix significantly affects the training speed in EoS regime.

\section{Proof of Section 5}
\subsection{Proof of Lemma \ref{gf0}}

First, we consider a minimal $\theta^{\star} = (\alpha, \beta_1, \beta_2)$. We have $\beta_1 = 0$ and $\alpha\beta_2 = 1$. Let $u = (t_{1}, t_{2}, t_{3}) \in \RR^3$ such that $\| u \|_{2} = 1$. Then we have:
\begin{align*}
    u^T H(\theta^\star) u = \lambda_1 (t_{2}\alpha)^2 + \lambda_2 (t_{3}\alpha + t_{1}\beta_{2})^2 \le \lambda_1\alpha^2t^2_2 + \lambda_2(\alpha^2 + \beta^2_2)(t^2_1 + t^2_3).
\end{align*}
Since $t^2_1 + t^2_2 + t^2_3 = 1$, we can see:
\begin{align*}
     u^T H(\theta^\star) u \le \lambda_1\alpha^2t^2_2 + \lambda_2(\alpha^2 + \beta^2_2)(1 - t^2_2) \le \max\{\lambda_1\alpha^2, \lambda_2\alpha^2 + \lambda_2\beta^2_2\}.
\end{align*}
Other other hand, when $u = (0,1,0)$ we have $ S(\theta) = \lambda_1\alpha^2$.
When $t_1 = \frac{\beta_2}{\sqrt{\alpha^2 + \beta^2_2}}$, $t_3 = \frac{\alpha}{\sqrt{\alpha^2 + \beta^2_2}}$, we have $S(\theta)$ =  $\lambda_2\alpha^2 + \lambda_2\beta^2_2$.
Therefore, we know that
\begin{align*}
    S(\theta) = \max\{\lambda_1\alpha^2, \lambda_2\alpha^2 + \lambda_2\beta^2_2\}.
\end{align*}
Thus, we have
\begin{align*}
    \begin{aligned}
         S(\theta) = \lambda_1\alpha^2 &\Leftrightarrow  \lambda_1\alpha^2 \ge \lambda_2\alpha^2 + \lambda_2\beta^2_2 \Leftrightarrow \alpha^4 \ge \frac{\lambda_2}{\lambda_1 - \lambda_2} \\ 
         &\Leftrightarrow \alpha^2 - \beta^2_2 \ge \sqrt{\frac{\lambda_2}{\lambda_1 - \lambda_2}} - \sqrt{\frac{\lambda_1 - \lambda_2}{\lambda_2}}.
    \end{aligned}
\end{align*}
Now we consider gradient flow (GF) and let $\phi(\theta(0)) = (\alpha(\infty), \beta_1(\infty), \beta_2(\infty))^T$. Since we know that GF preserves layer norm difference 
 as shown in Theorem 2.1 in \citet{du2018algorithmic}, we can get:
\begin{align}
    \label{gf1}
    \alpha^2(\infty) - \beta^2_2(\infty)  &= \alpha^2(0) - \beta^2_2(0) - \beta^2_1(0) \le 0, \\
    \label{gf2} 
    \alpha(\infty)\beta_2(\infty) &= 1.
\end{align}
Note that $\beta^2_1(0) \le \frac{\lambda_2\eta}{4}$ and $\eta \le 0.1$. We can then obtain:
\begin{align}
    \label{gfineq1}
   \alpha^2(0) - \beta^2_2(0) - \frac{\lambda_2}{40} \le \alpha^2(\infty) - \beta^2_2(\infty) \le \alpha^2(0) - \beta^2_2(0).
\end{align}
Based on the fact that $\alpha^2(0) \ge \frac{1}{\lambda_1\eta}$ and $\alpha(0)\beta_2(0) \le 1$, we have:
\begin{align*}
    \alpha^2(0) - \beta^2_2(0) - \frac{\lambda_2}{40} \ge \frac{1}{\lambda_1\eta} - \lambda_1\eta - \frac{\lambda_2}{40} > \sqrt{\frac{\lambda_2}{\lambda_1 - \lambda_2}} - \sqrt{\frac{\lambda_1 - \lambda_2}{\lambda_2}}.
\end{align*}
Thus we get that $\phi(\theta(0)) = \lambda_1\alpha^2(\infty)$. Now we need to estimate the scale of $\alpha^2(\infty)$.

Recall that $\gamma = \alpha^2(0) - \beta^2_2(0) \le 0$. Using (\ref{gf2}) and $\alpha^2(\infty) - \beta^2_2(\infty) \le \gamma$, we can get $\alpha^2 \le \frac{\gamma + \sqrt{4 + \gamma^2}}{2}$. Using (\ref{gf2}) and $\alpha^2(\infty) - \beta^2_2(\infty) \ge \gamma - \lambda_2$, we can get
\begin{align*}
    \alpha^2 \ge \frac{\gamma - \lambda_2 + \sqrt{4 + (\gamma - \lambda_2)^2}}{2} \ge \frac{\gamma - \lambda_2 + \sqrt{4 + \gamma^2}}{2}.
\end{align*}
Thus we can obtain:
\begin{align*}
    \frac{\lambda_1 (\gamma + \sqrt{4 + \gamma^2} - \lambda_2)}{2} \le \phi(\theta(0)) = \lambda_1\alpha^2(\infty) \le \frac{\lambda_1 (\gamma + \sqrt{4 + \gamma^2})}{2}.
\end{align*}

\subsection{Proof of Lemma \ref{gfb}}

Let $\phi(\theta(t)) = (\alpha(\infty), \beta_1(\infty), \beta_2(\infty))^T$. Using the clipping constant $\frac{\sqrt{10}}{36\sqrt{\lambda_1}}$, we obtain:

\begin{align*}
    \alpha^2(t) - \beta^2_2(t) - \frac{5}{18\lambda_1} \leq \alpha^2(\infty) - \beta^2_2(\infty) \leq \alpha^2(0) - \beta^2_1(0).
\end{align*}
Similar to the proof of Lemma \ref{gf0}, we can derive $\phi(\theta(t)) = \lambda_1\alpha^2(\infty)$. Using Theorem \ref{eos_theory}, we obtain:

\begin{align*}
     \frac{1}{\eta\lambda_1}- \beta^2_2(t) \leq \alpha^2(\infty) - \beta^2_2(\infty) \leq \frac{4.2}{\eta\lambda_1} - \beta^2_1(t).
\end{align*}
Following the proof in Lemma \ref{gf0}, we can derive:
\begin{align*}
    \begin{aligned}
        \alpha^2(\infty) &\geq \frac{1}{2}\left(\frac{1}{\eta\lambda_1}- \beta^2_2(t) + \sqrt{4 + \left(\frac{1}{\eta\lambda_1}- \beta^2_2(t)\right)^2} ~\right), \\
        \alpha^2(\infty) &\leq \frac{1}{2}\left(\frac{4.2}{\eta\lambda_1}- \beta^2_2(t) + \sqrt{4 + \left(\frac{4.2}{\eta\lambda_1}- \beta^2_2(t)\right)^2} ~\right).        
    \end{aligned}
\end{align*}
Consequently we obtain:
\begin{align*}
    \begin{aligned}
        \phi(\theta(t)) &\geq \frac{1 - \lambda_1\eta\beta_2^2(t)}{2\eta} +\frac{\lambda_1\sqrt{4 + \left(\frac{1}{\eta\lambda_1} - \beta_2^2(t)\right)^2}}{2}, \\
        \phi(\theta(t)) &\leq \frac{4.2 - \lambda_1\eta\beta_2^2(t)}{2\eta} +\frac{\lambda_1\sqrt{4 + \left(\frac{4.2}{\eta\lambda_1} - \beta_2^2(t)\right)^2}}{2}. 
    \end{aligned}
\end{align*}
Since $\beta_2(t)$ is monotonically increasing, both $\frac{1}{\eta\lambda_1}- \beta^2_2(t)$ and $\frac{4.2}{\eta\lambda_1}- \beta^2_2(t)$ decrease monotonically. Given that $\gamma + \sqrt{4 + \gamma^2}$ is a monotonically increasing function, we conclude that the two-sided bound of $\phi(\theta(t))$ is a monotonically decreasing function of $t$.

\subsection{Proof of Lemma \ref{stable set}}

Recall that the definition of $\mathbb{D}$ is:
\begin{align*}
        \mathbb{D} = \{\theta:\, 1\le \lambda_1\eta\alpha^2 \le 2; \beta_1 = 0; 0 < \alpha\beta_2 \le 1\}.
\end{align*}
Using the expression of $H(\theta)$, for $\theta \in \mathbb{D}$, we know $\beta_1 = 0$, and we can obtain:
\begin{align*}
    H(\theta) = \begin{pmatrix}
 \lambda_2\beta_{2}^2 & 0 &  2\lambda_2\alpha\beta_{2} - \lambda_2  \\
0  &  \lambda_1\alpha^2   &  0  \\
2\lambda_2\alpha\beta_{2} - \lambda_2 & 0  &  \lambda_2\alpha^2
\end{pmatrix}.
\end{align*}
Let $u = (0,1,0)^T \in \mathbb{R}^3$. We then have:
\begin{align*}
    H(\theta)u = \lambda_1\alpha^2u.
\end{align*}
The other two eigenvalues are those of the matrix:
\begin{align*}
    B = \begin{pmatrix}
\lambda_2\beta_{2}^2 & 2\lambda_2\alpha\beta_{2} - \lambda_2 \\
2\lambda_2\alpha\beta_{2}  - \lambda_2  &  \lambda_2\alpha^2 
\end{pmatrix}.
\end{align*}
Recall that we have proved in Appendix \ref{apendix:D} that when $\alpha\beta_2 < 1$:
\begin{align*}
    \lambda_{\max}(B) \le 
    \left\|B\right\| \le \lambda_2 \beta_2^2 + \lambda_2 \alpha^2 + \lambda_2.
\end{align*}
Based on the condition in $\mathbb{D}$ that $\lambda_1\eta\alpha^2 \ge 1$ and $0 < \alpha\beta_2 < 1$, we have:
\begin{align*}
    \frac{\beta^2_2}{\alpha^2} \le \frac{1}{\alpha^4} \le \lambda^2_1\eta^2.
\end{align*}
Thus, we obtain:
\begin{align*}
     \lambda_{\max}(B) &\le \lambda_2\lambda^2_1\eta^2\alpha^2 + \lambda_2\alpha^2 + \lambda_2 \\
     &\le (\lambda_1\lambda_2\eta^2 + \frac{\lambda_2}{\lambda_1} + \lambda_2\eta)\cdot\lambda_1\alpha^2 \\
     &\le 0.0111 \cdot \lambda_1\alpha^2.
\end{align*}
Therefore, $\lambda_1\alpha^2$ is the largest eigenvalue of $H(\theta)$ and $u$ is its corresponding eigenvector. Thus, $\left\|H(\theta)\right\|_{2} = \lambda_1\alpha^2 \le \frac{2}{\eta}$. Recall that the gradient of $\beta_1$ is $2\lambda_1\alpha^2\beta_1$, which will be $0$ when $\beta_1 = 0$. Therefore, when $\beta_1 = 0$, we have $\nabla L(\theta) \cdot u = 0$. This proves that $\theta$ is in the ``stable set''.

Now we will prove that the update equation for the constrained trajectory is:
\begin{align}
     &\alpha^{\dagger}(t+1) = \mathbf{Clip}\left(\alpha^{\dagger}(t) + \eta\lambda_2\beta^{\dagger}_2(t)(1-\alpha^{\dagger}(t)\beta^{\dagger}_2(t)), \sqrt{\frac{2}{\eta\lambda_1}}\right),
     \label{K1}
    \\
    &\beta_2^{\dagger}(t+1) = \beta_2^{\dagger}(t) + \eta\lambda_2\alpha^{\dagger}(t)(1-\alpha^{\dagger}(t)\beta^{\dagger}_2(t)).
    \label{K2}
\end{align}
It is important to note that $\beta_1(t) \equiv 0$ by the definition of $\mathbb{D}$. Based on the definition of $\mathbb{D}$ and the initialization condition, we know $\alpha^{\dagger}(0)\beta_2^{\dagger}(0) < 1$. First, we want to prove that if $\alpha^{\dagger}(t)\beta_2^{\dagger}(t) < 1$, then we will have \eqref{K1} and \eqref{K2}, as well as $\alpha^{\dagger}(t+1)\beta_2^{\dagger}(t+1) < 1$. Suppose $\alpha^{\dagger}(t)\beta_2^{\dagger}(t) < 1$, then we have:
\begin{align*}
     &1 - \left(\alpha^{\dagger}(t) + \eta\lambda_2\beta^{\dagger}_2(t)\left(1-\alpha^{\dagger}(t)\beta^{\dagger}_2(t)\right)\right) \cdot \left(\beta_2^{\dagger}(t) + \eta\lambda_2\alpha^{\dagger}(t)\left(1-\alpha^{\dagger}(t)\beta^{\dagger}_2(t)\right)\right) \\
     &= \left(1 - \lambda_2\eta\left(\alpha^{\dagger}(t)^2 + \beta^{\dagger}_2(t)^2\right) - \lambda^2_2\eta^2\alpha^{\dagger}(t)\beta^{\dagger}_2(t)\left(1-\alpha^{\dagger}(t)\beta^{\dagger}_2(t)\right)\right)\left(1-\alpha^{\dagger}(t)\beta^{\dagger}_2(t)\right).
\end{align*}
When $\alpha^{\dagger}(t)\beta_2^{\dagger}(t) < 1$, based on the definition of $\mathbb{D}$, we have:
\begin{align*}
    \lambda_2\eta\left(\alpha^{\dagger}(t)^2 + \beta^{\dagger}_2(t)^2\right) <  \lambda_2\eta\alpha^{\dagger}(t)^2 + \frac{\lambda_2\eta}{\alpha^{\dagger}(t)^2} \le \frac{2\lambda_2}{\lambda_1} + \lambda_1\lambda_2\eta^2 \le 0.011, \\
    \lambda^2_2\eta^2\alpha^{\dagger}(t)\beta^{\dagger}_2(t)\left(1-\alpha^{\dagger}(t)\beta^{\dagger}_2(t)\right) \le \frac{\lambda^2_2\eta^2}{4} \le 0.001.
\end{align*}
Thus, we know that if $\alpha^{\dagger}(t)\beta_2^{\dagger}(t) < 1$, then
\begin{align*}
   \left(\alpha^{\dagger}(t) + \eta\lambda_2\beta^{\dagger}_2(t)\left(1-\alpha^{\dagger}(t)\beta^{\dagger}_2(t)\right)\right) \cdot \left(\beta_2^{\dagger}(t) + \eta\lambda_2\alpha^{\dagger}(t)\left(1-\alpha^{\dagger}(t)\beta^{\dagger}_2(t)\right)\right) < 1.
\end{align*}
It is then apparent that: 
\begin{align*}
     &\mathbf{Clip}\left(\alpha^{\dagger}(t) + \eta\lambda_2\beta^{\dagger}_2(t)\left(1-\alpha^{\dagger}(t)\beta^{\dagger}_2(t)\right), \sqrt{\frac{2}{\eta\lambda_1}}\right) \\
     &\quad \cdot \left(\beta_2^{\dagger}(t) + \eta\lambda_2\alpha^{\dagger}(t)\left(1-\alpha^{\dagger}(t)\beta^{\dagger}_2(t)\right)\right) < 1.
\end{align*}
Therefore, we know that:
\begin{align*}
    &\left(\mathbf{Clip}\left(\alpha^{\dagger}(t) + \eta\lambda_2\beta^{\dagger}_2(t)\left(1-\alpha^{\dagger}(t)\beta^{\dagger}_2(t)\right), \sqrt{\frac{2}{\eta\lambda_1}}\right), 0, \right. \\
    &\left. \quad \beta_2^{\dagger}(t) + \eta\lambda_2\alpha^{\dagger}(t)\left(1-\alpha^{\dagger}(t)\beta^{\dagger}_2(t)\right)\right) \in \mathbb{D}.
\end{align*}
It is then apparent that:
\begin{align*}
     &\left(\mathbf{Clip}\left(\alpha^{\dagger}(t) + \eta\lambda_2\beta^{\dagger}_2(t)\left(1-\alpha^{\dagger}(t)\beta^{\dagger}_2(t)\right), \sqrt{\frac{2}{\eta\lambda_1}}\right), 0, \right. \\
     &\left. \quad \beta_2^{\dagger}(t) + \eta\lambda_2\alpha^{\dagger}(t)\left(1-\alpha^{\dagger}(t)\beta^{\dagger}_2(t)\right)\right) \\
     &= \Pi_{\mathbb{D}} \left(\alpha^{\dagger}(t) + \eta\lambda_2\beta^{\dagger}_2(t)\left(1-\alpha^{\dagger}(t)\beta^{\dagger}_2(t)\right), 0, \right. \\
     &\left. \quad \beta_2^{\dagger}(t) + \eta\lambda_2\alpha^{\dagger}(t)\left(1-\alpha^{\dagger}(t)\beta^{\dagger}_2(t)\right)\right). 
\end{align*}
Thus, we have shown that if $\alpha^{\dagger}(t)\beta_2^{\dagger}(t) < 1$, we have \eqref{K1} and \eqref{K2}, as well as $\alpha^{\dagger}(t+1)\beta_2^{\dagger}(t+1) < 1$. Therefore, we can prove that for all $t$, \eqref{K1} and \eqref{K2} will hold by induction.

\subsection{Proof of Theorem \ref{constrained_decay}}

Based on \Cref{K1}, \Cref{K2} and $\alpha^{\dagger}(t)\beta_2^{\dagger}(t) < 1$, we know that for all $t$ we have:
\begin{align*}
    \alpha^{\dagger}(t + 1) \ge \alpha^{\dagger}(t), \quad \beta_2^{\dagger}(t+1) > \beta_2^{\dagger}(t).
\end{align*}
Since we know that
\begin{align*}
    L\left(\theta^\dagger(t)\right) = \frac{1}{2}\lambda_2\left(1-\alpha^{\dagger}(t)\beta_2^{\dagger}(t)\right) ^2 > 0,
\end{align*}
then for all $t$ we have
\begin{align*}
    L\left(\theta^\dagger(t+1)\right) < L\left(\theta^\dagger(t)\right).
\end{align*}
Next, based on the definition of $\tilde{t}$ and $\alpha^{\dagger}(t + 1) \ge \alpha^{\dagger}(t)$, we know that for all $t \ge \tilde{t}$, we have $\alpha^{\dagger}(t) = \sqrt{\frac{2}{\lambda_1\eta}}$.
Therefore, for $t \ge \tilde{t}$ we have:
\begin{align*}
    L\left(\theta^\dagger(t)\right) &= \frac{1}{2}\lambda_2 \left(1 - \sqrt{\frac{2}{\lambda_1\eta}}\beta_2^{\dagger}(t) \right)^2, \\
    \beta_2^{\dagger}(t+1) &= \beta_2^{\dagger}(t) + \eta\lambda_2\sqrt{\frac{2}{\lambda_1\eta}}\left(1-\sqrt{\frac{2}{\lambda_1\eta}}\beta^{\dagger}_2(t)\right).    
\end{align*}
Based on these two equations, it is easy to prove that
\begin{align*}
    L\left(\theta^\dagger(t+1)\right) &= \frac{1}{2}\lambda_2 \left(1 - \sqrt{\frac{2}{\lambda_1\eta}}\beta_2^{\dagger}(t+1) \right)^2 \\
    &= \left(1 - \frac{2\lambda_2}{\lambda_1}\right)^2 \cdot \frac{1}{2}\lambda_2 \left(1 - \sqrt{\frac{2}{\lambda_1\eta}}\beta_2^{\dagger}(t) \right)^2 \\
    &= \left(1 - \frac{2\lambda_2}{\lambda_1}\right)^2 \cdot  L\left(\theta^\dagger(t)\right).
\end{align*}

\section{Unclipped Gradient Descent Dynamics}
\label{abnormal}

In this section, we demonstrate why clipping on $\beta_1$ is necessary. The update equation for $\alpha$ is:
\begin{align*}
    \alpha(t+1) = \alpha(t) - \eta\lambda_1\beta^2_{1}(t)\alpha(t) + \eta\lambda_2\beta_{2}(t)(1-\alpha(t)\beta_{2}(t)).
\end{align*}
When $\beta^2_1$ becomes too large, $\alpha$ decreases rapidly and may even change its sign, leading to poorly behaved dynamics. Figure \ref{abnormalf} illustrates the behavior of $\alpha$ during training without clipping. As our previous analysis in Theorem \ref{pro_and_self} suggests, $\alpha$ initially increases and then decreases. However, without clipping, the rate of decrease is excessively rapid, causing $\alpha$ to become very small. Consequently, the gradients for $\beta_1$ and $\beta_2$ become negligible, and the training dynamics recover extremely slowly. 

This behavior bears similarity to an experiment reported by \citet{chen2023beyond}, as shown in Figure \ref{cohen_abnormal}. In their experiment, the sharpness initially increases progressively, then drops quickly to a value near 0, before recovering slowly.

It is important to note that, given our initialization set and learning rate, not all gradient descent dynamics become poorly behaved without clipping. In many cases, their behavior remains consistent with our results obtained with clipping. For instance, we chose a learning rate $\eta = \frac{1}{30}$ and an initialization from $\mathcal{X}(\frac{1}{30})$, then trained using gradient descent without clipping for 15,000 steps. The results, shown in Figure \ref{normal_noclip}, are similar to our findings presented in the main text. Nevertheless, there exist cases where the dynamics deviate from the Edge of Stability (EoS) and exhibit abnormal behavior.

\begin{figure}[htb!]
    \begin{center}
    {\includegraphics[width=0.325\columnwidth]{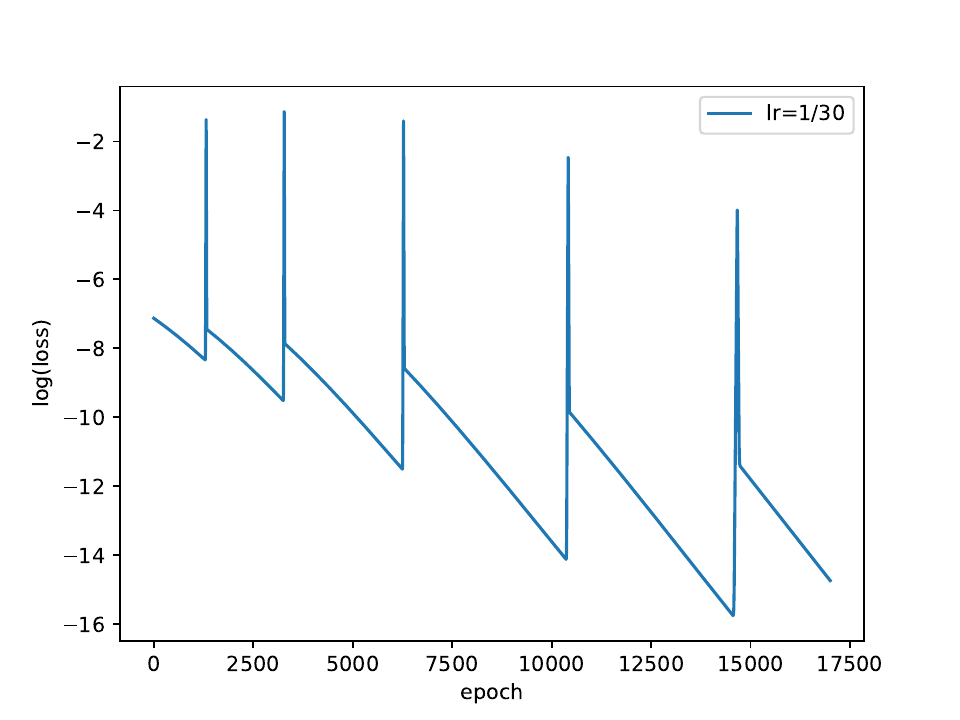}}
    {\includegraphics[width=0.325\columnwidth]{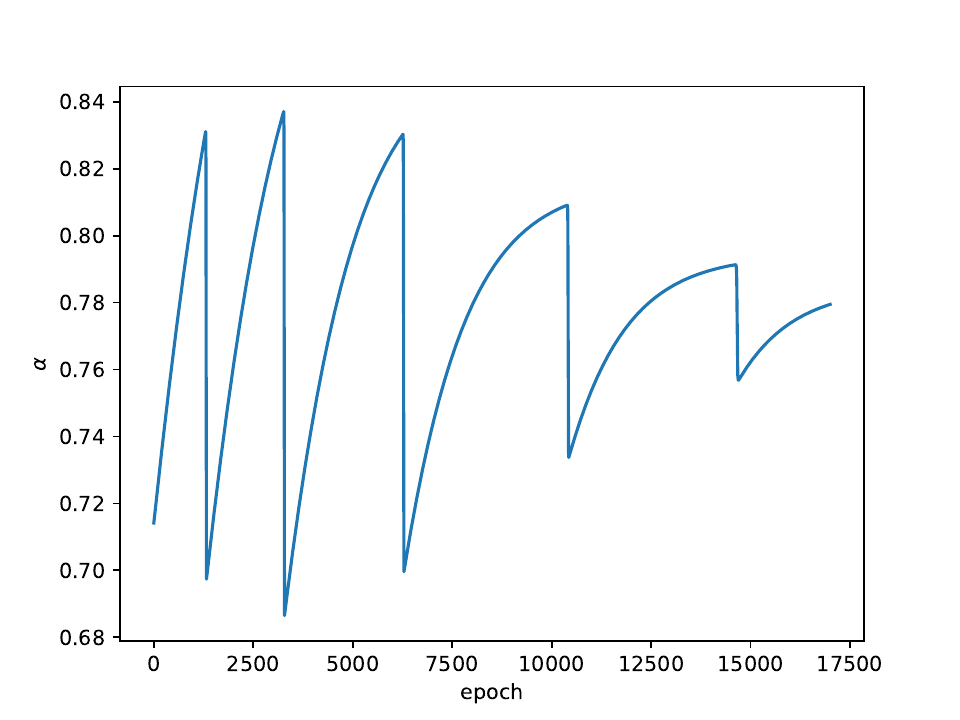}
    \includegraphics[width=0.325\columnwidth]{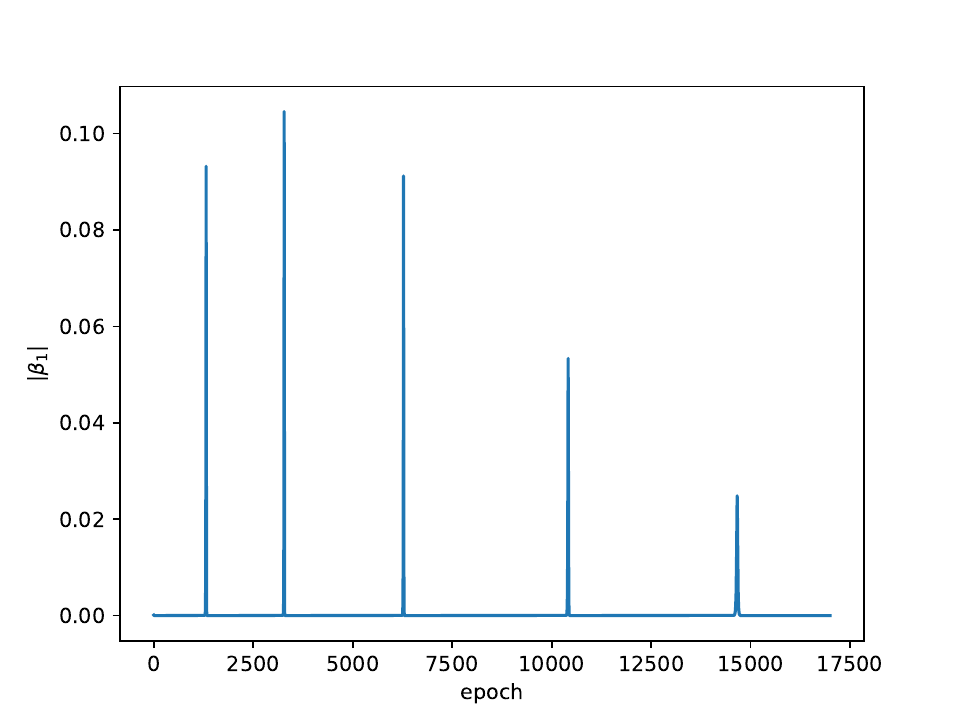}}
    
    \caption{On our setting, we choose learning rate $\eta = \frac{1}{30}$ and choose a initialization from $\mathcal{X}(\frac{1}{30})$, then train using GD without clipping for 15k steps. Recall that $\lambda_\alpha^2$ is close to sharpness, thus the change of $\alpha$ can represent the change of sharpness.}
    \label{normal_noclip}
    \end{center}
\end{figure}


\begin{figure}[htb!]
    \begin{center}
    {\includegraphics[width=0.325\columnwidth]{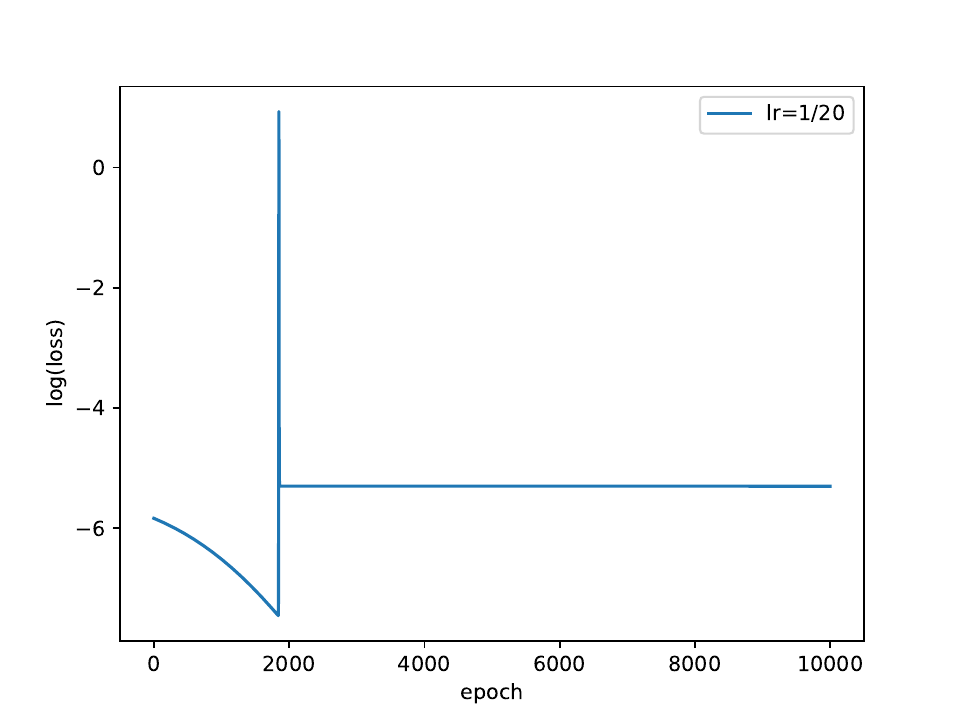}}
    {\includegraphics[width=0.325\columnwidth]{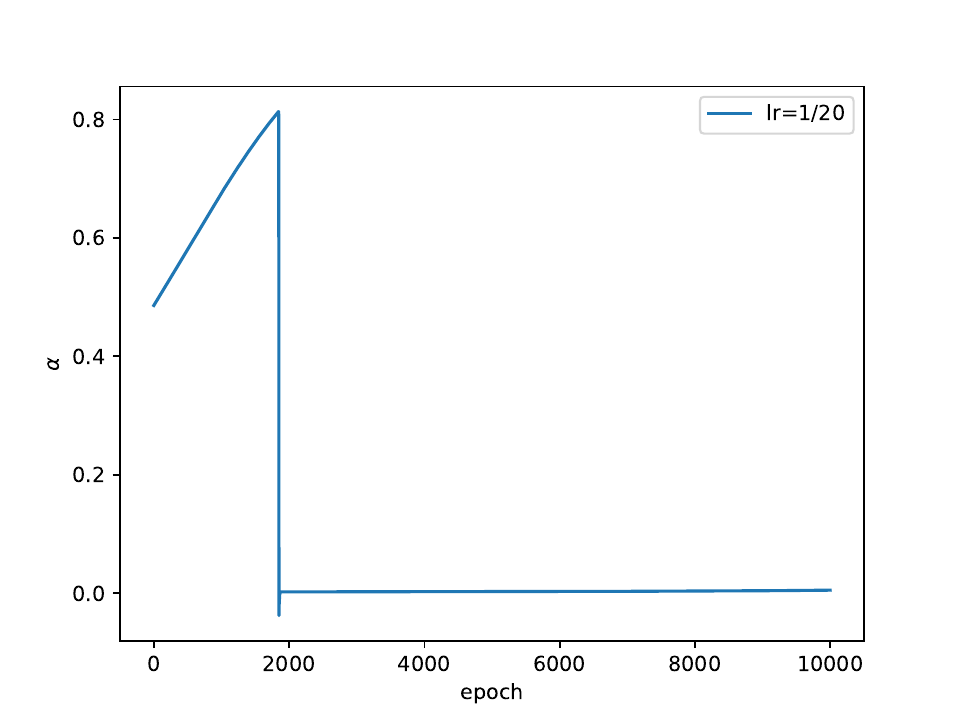}}
    {\includegraphics[width=0.325\columnwidth]{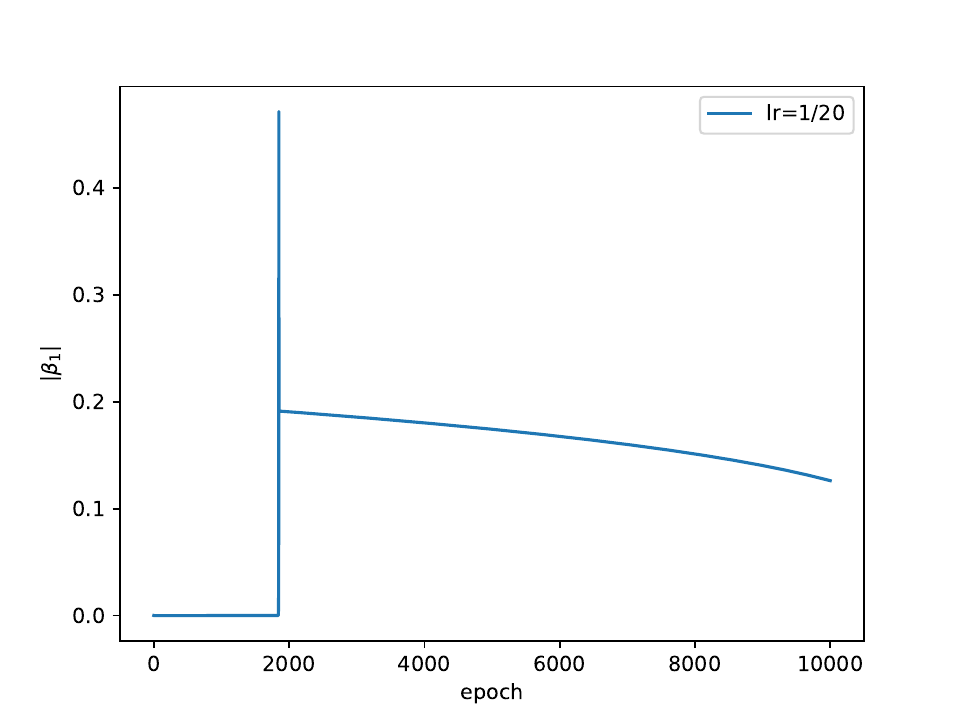}}
    
    \caption{An example for the abnormal behavior of GD without clipping on our setting.}
    \label{abnormalf}
    \end{center}
\end{figure}

\begin{figure}[htb!]
    \begin{center}
    {\includegraphics[width=1\columnwidth]{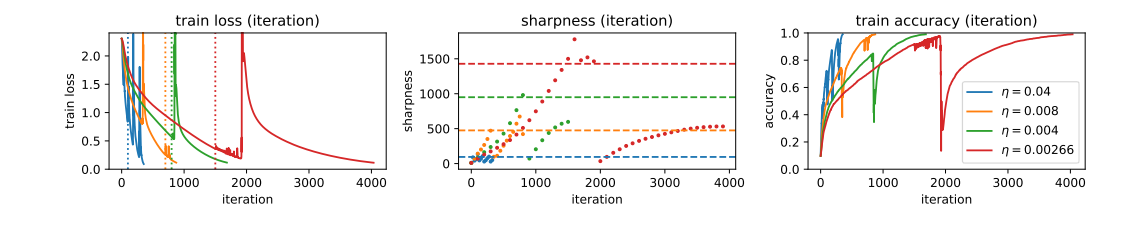}}
    
    \caption{A experiment in \citet{cohen2021gradient}, we find that when the learning rate is $0.00266$, the training dynamic is somehow like the abnormal behavior in Fig \ref{abnormalf}, which means the sharpness first increase and then decrease extremely fast, making the loss recover slowly.}
    \label{cohen_abnormal}
    \end{center}
\end{figure}


\end{document}